\newtheorem{Theorem}{Theorem}
\newtheorem{Lemma}{Lemma}
\newtheorem{Corollary}{Corollary}
\newtheorem{Proposition}{Proposition}
\newcommand{\field}[1]{\mathbf{#1}}
\newcommand{\z}{\field{z}}
\newcommand{\x}{\field{x}}
\newcommand{\y}{\field{y}}
\newcommand{\w}{\field{w}}
\newcommand{\I}{\field{I}}
\newcommand{\cp}{\xrightarrow{P}}
\newcommand{\cd}{\xrightarrow{d}}
\newcommand{\yh}{\hat{\y}(\x)}
\newcommand{\ys}{\y^{*}(\x)}
 \journalname{Machine Learning}
\begin{document}

\title{Asymptotic consistency and order specification for logistic classifier chains in multi-label learning
}

\titlerunning{Asymptotic consistency and order specification for logistic classifier chains}        

\author{Pawe{\l} Teisseyre}


\institute{Pawe{\l} Teisseyre \at
              Institute of Computer Science, Polish Academy of Sciences \\
              Jana Kazimierza 5 01-248 Warsaw, Poland\\
              Tel.: +48-22-380-05-55 \\
              \email{teisseyrep@ipipan.waw.pl}           
}

\date{Received: date / Accepted: date}

\maketitle

\begin{abstract}
Classifier chains are popular and effective method to tackle a multi-label classification problem.
The aim of this paper is to study the asymptotic properties of the chain model in which the conditional probabilities are of the logistic form. In particular we find conditions on the number of labels and the distribution of feature vector under which the estimated mode of the joint distribution of labels converges to the true mode. 
Best of our knowledge, this important issue has not yet been studied in the context of multi-label learning.
We also investigate how the order of model building in a chain influences the estimation of the joint distribution of labels.  
We establish the link between the problem of incorrect ordering in the chain and  incorrect model specification.
We propose a procedure of determining the optimal ordering of labels in the chain, which is based on using measures of correct specification and allows to find the ordering such that the consecutive logistic models are best possibly specified.
The other important question raised in this paper is how accurately can we estimate the joint posterior probability when the ordering of labels is wrong or the logistic models in the chain are incorrectly specified. The numerical experiments illustrate the theoretical results.
\keywords{classifier chains \and logistic regression \and joint mode estimation \and label ordering \and asymptotic consistency}
\end{abstract}

\section{Introduction}

In multi-label classification the task is to automatically assign an object to multiple categories based on its characteristics. Each object of our interest is described by a feature vector $\x$ belonging to $p$-dimensional space and vector of $K$ labels $\y=(y_1,\ldots,y_K)'$. In this paper we consider binary labels such that $y_k=1$ indicates that the considered object belongs to $k$-th category or has the $k$-th property. 
The issue has recently attracted  significant attention, motivated by an increasing number of applications such as image and video annotation (assigning metadata to digital images or videos), music categorization (assigning various emotions to the songs),  text categorization (assigning various subjects to documents), direct marketing (predicting which products will be purchased), functional genomics (determining the functions of genes and proteins) and medical diagnosis (predicting types of diseases). 

The key problem in multi-label learning is how to utilize label dependencies to improve the classification performance, motivated by which number of multi-label algorithms have been proposed in recent years (see \cite{Madjarov2012} for extensive comparison of several methods).
A naive approach, called binary relevance (BR), which is based on building separate classification models for each label and which does not take into account any dependencies between labels, is deficient in many applications.
 A natural approach, to tackle the issue, is to model the joint conditional probability $P(y_1,\ldots,y_n|\x)$ and then to choose the most probable set of labels for a given $\x$, i.e. the mode of the joint distribution $\y^{*}(\x)=\arg\max_{\y\in\{0,1\}^K}p(\y|\x)$.
This approach minimizes the subset 0/1 loss (\cite{Dembczynskietal2012}), which generalizes the well-known 0/1 loss from the conventional to the multi-label setting.
 Direct modelling of the joint distribution is usually problematic. Classifier chain method (\cite{Readetal2011}, \cite{Dembczynskietal2010}) fixes this problem by using  chain probability rule
\begin{equation}
\label{chain1}
P(\y|\x)=P(y_1,\ldots,y_n|\x)=\prod_{k=1}^{K}P(y_k|y_1,\ldots,y_{k-1},\x),
\end{equation}
which allows to estimate the conditional probabilities in (\ref{chain1}) by using single label classifiers in which $y_k$ is treated as response variable, whereas $y_1,\ldots,y_{k-1},\x$ are treated as explanatory variables.
This gives the approximations of $P(y_k|y_1,\ldots,y_{k-1},\x)$ and thus also the approximation of the joint distribution, which will be denoted by $\hat{P}(\y|\x)$.
 Classifier chains are commonly assigned  to the group of problem-transformation methods (\cite{TsoumakasandKatakis2007}), as in this case the multi-label problem is transformed into several single label problems.
One of the main difficulties of classifier chains is their sensitivity to the ordering of labels during building the models. 
Although the order of conditioning in (\ref{chain1}) is not relevant, the order of training models can affect significantly the estimation accuracy.

In this paper we consider the classifier chains combined with logistic model fitting, i.e. logistic models are used to estimate the conditional probabilities in (\ref{chain1}). Our motivation for studying this particular combination is that the logistic model is one of the most commonly used classifiers in practice. 
The combination of classifier chains and logistic regression has already been used by many authors, in various applications, e.g. in \cite{Kumaretal2013}, \cite{Dembczynskietal2010}, \cite{Dembczynskietal2012}, \cite{Montanes2014}.  
However, there are no theoretical results on large sample properties for such procedures. This paper intends to fill this gap. 
 Moreover, the logistic model has strong theoretical background (\cite{FahrmeirKaufmann1985}, \cite{Fahrmeir1987}), which allows to prove some asymptotic results for classifier chains as well.
 
First, we study the asymptotic properties of classifier chains. We impose the conditions for the distribution of the feature vector $\x$ and number of labels $K$ under which the estimated mode $\hat{\y}(\x)=\arg\max_{\y\in\{0,1\}}\hat{P}(\y|\x)$ converges in probability to the true mode $\y^{*}(\x)$.
The above property can be proved, provided that there exists the "optimal" ordering of labels, such that the consecutive conditional probabilities in (\ref{chain1}) are of the logistic form. However, in practice, the "optimal" ordering is usually unknown or it may not exist at all (this is the case when the data generation scheme is different from logistic one). This leads to further important questions discussed in this paper. What is the influence of the ordering of labels on the estimation of the joint distribution (and thus on the estimation of the mode)? How accurate can we estimate the conditional probabilities in (\ref{chain1}) when the ordering of labels is wrong or the logistic model is incorrectly specified?
It turns out that in this case we attempt to estimate the parameters of assumed logistic model which is closest to the true one in the sense of average Kullback-Leibler distance.
Moreover, we propose a procedure of finding the optimal ordering of labels, which uses measures of correct specification. The procedure allows to determine the ordering such that the consecutive conditional probabilities in (\ref{chain1}) are best possibly specified.        
Although the following results are obtained for logistic model, we believe that some ideas can be expanded to the case when other base-classifiers are used.
The problem of label ordering has been discussed in literature, however the direct influence on the joint mode estimation has not been investigated. \cite{Kumaretal2013} proposed to use log-likelihood function as a measure of correct ordering and beam search to reduce the large number of possible orderings. To eliminate an effect of ordering of labels, \cite{Readetal2009} proposed to average the multi-label predictions over randomly chosen set of permutations. This extension, called ensembled classifier chain (ECC) improves the results of single chain, however it does not answer the question how to find the optimal ordering. 

The rest of the paper is organized as follows. In Section \ref{Classifier chain model} we describe the logistic classifier chain model (LCC), which is motivated by (\ref{chain1}). In Section \ref{Consistency of the joint mode estimation} we state the Theorem concerning the consistency of the joint mode estimation.
The problem of ordering the labels is discussed in Section \ref{Ordering of labels in classifier chain}. 
In Section \ref{Ordering of labels in classifier chain} we also review the measures of correct specification  and the procedure of determining the optimal ordering is presented.
In Section \ref{Inference in classifier chain model}, problem of inference in classifier chain model is described.
 Finally, the results of selected numerical experiments are reported in Section \ref{Empirical evaluation}. 
Section \ref{Conclusions} contains final conclusions and \ref{Proofs} contains proofs.

\section{Logistic classifier chain (LCC) model}
\label{Classifier chain model}
In multi-label learning each object of our interest is described by a pair $(\x,\y)$,
where $\x\in R^{p}$ is the random vector of $p$ explanatory variables (features) and $\y=(y_1,\ldots,y_{K_n})'$ is the vector of $K_n$ binary responses (labels). The first coordinate of $\x$ is equal $1$, which corresponds to the intercept.
We assume that the number of responses depends on the number of observations, which reflects the common situation of large number of labels. Moreover, this assumption can correspond to the situation of sparse labels (equal zero for the majority of cases) - we add the new label at a time when the first non-zero value of this label appears.  
We assume the following data generation scheme.   
First, we assume that there exists a permutation/ordering of labels $\pi^{*}(1),\ldots,\pi^{*}(K_n)$ such that the probability of $y_{\pi^{*}(k)}=1$, given $\x$ and the previous labels $y_{\pi^{*}(1)},\ldots,y_{\pi^{*}(k-1)}$, is always of the logistic form, i.e.
\begin{equation}
\label{marg_distr0}
P(y_{\pi^{*}(k)}=1|\x,y_{\pi^{*}(1)},\ldots,y_{\pi^{*}(k-1)})=\sigma(\z_{k}(\pi^*)'\theta_{k}(\pi^*)),\quad\textrm{for all } k=1,\ldots,K_n,
\end{equation}
where $\sigma(s)=1/(1+\exp(-s))$ is a logistic function; $\z_{k}(\pi^*)=(\x',y_{\pi^{*}(1)},\ldots,y_{\pi^{*}(k-1)})'$ is a combined vector of features and labels with indices $\pi^{*}(1),\ldots,\pi^{*}(k-1)$;  $\theta_{k}(\pi^*)\in R^{p+k-1}$ is a corresponding vector of parameters.
In total we have $pK_n+K_n(K_n-1)/2$ unknown parameters. 
For notation simplicity we will assume that the ordering of labels, for which (\ref{marg_distr0}) holds, is an identity permutation $\pi^{*}(k)=k$ and 
in this case we will write in brief $\z_k$ and $\theta_k$ instead of $\z_{k}(\pi^*)$ and $\theta_{k}(\pi^*)$.
So, in this case, we can simply write
\begin{equation}
\label{marg_distr}
P(y_{k}=1|\x,y_{1},\ldots,y_{k-1})=\sigma(\z_k'\theta_k),\quad\textrm{for all } k=1,\ldots,K_n.
\end{equation}
The joint distribution is of the form
\begin{equation}
\label{joint_distr}
P(\y|\x)=P(y_1,\ldots,y_k|\x)=\prod_{k=1}^{K_n}P(y_k=1|\x,y_1,\ldots,y_{k-1})=\prod_{k=1}^{K_n}\sigma(\z_k'\theta_k)^{y_k}[1-\sigma(\z_k'\theta_k)]^{1-y_k}.
\end{equation} 
The joint distribution corresponding to BR model, with the conditional distributions of the logistic form, can be written in an analogous form to (\ref{joint_distr}), with the only difference that $\z_k=\x$.  
Diagrams on Figure \ref{fig:nets} represent the networks corresponding to LCC model and BR model.
 \begin{figure}[ht!]
\begin{center}$
\begin{array}{ccc}
\includegraphics[scale=0.8]{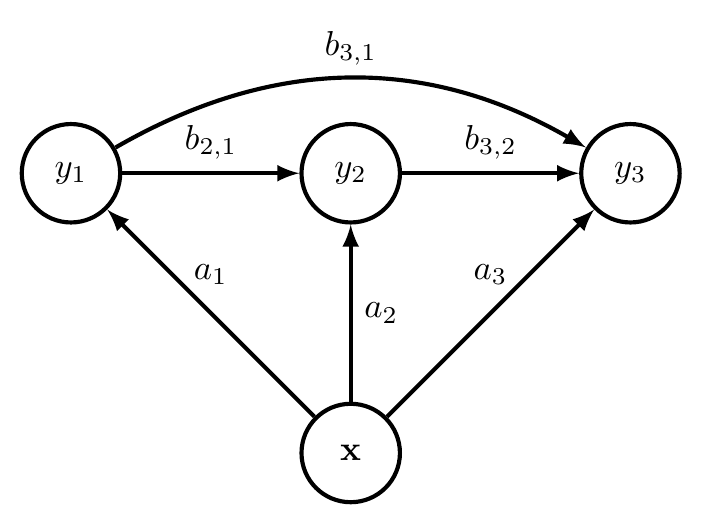} &
\includegraphics[scale=0.8]{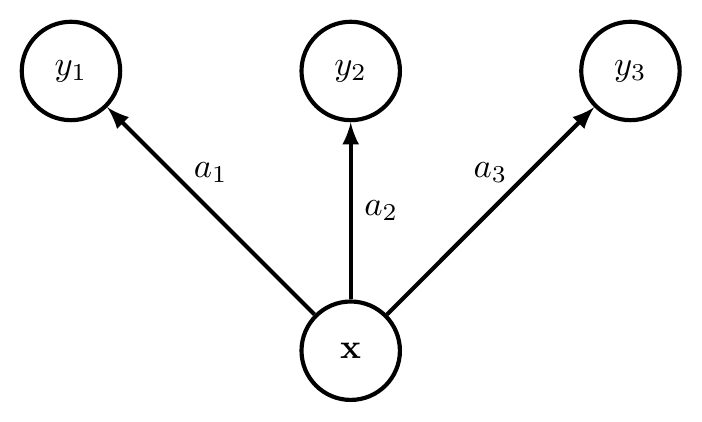} \\
(a) & (b)\\
\end{array}$
\end{center} 
\caption{Diagrams representing data generation schemes in LCC model (a) and BR model (b), in the case of $K_n=3$ labels. The following notation is used: $\theta_k=(a_k',b_k')'$, where $a_k\in R^p$ is a subvector of parameters corresponding to $\x$ and $b_k=(b_{k,1},\ldots,b_{k,k-1})'\in R^{k-1}$ is a subvector of parameters corresponding to $k-1$ first labels in the chain.}
\label{fig:nets}
\end{figure}

As the true joint distribution  is usually unknown, it is estimated based on training examples $\mathcal{D}=(\x^{(i)},\y^{(i)})$, $i=1,\ldots,n$ ($\x^{(i)}\in R^p$, $\y^{(i)}\in\{0,1\}^{K_n}$) which are assumed to be generated from (\ref{marg_distr}).
Observe that conditional distribution (\ref{marg_distr}) can be estimated by fitting logistic regression model (on the set $\mathcal{D}$) in which $y_k$ is a response variable and $\z_k=(\x',\y_{1:(k-1)}')'$ is a vector of explanatory variables. By $\mathcal{M}(y_k,\z_k)$ we will denote a logistic model fitted on set $\mathcal{D}$, in which $y_k$ is a response variable and $\z_k$ is a vector of explanatory variables.

Let $l_{y_k,\z_k}(\cdot)$ be the log-likelihood function calculated for model $\mathcal{M}(y_k,\z_k)$ using $\mathcal{D}$  and
$\hat{\theta}_k=\arg\max_{\theta}l_{y_k,\z_k}(\theta)$ be the maximum likelihood estimator of $\theta_k$.  
By building $K_n$ independent logistic models $\mathcal{M}(y_1,\z_1),\ldots,\mathcal{M}(y_{K_n},\z_{K_n})$, one can estimate parameters $\theta_1,\ldots,\theta_{K_n}$ in the consecutive models in the chain and thus also the joint distribution
\begin{equation}
\label{joint_distr_est}
\hat{P}(\y|\x)=\prod_{k=1}^{K_n}\sigma(\z_k'\hat{\theta}_k)^{y_k}[1-\sigma(\z_k'\hat{\theta}_k)]^{1-y_k}.
\end{equation}

\section{Consistency of the joint mode estimation}
\label{Consistency of the joint mode estimation}
In the following we will assume that the true ordering of labels $(\pi^{*}(1),\ldots,\pi^{*}(K_n))=(1,\ldots,K_n)$, for which (\ref{marg_distr0}) holds, is known and the logistic models $\mathcal{M}(y_1,\z_1),\ldots,\mathcal{M}(y_{K_n},\z_{K_n})$ are fitted according to this order based on training set $\mathcal{D}$.

Denote by $\y^{*}(\x)=\arg\max_{\y\in\{0,1\}^{K_n}}P(\y|\x)$ the true joint mode of the distribution (\ref{joint_distr}), for some new observation $\x\in R^p$ and by
$\hat{\y}(\x)=\arg\max_{\y\in\{0,1\}^{K_n}}\hat{P}(\y|\x)$ the estimated mode for  point $\x$, calculated based on set $\mathcal{D}$. As in multi-label problems we are usually interested in finding the true joint mode, it is worthwhile to approximate it accurately. 
In this section we will give the assumptions under which the true joint mode can be estimated consistently.

Let $s_{y_k,\z_k}(\theta)=\frac{\partial}{\partial\theta}l_{y_k,\z_k}(\theta)$ be the gradient and $H_{y_k,\z_k}(\theta)=-\frac{\partial^2}{\partial\theta\partial\theta'}l_{y_k,\z_k}(\theta)$ be the negative Hessian matrix, based on model $\mathcal{M}(y_k,\z_k)$ and data $\mathcal{D}$.
Let $Z_k$ be the matrix whose $i$-th row is $\z_{k}^{(i)}=(\x^{(i)'},\y^{(i)'}_{1:(k-1)})'$, for $i=1,\ldots,n$.
For logistic model we have 
\begin{equation*}
s_{y_k,\z_k}(\theta)=Z_k'(\w_k-E(\w_k|Z_k)),
\end{equation*}
where $\w_k=(y_k^{(1)},\ldots,y_{k}^{(n)})'$ is a response vector corresponding to $k$-th label.
To prove our main Theorem, we use the following Lemma, which is an important technical tool. The Lemma follows from \cite{Zhang2009}, after noting that binary random variable $w$ satisfies $E\exp(t(w-Ew))\leq \exp(t^2/8)$ and taking $\sigma=1/2$, $\eta=(\epsilon+1/2)^2$ in his Proposition 10.2. 
\begin{Lemma}
\label{Lemma5}
Let $\w$ be $n$-dimensional vector consisting of independent binary variables having not necessarily the same distribution and $Z$ be $n\times q$ fixed matrix. Then for any $\eta>1$ we have
\begin{equation*}
P[||Z'(\w-E\w)||^2\geq tr(Z'Z)\eta]\leq \exp(-\eta/20).
\end{equation*}
\end{Lemma}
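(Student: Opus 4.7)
The plan is to derive the bound as a direct consequence of Proposition 10.2 of \cite{Zhang2009}, which gives a Hanson--Wright style tail bound for $||Z'\xi||^2$ when $\xi$ has independent, mean-zero, sub-Gaussian coordinates sharing a common proxy parameter $\sigma$. Only two things have to be checked: (i) that the summands $w_i - Ew_i$ fit Zhang's hypotheses with the asserted value of $\sigma$, and (ii) that the reparametrization $\eta = (\epsilon+1/2)^2$ converts Zhang's exponent into the claimed $\exp(-\eta/20)$.

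First I would establish the sub-Gaussian property. Since $w_i \in \{0,1\}$, the centered variable $w_i - Ew_i$ lies in $[-1,1]$ and has mean zero, so Hoeffding's lemma gives $E\exp(t(w_i - Ew_i)) \leq \exp(t^2/8)$ for every $t \in R$; equivalently, each coordinate of $\w - E\w$ is sub-Gaussian with proxy parameter $\sigma = 1/2$. Independence across $i$ is assumed, and $Z$ is deterministic, so all structural hypotheses of Zhang's proposition are satisfied.

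Next I would invoke the proposition with this $\sigma$ and a generic deviation parameter $\epsilon > 0$, which produces a bound of the form
\[
P[\,||Z'(\w - E\w)||^2 \geq (\sigma + \epsilon)^2\, tr(Z'Z)\,] \leq \exp(-\epsilon^2/c)
\]
for the numerical constant $c$ used in that reference. Substituting $\sigma = 1/2$ and setting $\eta := (\epsilon + 1/2)^2$ matches the event on the left-hand side precisely to $\{||Z'(\w - E\w)||^2 \geq \eta\cdot tr(Z'Z)\}$. The remaining step is purely algebraic: for $\eta > 1$ (equivalently $\epsilon > 1/2$), one must verify that $\epsilon^2/c = (\sqrt{\eta}-1/2)^2/c$ dominates $\eta/20$. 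The ratio $\epsilon^2/\eta = 1 - \eta^{-1/2} + (4\eta)^{-1}$ has derivative $\tfrac{1}{4}\eta^{-2}(2\sqrt{\eta}-1) > 0$ on $(1,\infty)$, so it is increasing with minimum value $1/4$ attained as $\eta \downarrow 1$; hence $\epsilon^2 \geq \eta/4$ throughout the relevant range, which is more than enough slack to absorb $c$ and obtain $\exp(-\eta/20)$.

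The main obstacle is not probabilistic content but the bookkeeping of constants: the Chernoff/MGF argument behind Zhang's proposition is standard, so the only delicate point is verifying that the worst case $\eta \downarrow 1$ still yields the clean universal exponent $1/20$ rather than something sharper-but-awkward. I would handle this by isolating the behaviour of $\epsilon^2/\eta$ near $\eta = 1$ as above, which confirms that the constant $20$ is valid uniformly over all $\eta > 1$.
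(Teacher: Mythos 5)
Your argument is correct and follows essentially the same route as the paper, which likewise derives the lemma by noting that Hoeffding's lemma gives $E\exp(t(w-Ew))\leq\exp(t^2/8)$ for binary $w$ and then applying Proposition 10.2 of Zhang (2009) with $\sigma=1/2$ and $\eta=(\epsilon+1/2)^2$. The only difference is that you carry out the constant-checking (showing $\epsilon^2/\eta\geq 1/4$ on $\eta>1$) explicitly, which the paper leaves implicit.
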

The inequality in Lemma \ref{Lemma5} holds also for random $Z$, which can be easily seen by conditioning
\begin{equation*}
P[||Z'(\w-E\w)||^2\geq tr(Z'Z)\eta]=E\{P[||Z'(\w-E(\w|Z))||^2\geq tr(Z'Z)\eta|Z]\}\leq \exp(-\eta/20),
\end{equation*}
where the first expectation in above formula is taken with respect to $Z$. It is seen that Lemma \ref{Lemma5} can be used to bound from above $||s_{y_k,\z_k}(\theta)||$, which is an important part of the proof of Theorem \ref{Theorem1}.
In the following Theorem, we impose conditions under which the true mode $\y^{*}(\x)$ is estimated consistently.
For the proof, see in Section \ref{Proof of Theorem1}.

\begin{Theorem}
\label{Theorem1}
Let $\x$ be a fixed point at which the joint mode is calculated.
Assume that:
\begin{enumerate}
\item 
there exists $\epsilon>0$, such that $P[\y^{*}(\x)|\x]>P[\y|\x]+\epsilon$, for all  
$\y\neq \y^{*}(\x)$,
\item $E||\x^{(i)}||^2<\infty$,
\item $K_n^{4}\log(K_n)/n\to 0$ and $K_n$ is monotonic function of $n$,
\item there exists constant $c_1$ such that
\begin{equation*}
P\left[\min_{1\leq k\leq K_n}\lambda_{\min}[H_{y_k,\z_k}(\theta_k)/n]\geq c_1\right]\to 1,
\end{equation*}
where $\lambda(\cdot)$ is an eigenvalue ($\lambda_{\min}$ is minimal eigenvalue). 
\end{enumerate}
Then, we have that $\yh$ is consistent, i.e. $P[\yh= \ys]\to 1$, as $n\to\infty$.
\end{Theorem}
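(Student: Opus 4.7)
The plan is to reduce the consistency claim to a uniform-in-$\y$ approximation of the joint probability and then invoke the margin condition. Once I show $\sup_{\y\in\{0,1\}^{K_n}}|\hat{P}(\y|\x)-P(\y|\x)|\cp 0$, the gap $\epsilon$ in assumption~1 forces the argmax of $\hat{P}(\cdot|\x)$ to coincide with $\ys$ on an event of probability tending to one, which is exactly $P[\yh=\ys]\to 1$.

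To obtain the uniform approximation I exploit the multiplicative structure of (\ref{joint_distr_est}). Since every factor lies in $[0,1]$, the standard telescoping inequality for products of $[0,1]$-valued numbers combined with the $1/4$-Lipschitz property of $\sigma$ and Cauchy--Schwarz yields
\[
|\hat{P}(\y|\x)-P(\y|\x)|\leq \sum_{k=1}^{K_n}|\sigma(\z_k'\hat{\theta}_k)-\sigma(\z_k'\theta_k)|\leq \tfrac{1}{4}\sum_{k=1}^{K_n}\|\z_k\|\,\|\hat{\theta}_k-\theta_k\|.
\]
Because $\x$ is fixed and $\|\z_k\|^2\leq\|\x\|^2+k-1=O(K_n)$ uniformly in $\y$ and $k$, everything reduces to controlling $\max_k\|\hat{\theta}_k-\theta_k\|$. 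For a single $k$, a first-order expansion of the score equation at the MLE gives $\|\hat{\theta}_k-\theta_k\|\leq \|s_{y_k,\z_k}(\theta_k)\|/\lambda_{\min}(H_{y_k,\z_k}(\tilde{\theta}_k))$ at an intermediate point $\tilde{\theta}_k$; assumption~4, combined with strict concavity of the logistic log-likelihood and continuity of the Hessian in a neighbourhood of $\theta_k$, lets me replace the denominator by $c_1 n$ on a high-probability event uniform in $k$.

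The score in the numerator is handled by Lemma~\ref{Lemma5}: taking $\eta=40\log K_n$ and a union bound over $k=1,\dots,K_n$ gives $\|s_{y_k,\z_k}(\theta_k)\|^2\leq 40\log K_n\cdot \text{tr}(Z_k'Z_k)$ simultaneously for all $k$ with probability at least $1-K_n^{-1}$. Assumption~2 together with the law of large numbers yields $\text{tr}(Z_k'Z_k)=\sum_i(\|\x^{(i)}\|^2+k-1)=O_p(nK_n)$ uniformly in $k$, and assembling the pieces produces
\[
\max_{1\leq k\leq K_n}\|\hat{\theta}_k-\theta_k\|=O_p\!\left(\sqrt{K_n\log K_n/n}\right).
\]
Plugging back into the display above then gives $\sup_\y|\hat{P}(\y|\x)-P(\y|\x)|=O_p(K_n^2\sqrt{\log K_n/n})$, which vanishes precisely under assumption~3.

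The main obstacle is the double amplification by $K_n$ that this outline exposes: the chain has $K_n$ factors and each $\z_k$ is itself $K_n$-dimensional, so the per-model MLE rate $\sqrt{K_n/n}$ is inflated by a factor of $K_n^{3/2}$ before it reaches the joint probability, which is exactly why the rather restrictive $K_n^4\log K_n/n\to 0$ condition is needed and cannot be loosened within this approach. A secondary but delicate point is that assumption~4 only controls the Hessian at the true parameter, so one has to argue separately that $H_{y_k,\z_k}(\cdot)$ is uniformly well-conditioned on a neighbourhood of $\theta_k$ that contains $\hat{\theta}_k$ with probability tending to one, and this has to be done uniformly over the $K_n$ models in the chain.
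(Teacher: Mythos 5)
Your outline follows the paper's proof essentially step for step: the margin condition reduces consistency to $\sup_{\y}|\hat P(\y|\x)-P(\y|\x)|\cp 0$ (Lemma~\ref{Lemma1}), the telescoping inequality for products in $[0,1]$ (Lemma~\ref{Lemma2}) together with the Lipschitz property of $\sigma$, Cauchy--Schwarz and $\|\z_k\|^2=O(K_n)$ reduce everything to $\max_k\|\hat\theta_k-\theta_k\|$, the score is controlled by Lemma~\ref{Lemma5} with $\mathrm{tr}(Z_k'Z_k)=O_p(nK_n)$, and the curvature by assumption~4. Your final rate $O_p(K_n^2\sqrt{\log K_n/n})$ and the resulting requirement $K_n^4\log K_n/n\to 0$ agree exactly with what the paper obtains; choosing $\eta=40\log K_n$ and union-bounding is only a cosmetic variant of plugging the target threshold into the exponential tail and summing over $k$.

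The one place where your sketch does not close is the step you yourself flag as ``delicate'': bounding $\|\hat\theta_k-\theta_k\|$ by $\|s_{y_k,\z_k}(\theta_k)\|/\lambda_{\min}(H_{y_k,\z_k}(\tilde\theta_k))$ with $\tilde\theta_k$ an intermediate point between $\theta_k$ and $\hat\theta_k$. Your proposed fix --- show that $H$ is well-conditioned on a neighbourhood of $\theta_k$ that contains $\hat\theta_k$ with high probability --- is circular, since locating $\hat\theta_k$ near $\theta_k$ is precisely what is being proved; moreover a \emph{fixed}-radius neighbourhood will not do, because the Hessian comparison in Lemma~\ref{Lemma4} needs $\max_i|\z_k^{(i)\prime}(\gamma-\theta_k)|$ small while $\max_i\|\z_k^{(i)}\|$ grows with $n$. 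The paper avoids this by a concavity device: if $\|\hat\theta_k-\theta_k\|>\epsilon_n$ then, by concavity of $l_k$, the likelihood must already exceed $l_k(\theta_k)$ at some point $\theta_{k,u}=\theta_k+u\epsilon_n/2$ on the sphere of the \emph{deterministic} radius $\epsilon_n/2$; a Taylor expansion there involves a Hessian at a point within $\epsilon_n/2$ of $\theta_k$, and since $\max_i\|\z_k^{(i)}\|\epsilon_n\cp 0$ (Lemma~\ref{Lemma6}, Markov, assumption~2) Lemma~\ref{Lemma4} transfers assumption~4 to that point uniformly in $k$. You should replace the mean-value expansion of the score equation by this sphere argument (or an equivalent localization) to make the proof rigorous; with that substitution your argument coincides with the paper's.
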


The first assumption  ensures that the true mode $\y^{*}(\x)$ is unique for $\x$. The second assumption on the distribution of $\x^{(i)}$ is rather mild and is satisfied for the majority of common distributions. The third one indicates that the number of possible labels should be small in comparison with the number of training examples $n$. 
It can be verified that if for almost every $\y$, $|\{k:y_k\neq 1\}|\leq C<\infty$, with probability tending to $1$, then the third assumption can be weakened to $K_n^{3}\log(K_n)/n\to 0$. This would reflect the situation in which the number of possible labels is large, however only a few of them can be equal to $1$ simultaneously. The last assumption is a regularity condition, analogous to assumption (A4) used in \cite{ChenChen2012} to prove the consistency of Extended Bayesian Information Criterion (EBIC) for logistic regression model. Conditions similar to our last assumption are used in many papers devoted to asymptotic results in logistic regression (see e.g. in \cite{QianField2002} and \cite{SzymanowskiMielniczuk2015}).
Observe that, 
\begin{equation*}
\lambda_{\min}[H_{y_k,\z_k}(\theta_k)/n]\geq \min_{i}[\sigma(\z_{k}^{(i)'}\theta_k)(1-\sigma(\z_{k}^{(i)'}\theta_k))]\lambda_{\min}(Z_k'Z_k/n)
\end{equation*}
and thus condition 4 is implied by the following two assumptions:
\begin{equation}
\label{cond11}
P\left[\min_{1\leq k\leq K_n}\min_{i}[\sigma(\z_{k}^{(i)'}\theta_k)(1-\sigma(\z_{k}^{(i)'}\theta_k))]\geq c_{11}\right]\to 1,
\end{equation}
and
\begin{equation}
\label{cond12}
P\left[\min_{1\leq k\leq K_n}\lambda_{\min}[Z_k'Z_k/n]\geq c_{12}\right]\to  1,
\end{equation}
for some constants $c_{11}$ and $c_{12}$. Above condition (\ref{cond11}) indicates that conditional variance of $y_k$ given $\x,y_1,\ldots,y_{k-1}$ must be bounded from $0$, for all observations and all tasks. The convergence in (\ref{cond12}) is regularity condition on the design matrices. 
Note that (\ref{cond12}) is weaker than assumption stating that  $Z_k'Z_k/n$ converges to some positive definite matrix, which is commonly used assumption in regression analysis (see e.g. \cite{Nishii1984}).

Using the proof of Theorem 1, we can also show the consistency of parameter estimators. Let ${\mathbf \theta}=(\theta_1,\ldots,\theta_{K_n})'\in R^{pK_n+K_n(K_n-1)/2}$ be the vector of parameters corresponding to all models in the chain and let $\hat{{\mathbf\theta}}=(\hat{\theta}_1,\ldots,\hat{\theta}_{K_n})'\in R^{pK_n+K_n(K_n-1)/2}$ be the corresponding vector of estimators. The following Corollary holds (see Section \ref{Proof of Remark1} for the proof).
\begin{Corollary}
\label{Remark1}
Assume that conditions 2,3,4 from Theorem \ref{Theorem1} hold. Then for any $\epsilon>0$
\[
P(||\hat{{\mathbf\theta}}-{\mathbf\theta}||>\epsilon K_n^{-1})\to 0,
\]
as $n\to\infty$.
\end{Corollary}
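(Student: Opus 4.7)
The plan is to reduce the Corollary to per-task maximum likelihood rates and then sum, exploiting the decomposition
\begin{equation*}
||\hat{{\mathbf\theta}}-{\mathbf\theta}||^{2}=\sum_{k=1}^{K_n}||\hat{\theta}_k-\theta_k||^{2}.
\end{equation*}
If, uniformly in $k$, I can show that $||\hat{\theta}_k-\theta_k||^{2}\leq \epsilon^{2}K_n^{-3}$ with probability tending to one, then summing the $K_n$ contributions immediately yields $||\hat{{\mathbf\theta}}-{\mathbf\theta}||\leq \epsilon K_n^{-1}$ with high probability. Such per-task control is already the driving estimate behind the joint-mode consistency of Theorem~\ref{Theorem1}, so the Corollary is mostly a matter of collecting those estimates, making the dependence on $k$ explicit, and summing.

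For each $k$, since $\hat{\theta}_k$ is the MLE, $s_{y_k,\z_k}(\hat{\theta}_k)=0$, and a Taylor expansion of the score about $\theta_k$ gives
\begin{equation*}
\hat{\theta}_k-\theta_k=H_{y_k,\z_k}(\bar{\theta}_k)^{-1}\,s_{y_k,\z_k}(\theta_k)
\end{equation*}
for some $\bar{\theta}_k$ on the segment between $\theta_k$ and $\hat{\theta}_k$. Using assumption~4 together with the continuity of the Hessian weights $\sigma(\cdot)(1-\sigma(\cdot))$, one argues (as in the proof of Theorem~\ref{Theorem1}) that $\lambda_{\min}[H_{y_k,\z_k}(\bar{\theta}_k)/n]\geq c_1/2$ uniformly in $k$ with probability tending to one, which yields $||\hat{\theta}_k-\theta_k||\leq (2/(c_1n))||s_{y_k,\z_k}(\theta_k)||$. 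Fixing any $\epsilon>0$, I would then apply Lemma~\ref{Lemma5} conditionally on $Z_k$ with $\eta=\alpha n K_n^{-4}$ for a constant $\alpha=\alpha(\epsilon,c_1)$ chosen small enough. Assumption~2 and boundedness of the labels give $tr(Z_k'Z_k)/n=O_P(K_n)$ uniformly in $k$, so outside an event of probability $\exp(-\alpha nK_n^{-4}/20)$ one obtains $||s_{y_k,\z_k}(\theta_k)||^{2}\leq C\alpha n^{2}K_n^{-3}$ and hence $||\hat{\theta}_k-\theta_k||^{2}\leq \epsilon^{2}K_n^{-3}$ once $\alpha$ is small enough relative to $\epsilon^{2}c_1^{2}/C$.

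A union bound over $k=1,\ldots,K_n$ then succeeds precisely because assumption~3 ($K_n^{4}\log(K_n)/n\to 0$) forces
\begin{equation*}
K_n\exp(-\alpha nK_n^{-4}/20)\to 0
\end{equation*}
for any fixed $\alpha>0$. Summing the per-task bounds delivers the claim. The main technical obstacle is the Hessian control at the intermediate point $\bar{\theta}_k$: one must show that $\bar{\theta}_k$ lies in a shrinking neighbourhood of $\theta_k$, uniformly in $k$, so that the lower bound guaranteed by assumption~4 at $\theta_k$ propagates along the segment to $\bar{\theta}_k$. This is typically handled by exploiting the global concavity of the logistic log-likelihood together with the Lipschitz behaviour of $x\mapsto\sigma(x)(1-\sigma(x))$, and it is exactly the ingredient that the proof of Theorem~\ref{Theorem1} must already provide; once it is in hand, the rest of the Corollary reduces to the arithmetic above.
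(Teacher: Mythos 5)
Your proposal is correct and follows essentially the same route as the paper: the paper's proof of the Corollary is precisely the decomposition $||\hat{{\mathbf\theta}}-{\mathbf\theta}||^2=\sum_{k=1}^{K_n}||\hat{\theta}_k-\theta_k||^2$, a passage to the maximum over $k$ at level $\epsilon K_n^{-3}$, and a union bound that cites the per-task estimates (\ref{T1_eq3})--(\ref{T1_eq6}) from the proof of Theorem \ref{Theorem1}, whose exponential tail of order $\exp(-c\,nK_n^{-4})$ beats the extra factor $K_n$ exactly because of assumption 3, as in your arithmetic. The only substantive difference is in how the per-task bound is set up: you invert the Hessian at an intermediate point of the score expansion, which (as you yourself flag) is circular until consistency is known, whereas the paper avoids this by the concavity argument on the sphere of radius $\epsilon_n/2$ around $\theta_k$, so that the intermediate point $\theta_k^{*}$ is automatically within $\epsilon_n/2$ of $\theta_k$ and Lemma \ref{Lemma4} applies without any a priori localization of $\hat{\theta}_k$.
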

\section{Ordering of labels in classifier chain}
\label{Ordering of labels in classifier chain}
In practice, the true ordering of labels $(\pi^{*}(1),\ldots,\pi^{*}(K_n))=(1,\ldots,K_n)$, for which (\ref{marg_distr0}) holds, is unknown.
Moreover, it may happen that the ordering, for which the consecutive conditional probabilities are of the logistic form, does not exist at all.
In this section we study the influence of the ordering of labels on the estimation of the joint distribution (\ref{joint_distr}).

To illustrate the problem, consider the following example on real dataset \texttt{emotions} (\cite{Trohidisetal2008}), having $6$ binary labels and $72$ features. 
To obtain convenient visualization, we initially replaced the features by their first principal component, which explains about $88\%$ of variance of the original features.
Figure \ref{fig1} shows estimated joint distributions (for $4$ selected labellings) as functions of the first principal component.
To estimate joint distributions we use classifier chains with logistic regression and two orders of fitting models: $(1,2,3,4,5,6)$ and $(6,5,4,3,2,1)$. 
  Obviously in this case the true ordering of labels is unknown, however it is clearly seen that the order of fitting models in the chain affects the estimate  of the joint probability significantly.  
 
\begin{figure}[ht!]
\centering
    \includegraphics[scale=0.5]{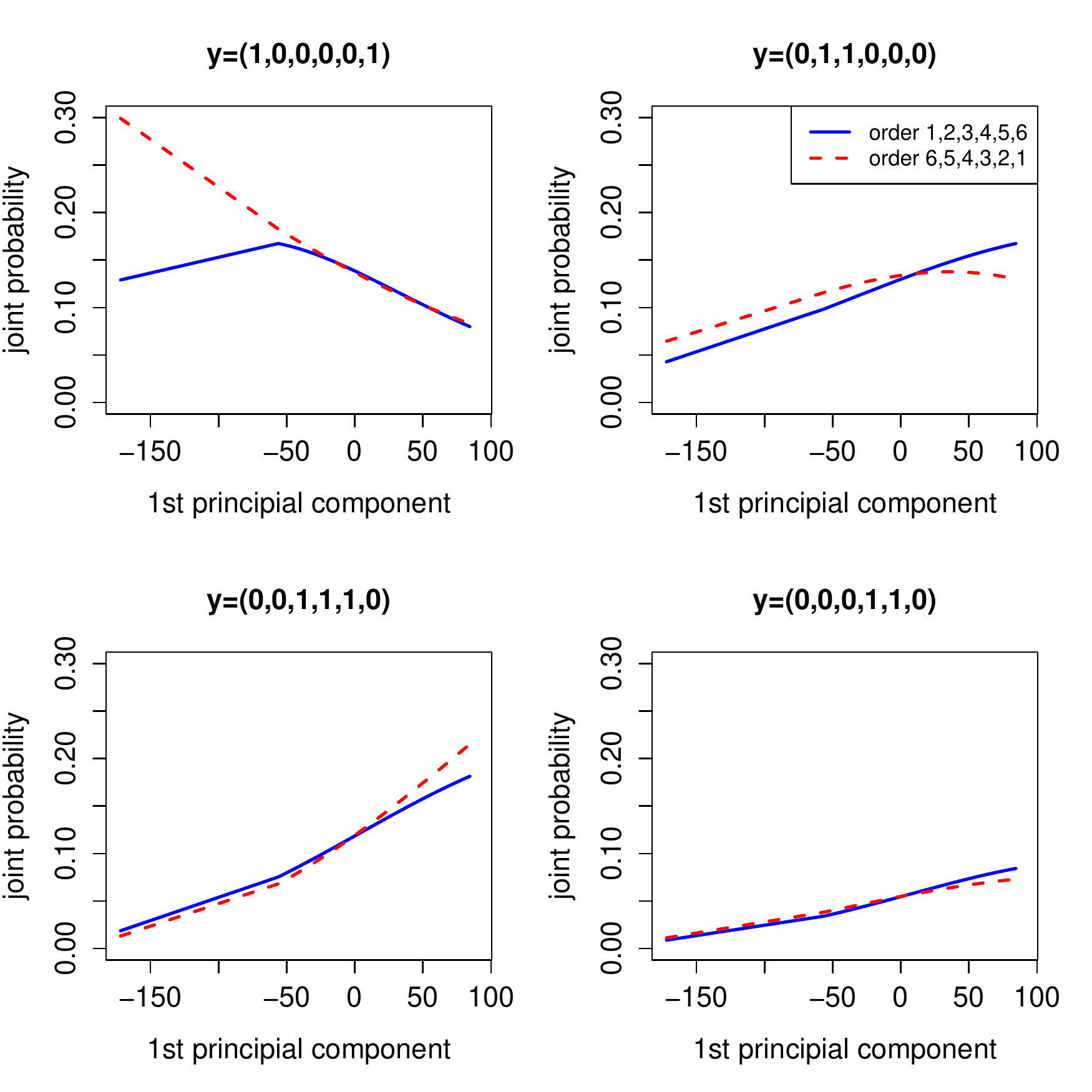}
    \caption{Estimated joint distributions (for $4$ combinations of labels) as functions of the 1st principal component, for \texttt{emotions} dataset.}
    \label{fig1}
\end{figure} 
 
Let $\pi^0$ be assumed ordering of labels and recall that $\z_{k}(\pi^0)=(\x',y_{\pi^{0}(1)},\ldots,y_{\pi^{0}(k-1)})'$.
We consider the situation when logistic models $\mathcal{M}(y_{\pi^{0}(k)},\z_{k}(\pi^0))$, for $k=1,\ldots,K_n$
are fitted according to $\pi^{0}$, whereas the data is generated according to the true ordering $\pi^{*}$.
Unfortunately, in this case we may encounter the problem of incorrect model specification, e.g. we fit the logistic model whereas the true function, from which the data is generated, may not be necessarily a logistic one. 
To assess the problem, consider the simple case of $2$ labels and one feature $x\in R$. Assume that the true ordering is $\pi^{*}=(1,2)$ and the labels are generated as
$P(y_1=1|x)=\sigma(x)$ and $P(y_2=1|x,y_1)=\sigma(x+ay_1)$, where $a\in R$ is a parameter. The joint distribution in this case is of the form
$P(y_1,y_2|x)=\sigma(x)^{y_1}(1-\sigma(x))^{1-y_1}\sigma(x+ay_1)^{y_2}(1-\sigma(x+ay_1))^{1-y_2}$. Assume now that we fit logistic models according to reverse ordering $\pi^{0}=(2,1)$, so we use logistic model to approximate $P(y_2=1|x)$ which is
\begin{equation*}
P(y_2=1|x)=P(y_1=0,y_2=1|x)+P(y_1=1,y_2=1|x)=\sigma(x)+\sigma(x)[\sigma(x+a)-\sigma(x)].
\end{equation*} 
It is easy to verify that there do not exist parameters $v,w\in R$ such that for all $x$, $P(y_2=1|x)=\sigma(v+wx)$, so we cannot approximate 	
accurately the true distribution $P(y_2=1|x)$ using logistic model.

Recall that $\hat{\theta}_{k}(\pi^0)$ is a maximum likelihood estimator from fitted model $\mathcal{M}(y_{\pi^{0}(k)},\z_{k}(\pi^0))$.
We show what is the limit of $\hat{\theta}_{k}(\pi^0)$ in the general situation, when the assumed ordering $\pi^0$ is not necessarily the true one.
\cite{HjortPollard1993} studied the performance of logistic model under incorrect model specification.
See also \cite{CzadoSantner1992} and  \cite{Hjort1988}, where various implications for statistical inference are discussed.
 Using their methodology (see Section 5B in \cite{HjortPollard1993}) it can be shown that for any $\epsilon>0$
\begin{equation}
\label{conv_proj}
P(|\hat{\theta}_{k}(\pi^0)-\tilde{\theta}_{k}(\pi^0)|>\epsilon)\to 0,
\end{equation}
as $n\to\infty$, where
\begin{equation*}
\tilde{\theta}_{k}(\pi^0):=\arg\min_{\gamma\in R^{p+k-1}}E\left\{KL\left[P(y_{\pi^0(k)}=1|\z_{k}(\pi^0)),\sigma(\z_{k}(\pi^0)'\gamma)\right]\right\},
\end{equation*}
and $KL(q_1,q_2):=q_1\log[q_1/q_2]+(1-q_1)\log[(1-q_1)/(1-q_2)]$
 is a Kullback-Leibler divergence. The expectation in (\ref{conv_proj}) is taken with respect to random vector $\z_{k}(\pi^0)$.
So the limit $\tilde{\theta}_{k}(\pi^0)$ minimizes the expected KL divergence from the true distribution $P(y_{\pi^0(k)}=1|\z_{k}(\pi^0))$ to the postulated one with parameter $\sigma(\z_{k}(\pi^0)'\tilde{\theta}_{k}(\pi^0))$.
The convergence in (\ref{conv_proj}) can be interpreted as follows: we find the most accurate approximation of the true probability $P(y_{\pi^0(k)}=1|\z_{k}(\pi^0))$ with respect to the KL divergence, within the space of the logistic models.
When $\pi^0=\pi^*$, the true probability $P(y_{\pi^0(k)}=1|\z_{k}(\pi^0))$ is of the logistic form and then $\tilde{\theta}_{k}(\pi^0)=\theta_{k}(\pi^0)$.
Moreover, we state the proposition which indicates that the expectations of the true distribution and the postulated one, with parameter $\tilde{\theta}_{k}(\pi^0)$,  coincide.
\begin{Proposition}
The following equalities hold
\begin{equation}
\label{moment1}
E[P(y_{\pi^0(k)}=1|\z_{k}(\pi^0))\z_{k}(\pi^0)]=E[\sigma(\z_{k}(\pi^0)'\tilde{\theta}_{k}(\pi^0))\z_{k}(\pi^0)]
\end{equation}
and
\begin{equation}
\label{moment2}
E[P(y_{\pi^0(k)}=1|\z_{k}(\pi^0))]=E[\sigma(\z_{k}(\pi^0)'\tilde{\theta}_{k}(\pi^0))].
\end{equation}
\end{Proposition}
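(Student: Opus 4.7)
The plan is to prove equation (\ref{moment1}) directly from the defining optimality condition of $\tilde{\theta}_{k}(\pi^0)$ as the minimizer of the expected Kullback-Leibler divergence, and then to deduce equation (\ref{moment2}) by exploiting the fact that the first coordinate of $\z_{k}(\pi^0)$ equals $1$ (since it is inherited from $\x$, whose first coordinate is the intercept).

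First I would fix $\pi^0$ and write, for brevity, $\z := \z_{k}(\pi^0)$, $p(\z) := P(y_{\pi^0(k)}=1\mid \z)$ and $q_\gamma(\z) := \sigma(\z'\gamma)$. The criterion to minimize is
\[
F(\gamma) := E\bigl\{p(\z)\log p(\z) + (1-p(\z))\log(1-p(\z)) - p(\z)\log q_\gamma(\z) - (1-p(\z))\log(1-q_\gamma(\z))\bigr\}.
\]
Assuming the expectation and the derivative with respect to $\gamma$ can be interchanged (which is justified by dominated convergence since $|\sigma'|\leq 1/4$ and $E\|\z\|^2<\infty$ by assumption 2 of Theorem \ref{Theorem1}), the first-order condition $\nabla F(\tilde{\theta}_{k}(\pi^0))=\0$ must hold at the interior minimizer.

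The next step is a short calculation. Using $\partial_\gamma \log\sigma(\z'\gamma) = (1-\sigma(\z'\gamma))\z$ and $\partial_\gamma \log(1-\sigma(\z'\gamma)) = -\sigma(\z'\gamma)\z$, the derivative of the integrand reduces after cancellation to $\bigl(\sigma(\z'\gamma)-p(\z)\bigr)\z$. Setting the gradient of $F$ at $\gamma=\tilde{\theta}_{k}(\pi^0)$ equal to zero yields
\[
E\bigl[\bigl(\sigma(\z'\tilde{\theta}_{k}(\pi^0))-p(\z)\bigr)\z\bigr]=\0,
\]
which is precisely (\ref{moment1}). Finally, since the first coordinate of $\x$ (and hence of $\z$) is identically $1$, reading off the first component of the identity (\ref{moment1}) gives (\ref{moment2}).

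I do not anticipate a serious obstacle here: the argument is essentially the standard ``score equation'' identity for logistic quasi-likelihood, and the main point that could require care is the justification of differentiation under the expectation. This can be handled with the uniform bound $\|\partial_\gamma \sigma(\z'\gamma)\z\|\leq \|\z\|^2/4$ together with assumption 2 of Theorem \ref{Theorem1}, or by appealing directly to the regularity conditions implicit in \cite{HjortPollard1993}, which already guarantees the existence and uniqueness of $\tilde{\theta}_{k}(\pi^0)$ as a well-defined interior minimizer.
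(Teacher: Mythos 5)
Your proposal is correct and follows essentially the same route as the paper: both differentiate the expected Kullback--Leibler criterion with respect to $\gamma$, observe that the gradient of the integrand reduces to $(\sigma(\z_{k}(\pi^0)'\gamma)-P(y_{\pi^0(k)}=1|\z_{k}(\pi^0)))\z_{k}(\pi^0)$, set it to zero at $\tilde{\theta}_{k}(\pi^0)$ to obtain (\ref{moment1}), and then read off the intercept coordinate to get (\ref{moment2}). Your added remark on justifying the interchange of differentiation and expectation is a point the paper leaves implicit, but it does not change the argument.
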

\begin{proof}
Using the fact that $\partial\sigma(\z_{k}(\pi^0)'\gamma)/\partial\gamma=\sigma(\z_{k}(\pi^0)'\gamma)(1-\sigma(\z_{k}(\pi^0)'\gamma))$, it is easy to see that
\begin{eqnarray*}
&&
E\left[\frac{\partial KL\left[P(y_{\pi^0(k)}=1|\z_{k}(\pi^0)),\sigma(\z_{k}(\pi^0)'\gamma)\right]}{\partial\gamma}\right]=
\cr
&&
E[-P(y_{\pi^0(k)}=1|\z_{k}(\pi^0))\z_{k}(\pi^0)+
\sigma(\z_{k}(\pi^0)'\gamma)\z_{k}(\pi^0)],
\end{eqnarray*}
which yields (\ref{moment1}). Equality (\ref{moment2}) follows from (\ref{moment1}) after noting that the first coordinate of $\z_{k}(\pi^0)$, corresponding to intercept, is equal $1$.
\qed
\end{proof}

Hereinafter, we propose a method which aims to find the ordering  such that the consecutive logistic models in the chain are correctly specified. We propose to use a forward procedure such that at each stage we choose the label corresponding to best specified logistic model. Note that we are more interested in evaluation of goodness of specification than evaluation of the goodness of fit.
In order to have such a procedure, it is necessary to define a measure which indicates whether the given model is correctly specified or not.

\subsection{Goodness of specification measures}
\label{Goodness of specification tests}
For convenience, in this section we will focus on the link functions, which relate posterior probabilities to the linear combinations of the features. For the logistic model, the link function is the inverse of $\sigma(\cdot)$.
We discuss the situation of fitting (based on response variable $y$ and some input features $\z$) a logistic model having link function $g_{0}(\mu)=\log(\frac{\mu}{1-\mu})$ when in fact, the correct link function is $g_{*}(\mu)$. The idea is to define more general family of link functions $\mathcal{L}:=\{g(\mu;\alpha):\alpha\in R\}$, depending on some parameter $\alpha$ (or more parameters), in such a way that both assumed link and correct link are the members of $\mathcal{L}$. Having such a family, we may write
$g_{0}(\mu)=g(\mu;\alpha_0)$ and $g_{*}(\mu)=g(\mu;\alpha_*)$, where $\alpha_0$ is known whereas $\alpha_*$ is unknown. 
Using the first-order Taylor expansion about the assumed link we have the approximation
\begin{equation*}
g_{*}(\mu)\approx g_{0}(\mu)+(\alpha_*-\alpha_0)\frac{\partial}{\partial\alpha}g(\mu;\alpha)|_{\alpha=\alpha_{0}}.
\end{equation*} 
Observe that $g_{*}(\mu)=\z'\theta$, where $\z$ is a vector of input variables and $\theta$ is vector of unknown parameters associated with the unknown link and thus the assumed link can be approximated by
\begin{equation}
\label{approx1}
g_{0}(\mu)\approx \z'\theta + w(\alpha_0-\alpha_*),
\end{equation} 
where $w:=\frac{\partial}{\partial\alpha}g(\mu;\alpha)|_{\alpha=\alpha_{0}}$ is so-called carrier variable and $\gamma:=(\alpha_0-\alpha_*)$ is an unknown parameter.
Variable $w$ is unknown but we can approximate it in the following way. We initially fit model with logistic link $g_{0}(\mu)$ and estimate parameters using maximum likelihood method. This yields maximum likelihood estimates $\hat{\theta}$ and thus also $\hat{\mu}$, from which we can approximate $\hat{w}:=\frac{\partial}{\partial\alpha}g(\hat{\mu};\alpha)|_{\alpha=\alpha_{0}}$. 
Now as a measure of correct link specification, we can use deviance statistic, defined as
\begin{equation}
\label{Deviance}
D_{\hat{w}}(y,\z):=2\{l_{y,(\z,\hat{w})}(\hat{\theta}_{e},\hat{\gamma}_{e})-l_{y,\z}(\hat{\theta})\},
\end{equation}
where $\hat{\theta}_{e},\hat{\gamma}_{e}$ denote estimators based on extended model with an additional variable $\hat{w}$.
This requires merely refitting the original logistic model with additional variable $\hat{w}$. 
Observe that $\gamma=0$ if the link is correctly specified. Small value of the deviance indicates that the link is correctly specified, whereas significant departure from $0$ suggest incorrect specification. 
The above reasoning can be easily generalized to the case of more than $1$ carrier, it simply requires refitting the original logistic model with additional $2,3,\ldots$ carriers.
The crucial in the above idea is the choice of the family $\mathcal{L}$. Usually the proposed families are parametrized by one or two parameters. In the literature, various approaches have been explored. Best of our knowledge, the first attempt was made by \cite{Preigbon1980} who use the family defined by
\begin{equation*} 
\mathcal{L}:=\left\{\frac{(\mu/n)^{\alpha-\delta}-1}{\alpha-\delta}-\frac{(1-\mu/n)^{\alpha+\delta}-1}{\alpha+\delta}:\alpha, \delta\in R\right\},
\end{equation*}
where logit link is given by $g_{0}(\mu)=\lim_{\alpha,\delta\to 0}g(\mu;\alpha,\delta)$, which can be seen by applying L'H\^{o}spital's rule twice.
In Preigbon family, $\alpha$ corresponds to the symmetry of the distribution of latent variable associated with a given model from $\mathcal{L}$ and the heaviness of tails of this distributions is parametrized by $\delta$. 
In this case the carriers are $w_1=0.5(\log^{2}(\hat{\mu}/n)-\log^{2}(1\hat{\mu}/n))$ and $w_2=-0.5(\log^{2}(\hat{\mu}/n)+\log^{2}(1\hat{\mu}/n))$.
The comprehensive list of other families can be found in \cite{Stukel1988}. The carriers used in our tests are summarized in Table \ref{tab1}.
\begin{table}
\scriptsize
\centering
\caption{Carriers for computing deviance $D(y,\z)$.}
\label{tab1}
\begin{tabular}{ll}
  \hline 
  Method &  Carriers\\
  \hline
Preigbon& $w_1=0.5(\log^{2}(\hat{\mu}/n)-\log^{2}(1-\hat{\mu}/n))$; $w_2=-0.5(\log^{2}(\hat{\mu}/n)+\log^{2}(1-\hat{\mu}/n))$ \\
Stukel&  $w_1=0.5(\z'\hat{\theta})^2\I(\z'\hat{\theta}\geq 0)$; $w_2=-0.5(\z'\hat{\theta})^2\I(\z'\hat{\theta}< 0)$\\
Prentice&  $w_1=-(1-\hat{\mu})^{-1}\log(\hat{\mu})$; $w_2=-\hat{\mu}^{-1}\log(1-\hat{\mu})$\\
Guerrero-Johnson&  $w_1=0.5(\z'\hat{\theta})^2$\\
Morgan&  $w_1=(\z'\hat{\theta})^3$\\
Aranda (asymmetric) & $w_1=1+\hat{\mu}^{-1}\log(1-\hat{\mu})$\\
\hline
\end{tabular}
\end{table}

\subsection{Procedure for finding the order in classifier chains}
\label{Procedure for finding the order in classifier chains}
The methodology described in the previous section can be used to determine the optimal ordering in LCC.
We propose to use a forward procedure such that at each stage we choose the label corresponding to best specified logistic model.
The single step can be characterized in the following way. We fit logistic models in which features $\x$ and labels found in the previous steps are treated as input variables whereas the candidate labels are treated as response variables. We select the label corresponding to the best specified model. 
The overall scheme is shown in Algorithm \ref{alg1}.

\begin{algorithm}[ht!]
\caption{Pseudo-code of the procedure for finding the order in classifier chains.}

\SetKwInOut{Input}{Input}
\SetKwInOut{Initialize}{Initialize}
\SetKwInOut{Output}{Output}

\KwData{training set $\mathcal{D}=(\x^{(i)},\y^{(i)})$, $i=1,\ldots,n$}
\Initialize{

$\pi:=\emptyset$

$I:=(1,\ldots,K_n)$

$\z_{\textrm{act}}:=\x$
}
  \For{k $\leftarrow$ $1$ \KwTo $K_n$}{
  
	$\hat{k}=\arg\min_{k\in I}D_{\hat{w}}(y_k,\z_{\textrm{act}})$, where $D$ is defined in (\ref{Deviance})
	   
	$\z_{\textrm{act}}\leftarrow\z_{\textrm{act}}\cup \{y_{\hat{k}}\}$ 
	
	$I\leftarrow I \setminus \{\hat{k}\}$  
   
	$\pi\leftarrow\pi\cup \{\hat{k}\}$

 }
\KwOut{ordering of labels $\pi$}
\label{alg1}
\end{algorithm}

In the following, we will give the heuristic justification of the above procedure.
Consider the general situation in which we would like to test whether the given variable $w$ is significant in the fitted logistic model. 
 Assume that $P(y=1|\z,w)=\sigma(\z'\theta+w\gamma)$, where $(\z',w)'$ is a vector of explanatory variables and $\theta,\gamma$ are unknown parameters. 
Let
\begin{equation}
\label{Deviance_true}
D_{w}(y,\z):=2\{l_{y,(\z,w)}(\hat{\theta}_{e},\hat{\gamma}_{e})-l_{y,\z}(\hat{\theta})\}
\end{equation}
be a deviance statistic,
where
$\hat{\theta}$ is an estimator based on smaller model (with variable $w$ omitted), whereas
$\hat{\theta}_{e},\hat{\gamma}_{e}$ denote estimators based on extended model with an additional variable $w$. Let $D_{\gamma=0}$ denotes deviance (\ref{Deviance_true}) corresponding to the case $\gamma=0$ and $D_{\gamma\neq 0}$ be the analogous quantity, corresponding to the case $\gamma\neq 0$. 
\begin{Theorem}
\label{Theorem2}
Assume that:
\begin{enumerate}
\item $E||\z||^2<\infty$,
\item there exists constant $c_1$ such that
\begin{equation*}
P\left[\lambda_{\min}[H_{y,(\z',w)'}(\theta,\gamma)/n]\geq c_1\right]\to 1
\end{equation*}
($\lambda_{\min}$ is a minimal eigenvalue). 
\end{enumerate}
Then, $P[D_{\gamma\neq 0}>D_{\gamma=0}]\to 1$, as $n\to\infty$.
\end{Theorem}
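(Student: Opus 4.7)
The plan is to show that $D_{\gamma=0}$ is tight while $D_{\gamma\neq 0}\cp\infty$, from which $P[D_{\gamma\neq 0}>D_{\gamma=0}]\to 1$ follows by a routine coupling argument: given any $\eta>0$, one chooses $M$ large enough that $D_{\gamma=0}\leq M$ with probability at least $1-\eta/2$ eventually, and the divergence of $D_{\gamma\neq 0}$ gives $D_{\gamma\neq 0}>M$ with probability at least $1-\eta/2$ eventually.

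When $\gamma=0$ the reduced model $\sigma(\z'\theta)$ is correctly specified, so standard Fahrmeir--Kaufmann--Wilks asymptotics for logistic regression (whose regularity conditions are supplied by assumptions 1 and 2 restricted to the reduced design) yield $D_{\gamma=0}\cd\chi^2_1$, hence $D_{\gamma=0}=O_p(1)$. When $\gamma\neq 0$ the reduced model is misspecified. The argument recalled in (\ref{conv_proj}) shows that $\hat{\theta}\cp \tilde{\theta}$ with $\tilde{\theta}=\arg\min_\beta E\{KL[\sigma(\z'\theta+w\gamma),\sigma(\z'\beta)]\}$, while $(\hat{\theta}_e,\hat{\gamma}_e)\cp(\theta,\gamma)$ by consistency in the correctly specified extended model. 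Combining a uniform law of large numbers for the normalized log-likelihoods with these consistencies gives
\begin{equation*}
\frac{1}{2n}D_{\gamma\neq 0}\cp \Delta := E\left\{KL\left[\sigma(\z'\theta+w\gamma),\sigma(\z'\tilde{\theta})\right]\right\}.
\end{equation*}
It then suffices to show $\Delta>0$. If $\Delta=0$, the two logistic probabilities coincide almost surely, which upon inverting $\sigma$ forces $w\gamma=\z'(\tilde{\theta}-\theta)$ almost surely; since $\gamma\neq 0$ this would make $w$ an almost sure linear combination of $\z$, contradicting assumption 2 (the Hessian of the extended model would have a null eigenvector in the direction corresponding to such a linear dependence). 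Hence $\Delta>0$ and $D_{\gamma\neq 0}\cp\infty$.

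The main technical obstacle is the uniform convergence of the normalized log-likelihood in the misspecified case, that is, passing from the in-probability convergence of $\hat{\theta}$ to convergence of $n^{-1}l_{y,\z}(\hat{\theta})$ to the expected log-likelihood at $\tilde{\theta}$. I would handle this via a uniform law of large numbers over a compact neighbourhood of $\tilde{\theta}$: the logistic log-likelihood kernel is Lipschitz in its parameter with a stochastic Lipschitz constant controlled by $\|\z\|$, whose second moment is finite by assumption 1, and the local strong concavity delivered by assumption 2 confines both MLEs to this neighbourhood with probability tending to one. The two remaining ingredients---the $\chi^2_1$ limit of $D_{\gamma=0}$ in the correctly specified case (standard Wilks) and the identification of $\Delta$ as the expected KL gap via a second-order Taylor expansion of the log-likelihood---are then routine.
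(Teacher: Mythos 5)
Your proposal is correct and its top-level skeleton is the same as the paper's: show $D_{\gamma=0}=O_p(1)$ via the Wilks-type $\chi^2_1$ limit for the correctly specified reduced model, show $D_{\gamma\neq 0}\cp\infty$, and combine. The difference is in how the second half is handled. The paper's own proof is essentially two citations: it invokes Lemma 4.2.7 and Remark 4.2.3 of Teisseyre (2013) to verify the hypotheses of Theorem 1 of Fahrmeir (1987) for the $\chi^2_1$ limit, and then cites Proposition 4.2.10 of Teisseyre (2013) for the existence of a sequence $w_n\to\infty$ with $P[D_{\gamma\neq 0}>w_n]\to 1$; no explicit mechanism for the divergence is given in the text. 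You instead supply that mechanism: a law-of-large-numbers identification of $(2n)^{-1}D_{\gamma\neq 0}$ with the expected Kullback--Leibler gap $\Delta$ between the true conditional law $\sigma(\z'\theta+w\gamma)$ and its best logistic approximation $\sigma(\z'\tilde{\theta})$ in the reduced model (consistent with the limit $\tilde{\theta}_k(\pi^0)$ the paper itself derives in Section 4 from Hjort and Pollard), followed by a positivity argument for $\Delta$ that correctly traces $\Delta=0$ back to an almost-sure linear dependence of $w$ on $\z$ and hence to a singular extended Hessian, contradicting assumption 2. What each approach buys: the paper's proof is shorter but opaque without access to the cited thesis; yours is self-contained, yields the sharper statement that $D_{\gamma\neq 0}$ grows linearly in $n$ rather than merely diverging, and makes explicit that assumption 2 is precisely what rules out the degenerate case in which the carrier adds no information. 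The only part you leave as a sketch is the uniform law of large numbers needed to pass from $\hat{\theta}\cp\tilde{\theta}$ to convergence of the normalized log-likelihood, but the ingredients you name (a stochastic Lipschitz constant controlled by $\|\z\|$ with $E\|\z\|^2<\infty$, and localization of the estimators via the strong concavity from assumption 2) are the right ones and suffice.
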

For the proof, see in Section \ref{Proof of Theorem2}. Part of the proof concerning the asymptotic performance of $D_{\gamma=0}$ follows from \cite{Fahrmeir1987}, whereas the asymptotic convergence of $D_{\gamma\neq 0}$, under the above conditions, was proved in \cite{Teisseyre2013}. 
Observe that assumptions 1, 2 in above Theorem are analogous to assumptions 2, 4 in Theorem \ref{Theorem1}. The above Theorem is formulated  for the case when there is only one additional variable $w$, but it can be obviously expanded to the case of more additional variables.

The above Theorem aims to justify a procedure described in Algorithm \ref{alg1}. 
Namely, deviance $D_{\gamma=0}$ corresponds to the correct specification, whereas $D_{\gamma\neq 0}$ corresponds to the incorrect specification.
Note however, that there are  small differences between situation described in Theorem \ref{Theorem2} and setting described in Section \ref{Goodness of specification tests} (and applied in Algorithm \ref{alg1}). First, observe that the extended model in (\ref{Deviance}) is not exactly a true one (as in (\ref{Deviance_true})), because  it is based on approximation given in (\ref{approx1}). Secondly, observe that an additional variable $\hat{w}$ in (\ref{Deviance}) depends on the estimator $\hat{\theta}$ calculated for the smaller model, which is not the case in (\ref{Deviance_true}).

\section{Inference in classifier chain model}
\label{Inference in classifier chain model}
The classifier chain model allows to estimate the joint probability distribution. Having estimated the joint distributions, one would usually like to infer from this distribution, i.e. to make a prediction for some new instance $\x$. The Bayes optimal rule is the mode of the joint distribution $\y^{*}(\x)=\arg\max_{y\in\{0,1\}^{K_n}}P(\y|\x)$, which  is estimated by $\hat{\y}(\x)=\arg\max_{\y\in\{0,1\}^{K_n}}\hat{P}(\y|\x)$, based on training set. In Theorem \ref{Theorem1} we have shown that for classifier chain model,  $\hat{\y}(\x)$ converges to  $\y^{*}(\x)$ in probability, even in the case of large number of labels, provided that the "true" ordering of labels is known and certain assumptions on the design matrix and number of labels are satisfied.
However, from the practical point of view, the problematic is computation of  $\hat{\y}(\x)$ as it requires evaluation of $2^{K_n}$ possible labellings for one new instance $\x$. This approach is often referred to as "exhaustive inference".
The "exhaustive inference" is limited to data sets with a small to moderate number of labels, say, not more than about 15.
In "multi-label community", the method which combines the classifier chains with "exhaustive inference" is called Probabilistic classifier chains (PCC), see in \cite{Dembczynskietal2010} and \cite{Readetal2011}. 
Chain structure of the model, suggests to use "greedy inference" by successively choosing the
most probable label according to each of the classifiers’ predictions, i.e. when the assumed order of labels is $\pi^0$, then we apply the following procedure
\begin{itemize}
\item find  $\hat{y}_{\pi^{0}(1)}=\arg\max_{y\in\{0,1\}}\hat{P}(y_{\pi^{0}(1)}=y|\x)$,
\item find  $\hat{y}_{\pi^{0}(2)}=\arg\max_{y\in\{0,1\}}\hat{P}(y_{\pi^{0}(2)}=y|\hat{y}_{\pi^{0}(1)},\x)$,
\item $\ldots$
\item find  $\hat{y}_{\pi^{0}(K_n)}=\arg\max_{y\in\{0,1\}}\hat{P}(y_{\pi^{0}(K_n)}=y|\hat{y}_{\pi^{0}(1)},\ldots,\hat{y}_{\pi^{0}(K_n-1)},\x)$,
\end{itemize}    
which requires $K_n$ operations instead of $2^{K_n}$. 
This approach, introduced originally in \cite{Readetal2009}, is usually simply referred to as classifier chains (CC). Since the naming can be a little confusing, we stress that the learning stage is the same in both PCC and CC, whereas the difference is in the inference stage. 
 Note that the "greedy inference" does not guarantee that the mode will be identified correctly, even when the true joint distribution is known. Consider an example with two binary labels $y_1$ and $y_2$ and the ordering $\pi^0=(1,2)$. Let $P(y_1=1|\x)=0.6$, $P(y_1=0|\x)=0.4$, $P(y_2=1|\x,y_1=1)=P(y_2=0|\x,y_1=1)=0.5$ and $P(y_2=1|\x,y_1=0)=0.9$. Then the "greedy inference" yields labelling $\y=(1,1)'$ (or $\y=(1,0)'$), whereas the true mode is $\y^*=(0,1)'$. 
The other problem with "greedy inference", addressed in \cite{Sengeetal2012}, is an error propagation along the chain. Since each classifier relies on its predecessors, a false prediction might be propagated. In a consequence, the labels which are placed at the end of the chain, carry the highest errors.    
  
The problem of inference from the joint distribution can be visualise using the binary tree, in which nodes represent labels, edges represent conditional probabilities and leaves represent particular labellings.
Figure \ref{fig_tree} shows  an example tree for two labels.
The inference task is equivalent to finding the optimal path in a rooted, complete binary tree of height $K_n$. 
The "exhaustive inference" requires evaluation of all $2^{K_n}$ paths in the tree, on the other hand, the "greedy inference" corresponds to one chosen path.    

\begin{figure}[ht!]
\centering
    \includegraphics[scale=0.8]{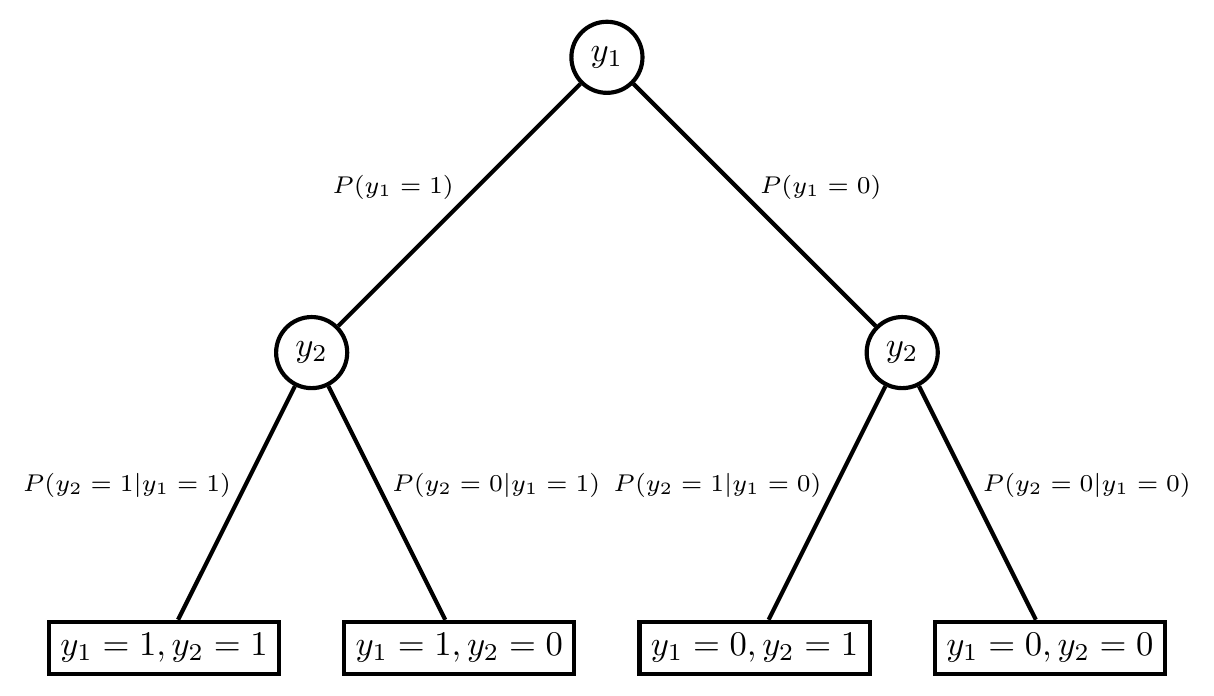}
    \caption{Binary tree used for inference with two labels.}
 \label{fig_tree} 
\end{figure} 
To overcome the problem of time complexity of "exhaustive inference", from one side and the possible lower accuracy of "greedy inference", from the other side, some authors develop approximate inference schemes that trade off accuracy against efficiency in a reasonable way. For example, \cite{Kumaretal2013} proposed using beam search algorithm for inference to explore the binary tree.   
The idea is to keep $b$ candidate solutions at each level of the tree, where $b$ is a user-defined parameter known as
the beam width, which represent the best partial solutions seen thus far. Then the tree is explored in a breadth-first fashion using these solutions. "Beam search inference" requires $bK$ operations and allows to explore more paths in the binary tree, than for "greed inference". Of course, as in the case of "greedy inference", this approach may also not result in finding the optimal label vector. Observe that "Beam search inference" with $b=1$ is equivalent to "greedy inference", whereas $b=\infty$ is equivalent to "exhaustive inference". 
The other simple solution is to limit the inference only to the label combinations that appear in the training set. 
Since the  inference problem is not a main issue of this paper, we limit our empirical experiments to two basic methods: "exhaustive inference" and "greedy inference".  
\section{Empirical evaluation}
\label{Empirical evaluation}
\subsection{Performance of goodness of specification measures}
\label{Performance of goodness of specification measures}
The aim of the first experiment is to compare the goodness of specification tests described in Section \ref{Goodness of specification tests}. We use 10  models generated in the following way. We first generate feature vector $\x$ from uniform distribution on $[-4,4]$ and then the consecutive labels $y_1,\ldots,y_{K_n}$ are generated using classifier chain model (\ref{marg_distr}) according to the order $\pi^{*}(k)=k$. The parameters corresponding to the considered models are summarized in Table \ref{tab2}. The first two simple models match the one discussed at the beginning of Section \ref{Ordering of labels in classifier chain}, with $a=3$ and $a=5$, respectively. The structure of the models was chosen so as to investigate what is the influence of the: number of labels, number of features, values of parameters corresponding to features and values of parameters corresponding to labels. 
We use the procedure described in \ref{Procedure for finding the order in classifier chains} in combination with methods summarized in Table \ref{tab1}.
For each $n=50,100,\ldots$, we repeat $200$ times: generate data with $n$ instances and apply the procedure for finding the optimal ordering.
This enables to estimate the probability of choosing the correct ordering. Figure \ref{fig1} shows the results. The dashed line corresponds to the random choice of the optimal ordering. We also tested using minus log-likelihood function (this measure was used in \cite{Kumaretal2013}) instead of $D(y_k,\z_{\textrm{act}})$ in Algorithm \ref{alg1}. This gives very poor results (comparable with the random choice), so we do not present the results in the figures. 
 The structure of the model affects the probability of choosing the correct ordering.
Observe that, in all cases, the results are significantly better than the baseline (random choice).
The method proposed by Morgan performs poorly in the majority of cases.
The methods, which use two carriers (Preigbon, Stukel, Prentice), usually outperform those which use only one carrier. This is clearly seen for models (M4) and (M6). This phenomenon can be explained by the fact that the methods from the first group are able to represent wider family of possibly misspecified models.
In the case of model (M12), the probabilities are relatively low (about $0.01$ for $n=4000$ and Preigbon method), but they are still much larger than the random choice, which is $1/K_n!\approx 10^{-7}$. On the other hand, in this case, the use of Preigbon methods improves the probability of correct mode selection significantly (see in the next section).  

Note also that the probability of choosing the correct ordering decreases when the number of labels increases (compare models (M3) and (M4) or (M5) and (M10))
and
the probability of choosing the correct ordering decreases when the absolute values of parameters corresponding to labels decrease (compare models (M8) and (M9)).
\begin{figure}[ht!]
\begin{center}$
\begin{array}{ccc}
\includegraphics[scale=0.28]{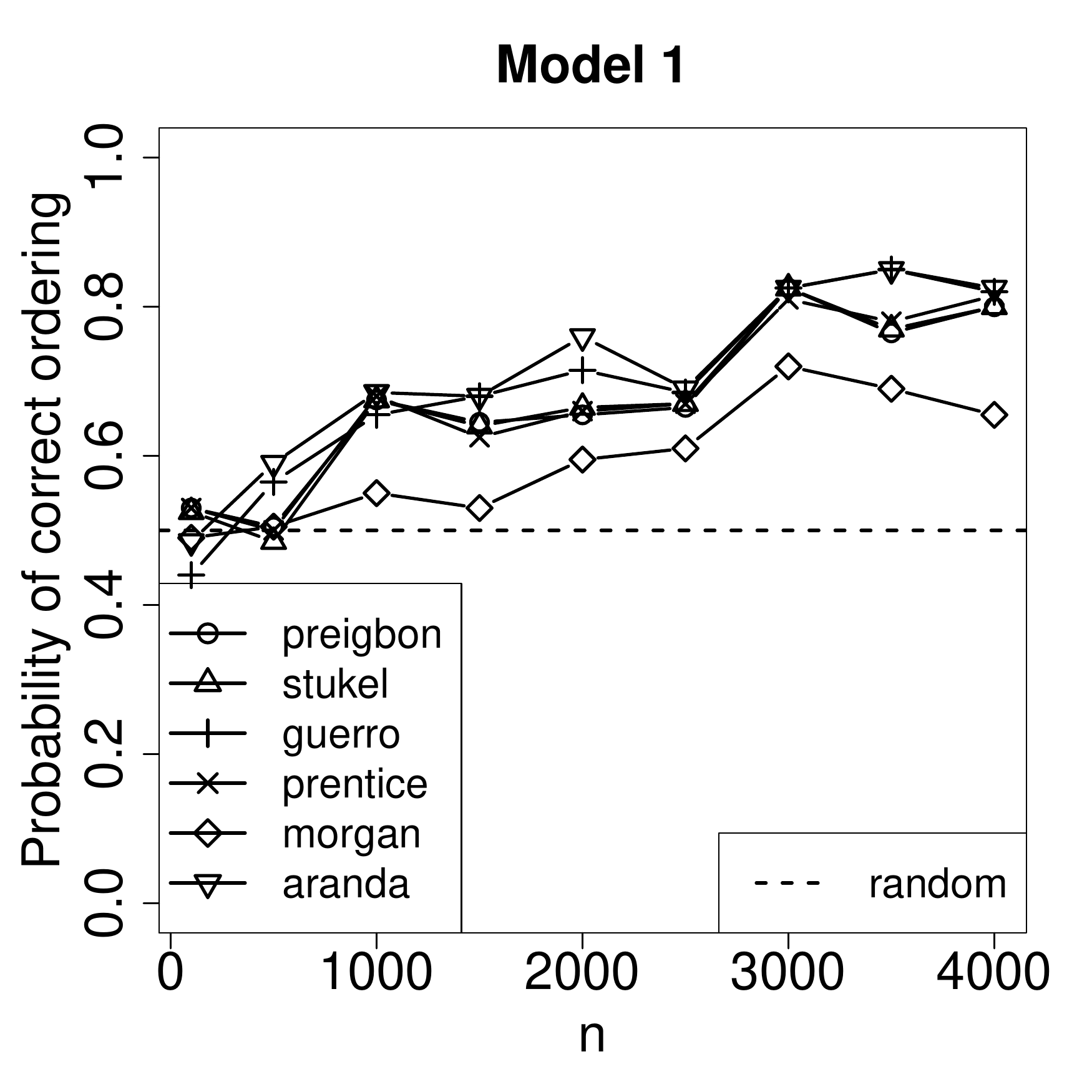} &
\includegraphics[scale=0.28]{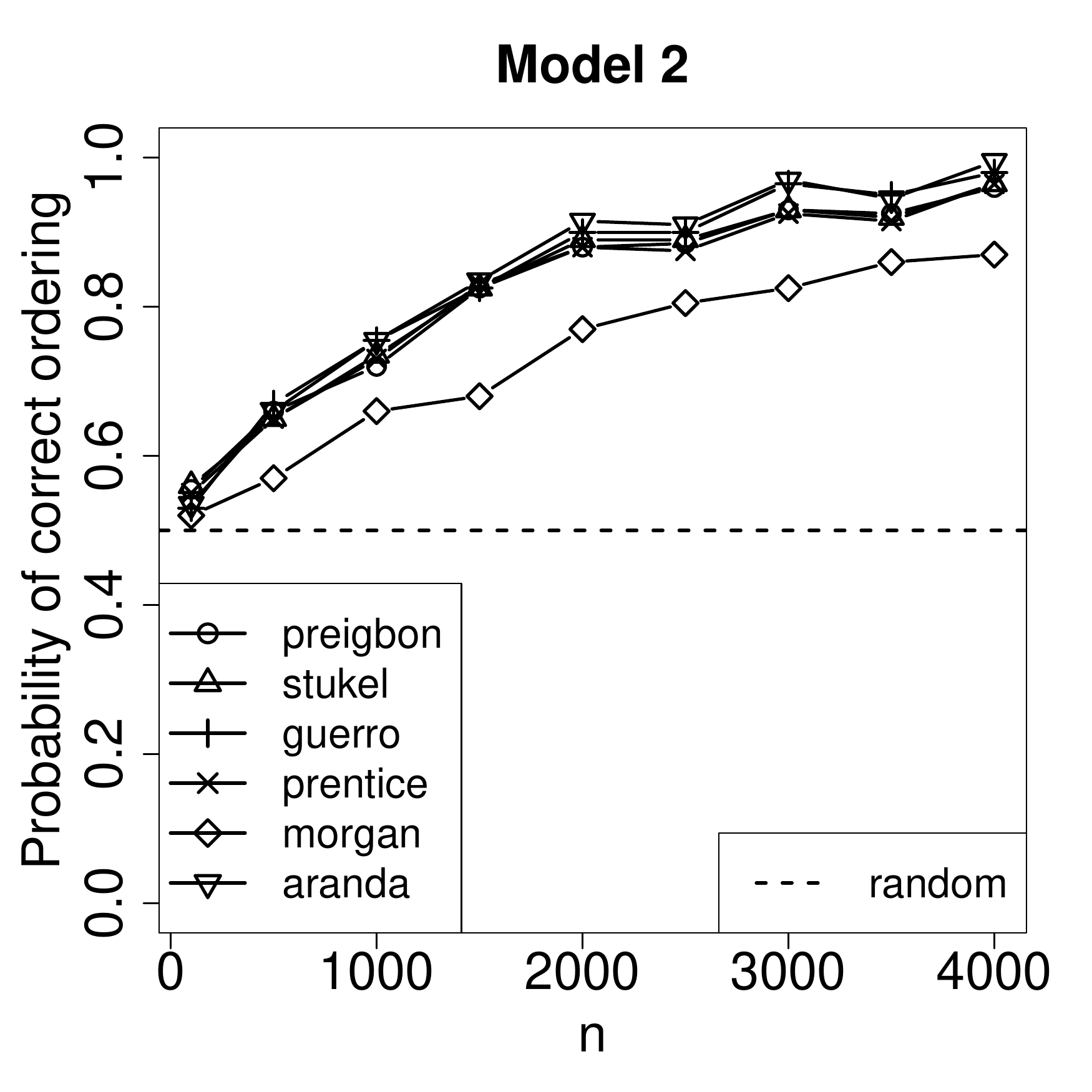} &
\includegraphics[scale=0.28]{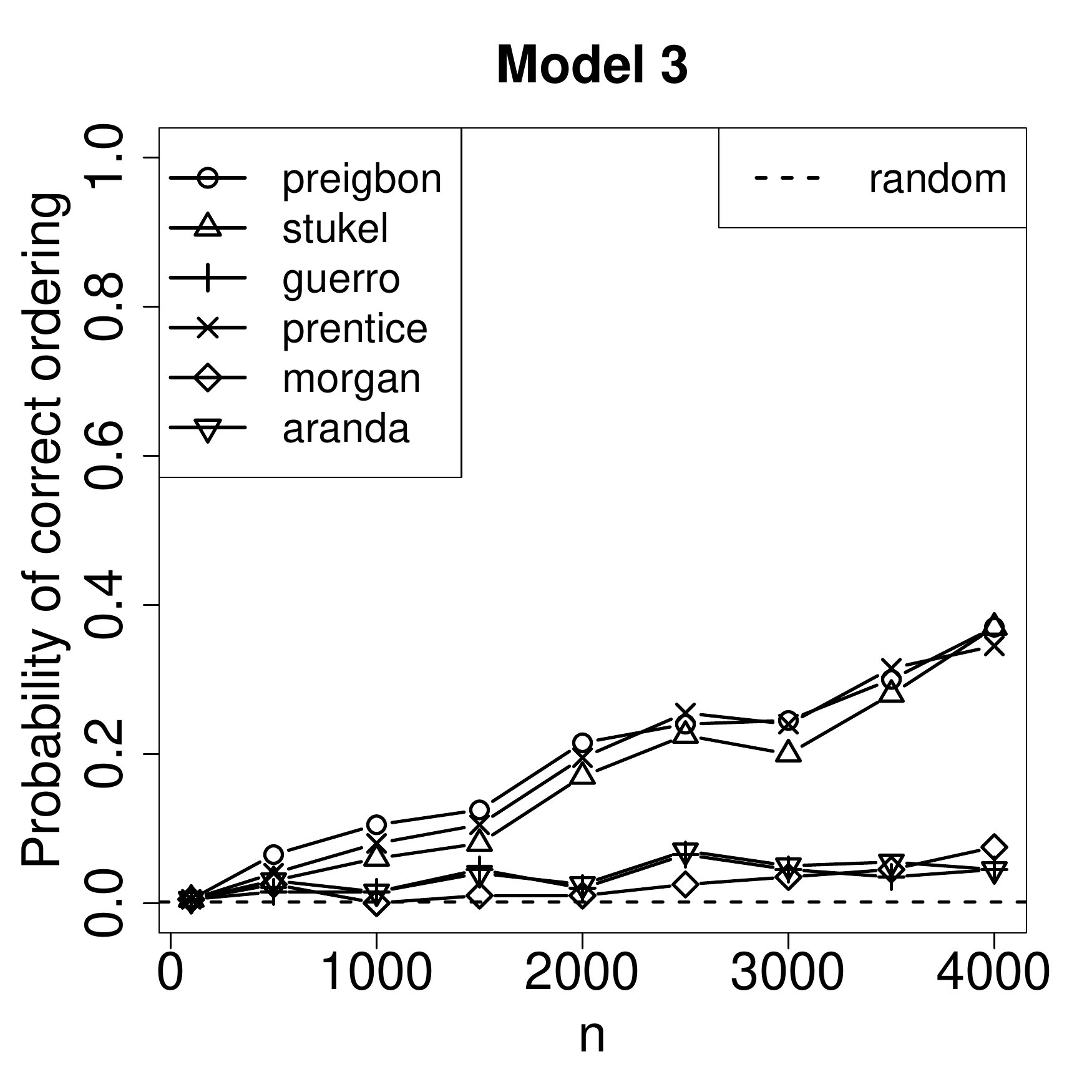} \\
\includegraphics[scale=0.28]{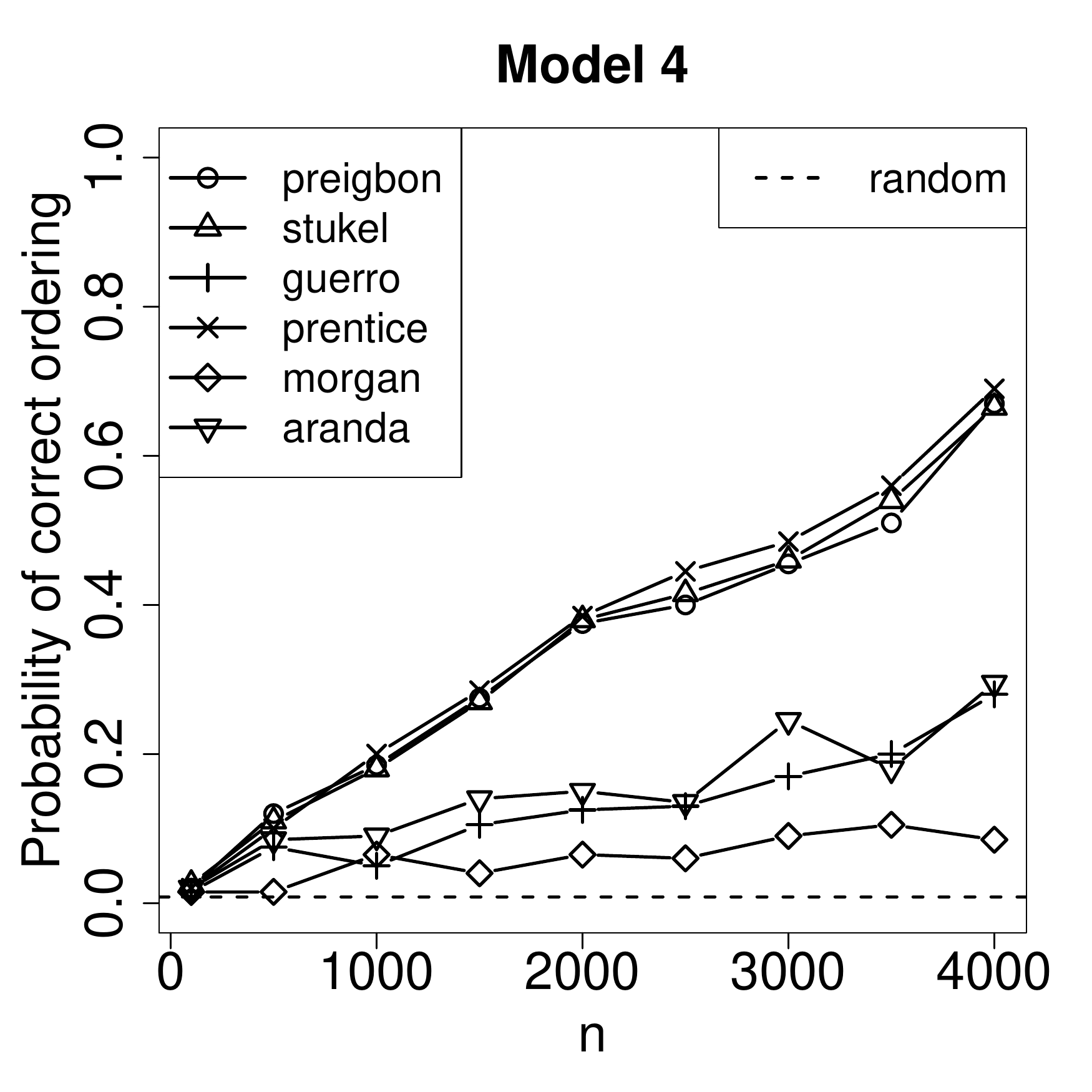} &
\includegraphics[scale=0.28]{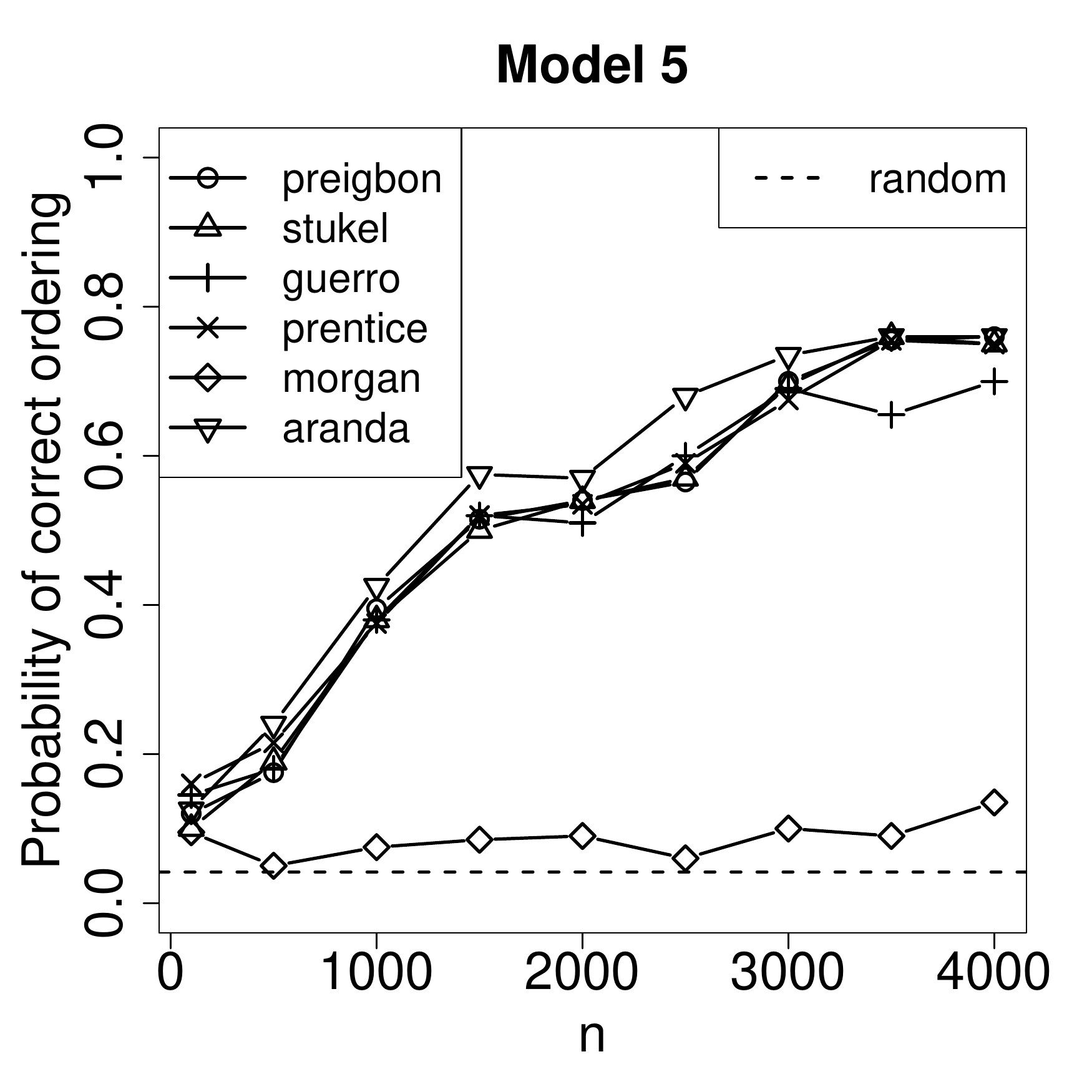} &
\includegraphics[scale=0.28]{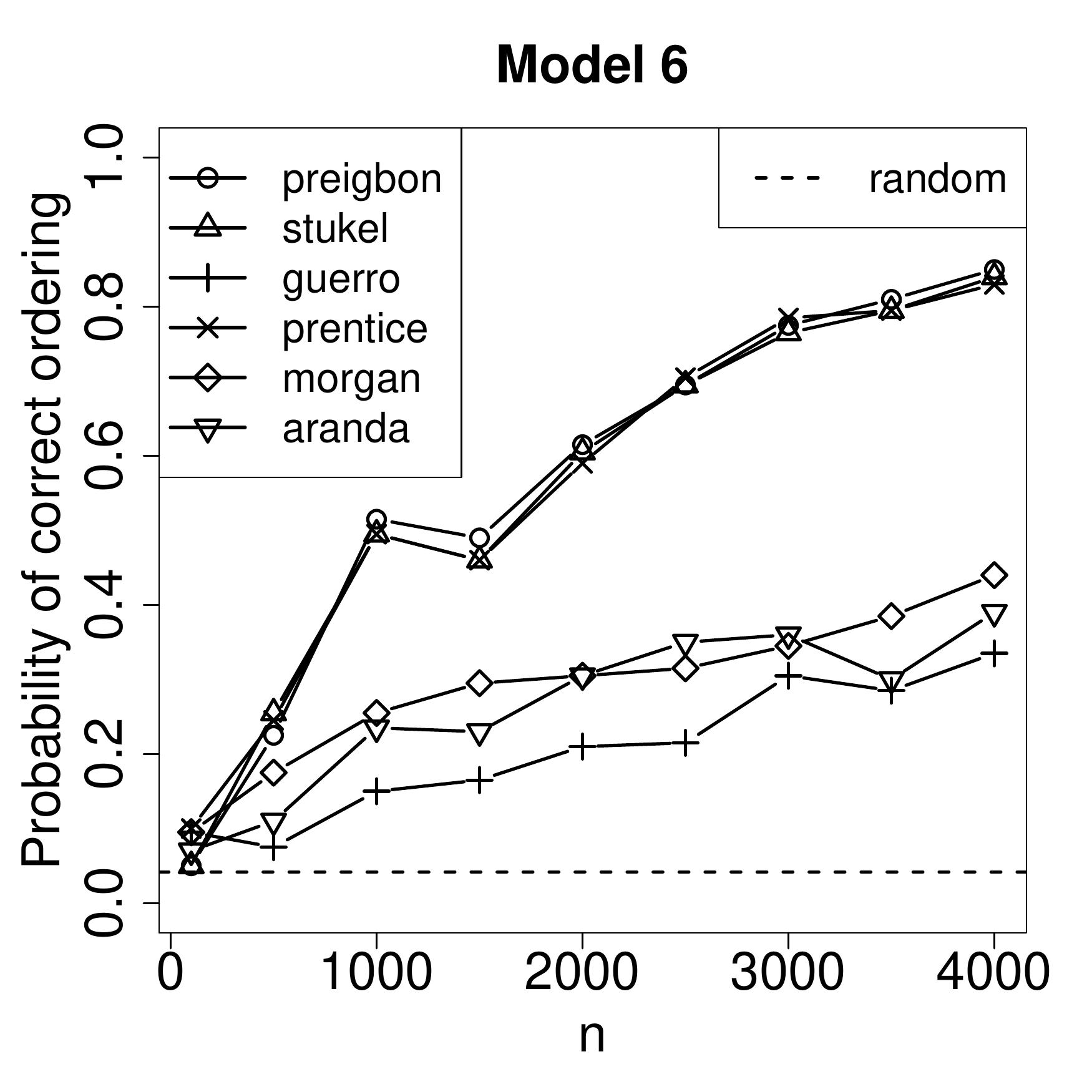} \\
\includegraphics[scale=0.28]{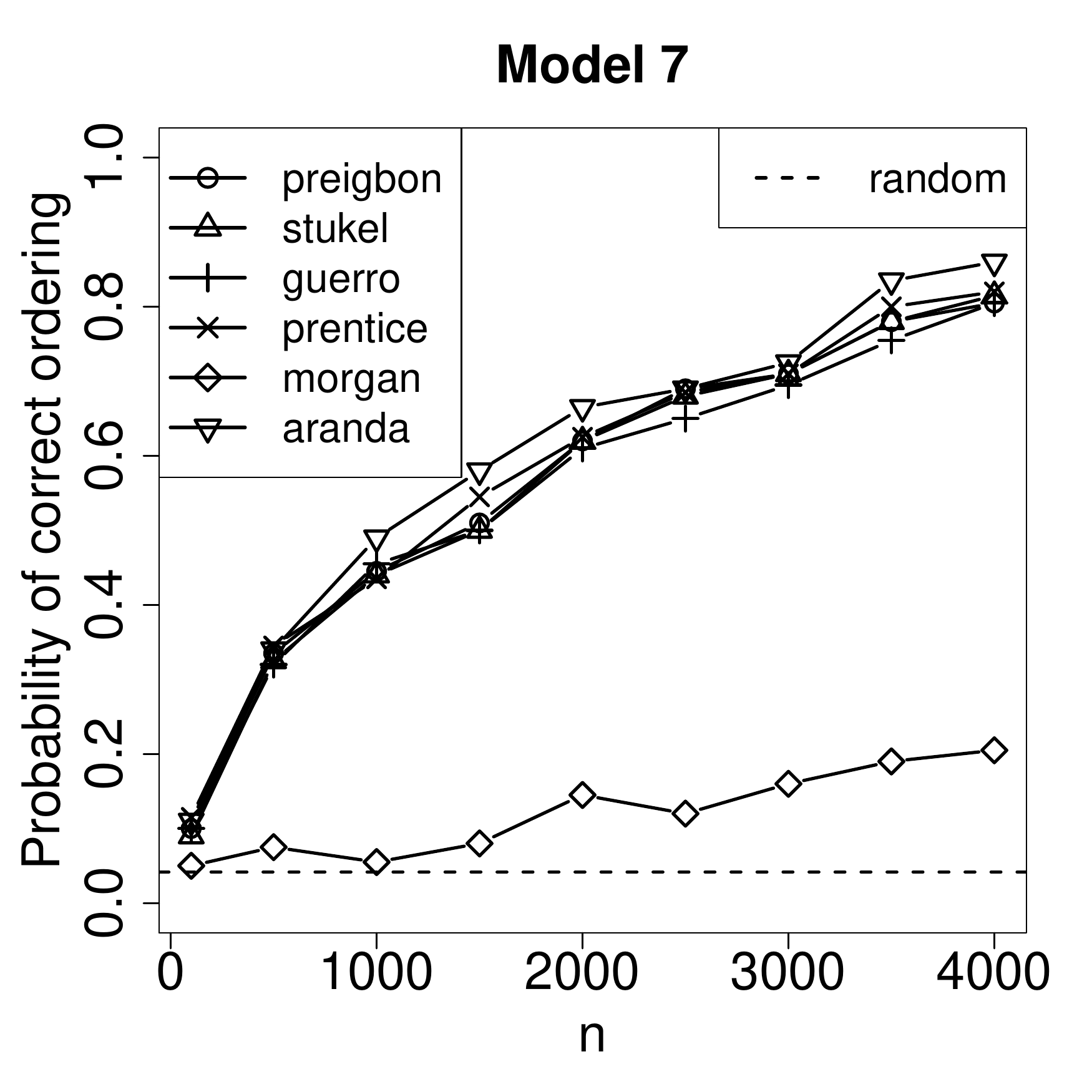} &
\includegraphics[scale=0.28]{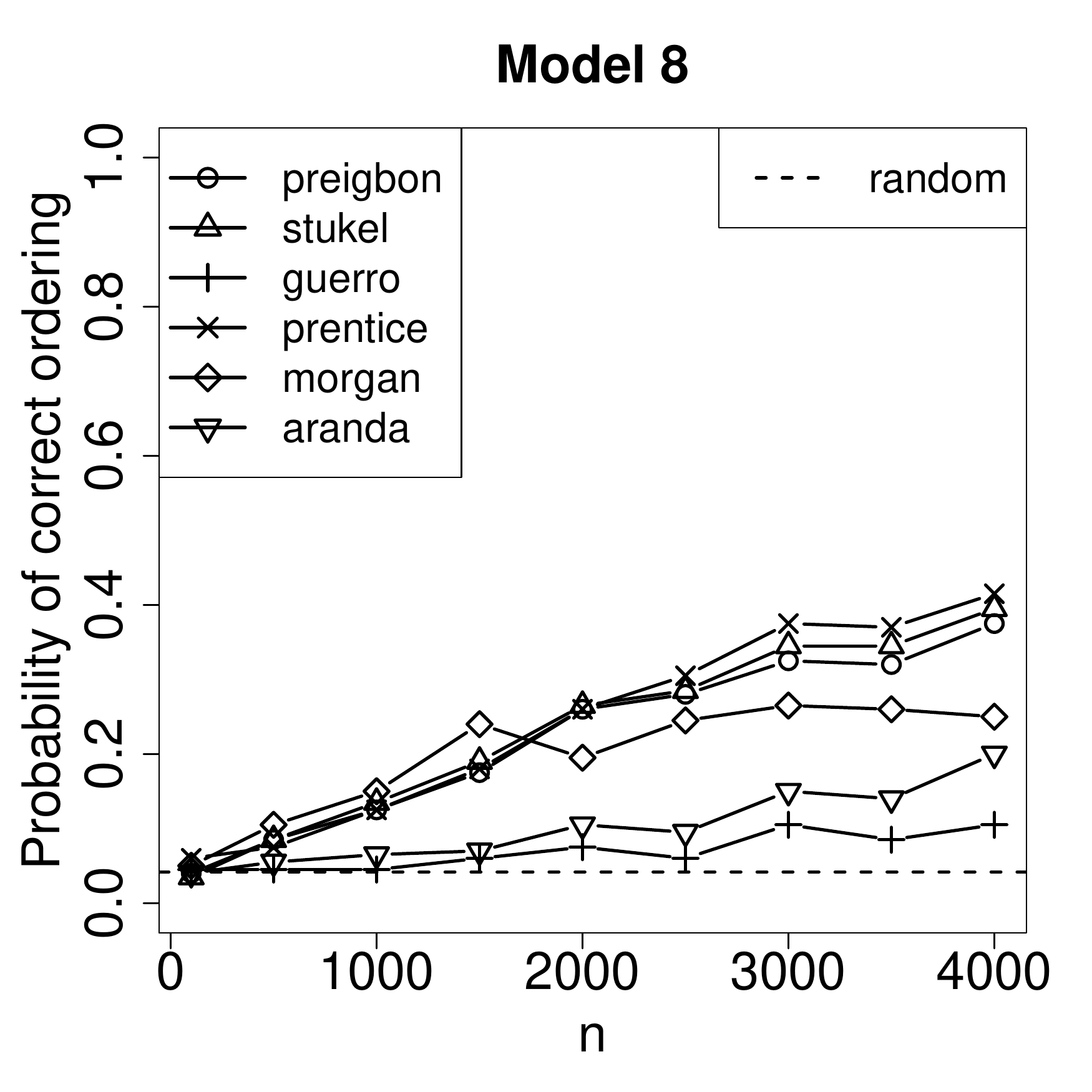} &
\includegraphics[scale=0.28]{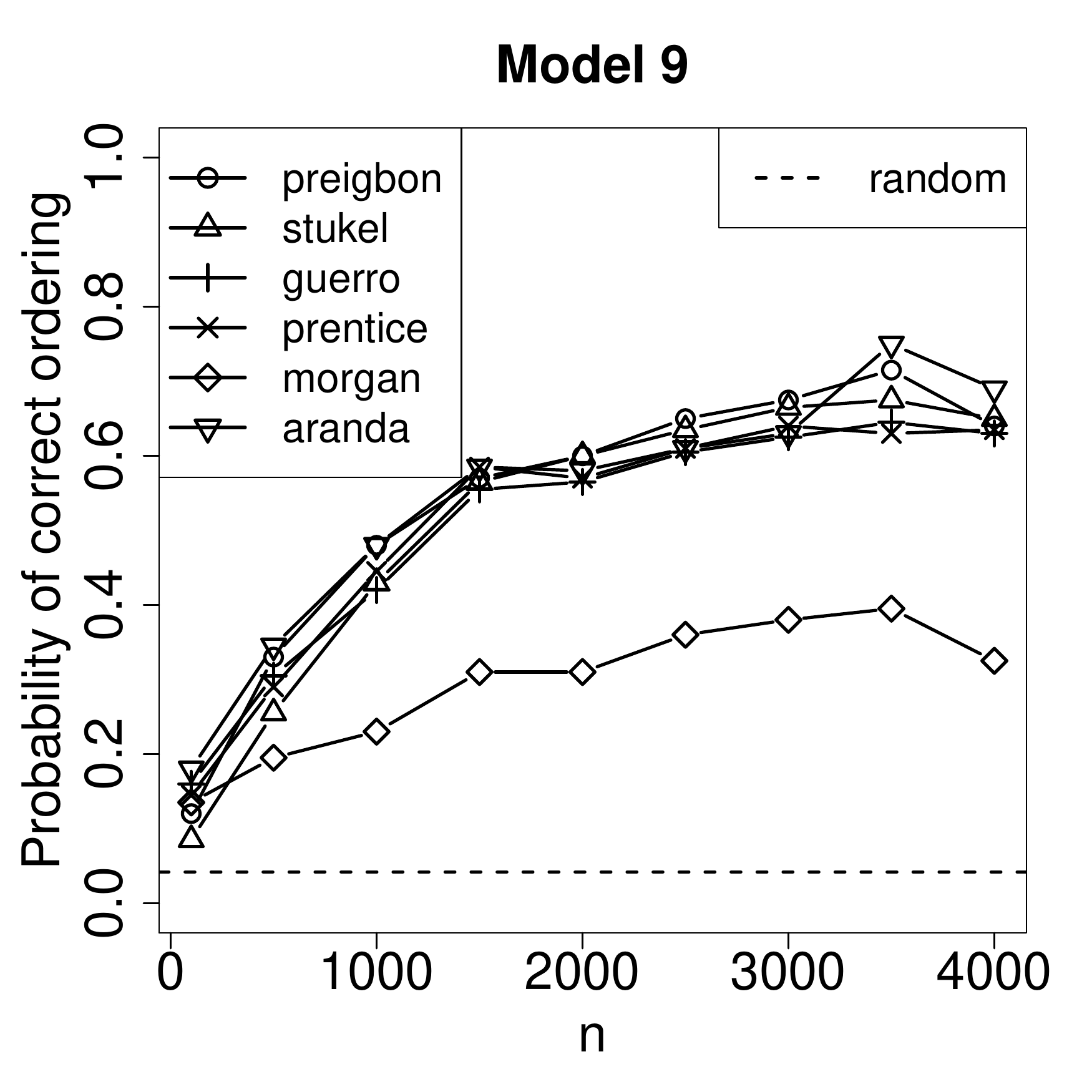} \\
\includegraphics[scale=0.28]{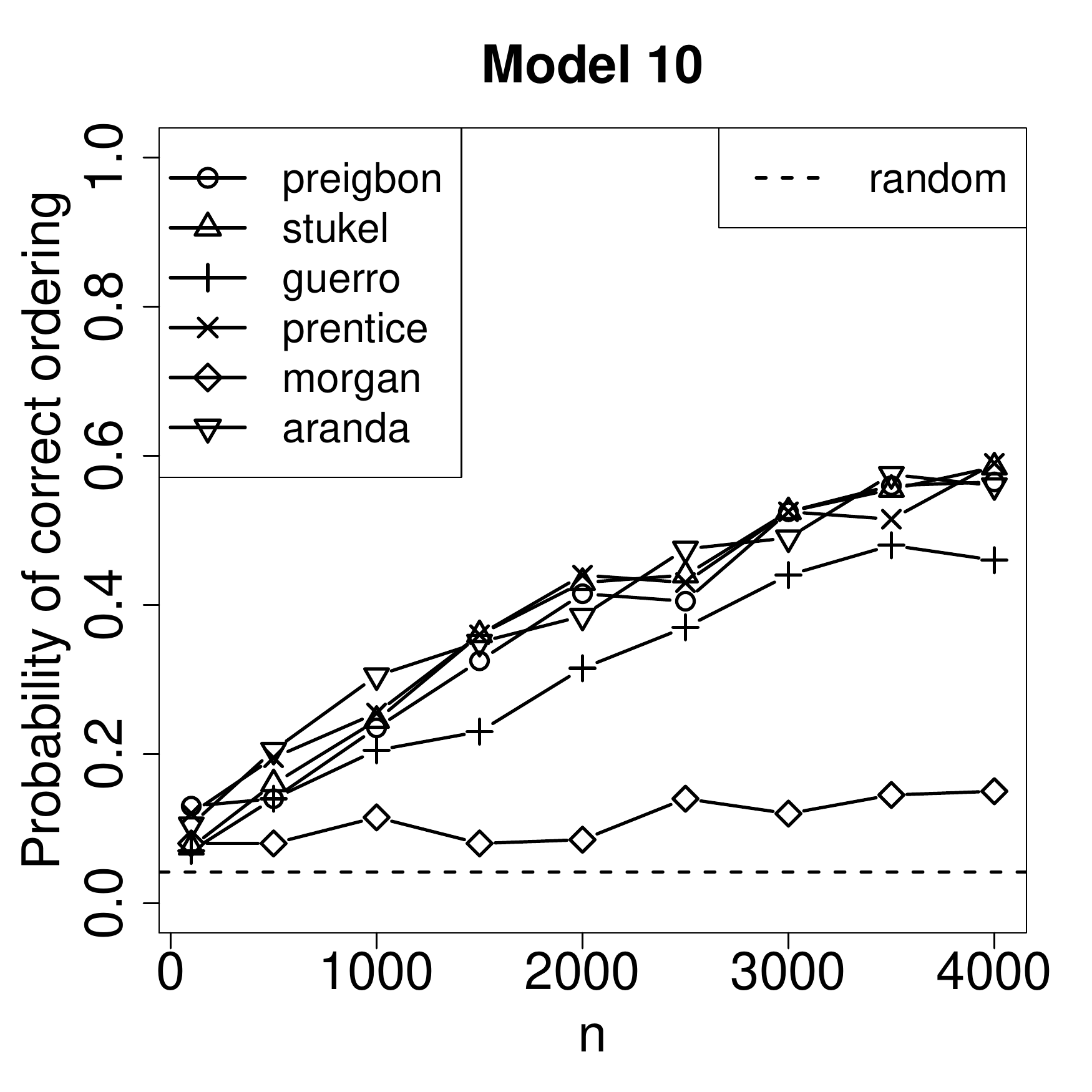} &
\includegraphics[scale=0.28]{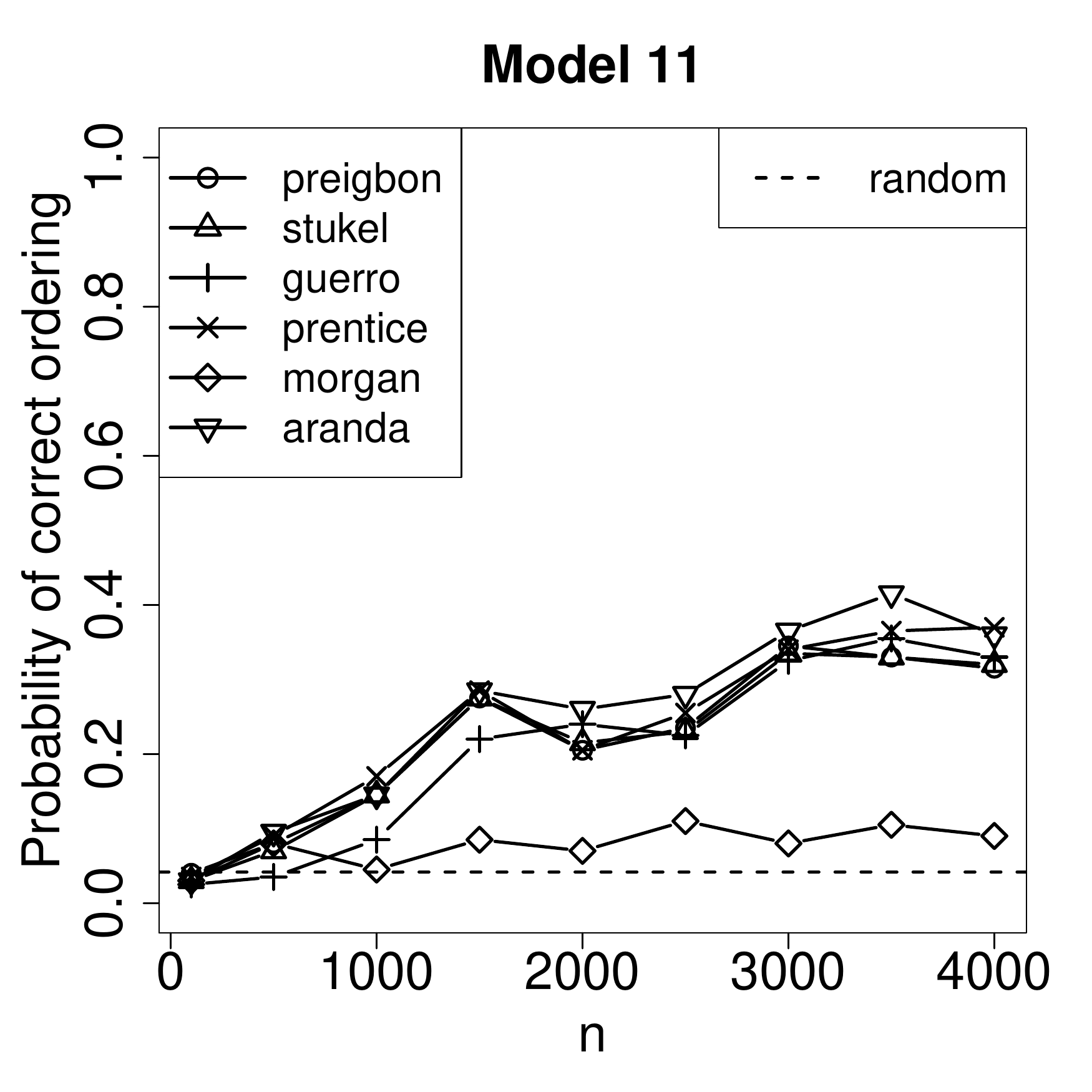} &
\includegraphics[scale=0.28]{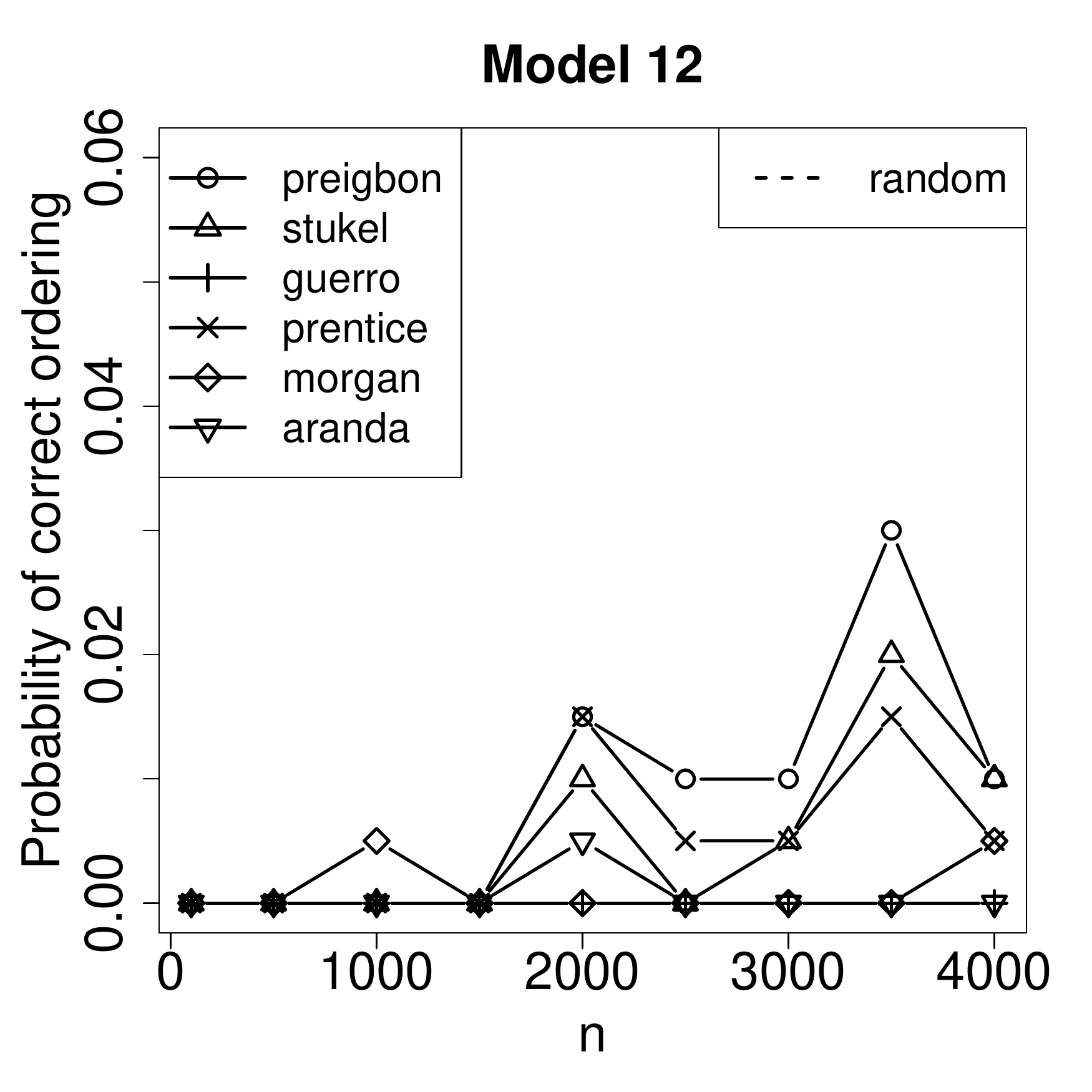} \\
\end{array}$
\end{center}
\caption{Probabilities of finding the correct ordering with respect to the number of training cases. 
The scale for model (M12) is much smaller than for other models.}
\label{fig1}
\end{figure}

\begin{table}
{\scriptsize
\caption{Characteristics of the considered models.}
\label{tab2}
\begin{tabular}{|l|l||l|l|}
\hline
Model & Parameters & Model & Parameters\\
\hline
M1
& 
\begin{tabular}{ll}
  $\theta_1$&$=(0,1)'$ \\
  $\theta_2$&$=(0,1,3)'$
\end{tabular}
&
M7
&
\begin{tabular}{ll}
  $\theta_1$&$=(2, -2, 1)'$ \\
  $\theta_2$&$=(2, -2, 1, 5)' $ \\
  $\theta_3$&$=(2, -2, 1, 5, -5)' $ \\
  $\theta_4$&$=(2, -2, 1, -5, 5, -5)' $ \\
\end{tabular}
\cr
\hline
M2
&
\begin{tabular}{ll}
  $\theta_1$&$=(0, 1)'$ \\
  $\theta_2$&$=(0, 1, 5)' $ 
\end{tabular}
&
M8
&
\begin{tabular}{ll}
  $\theta_1$&$=(2, -2, 1)'$ \\
  $\theta_2$&$=(2, -2, 1, 2)' $ \\
  $\theta_3$&$=(2, -2, 1, 2, -2)' $ \\
  $\theta_4$&$=(2, -2, 1, -2, 2, -2)' $ \\
\end{tabular}
\cr
\hline
M3
&
\begin{tabular}{ll}
  $\theta_1$&$=(2, -2, 1)'$ \\
  $\theta_2$&$=(2, -2, 1, 5)' $ \\
  $\theta_3$&$=(2, -2, 1, 5, -5)' $ \\
  $\theta_4$&$=(2, -2, 1, -5, 5, -5)' $ \\
  $\theta_5$&$=(2, -2, 1, 5, -5, 5, -5)' $ \\
  $\theta_6$&$=(2, -2, 1, 5, -5, 5, -5, 5)' $ \\
\end{tabular}
&
M9
&
\begin{tabular}{ll}
  $\theta_1$&$=(2, -2, 1)'$ \\
  $\theta_2$&$=(2, -2, 1, 10)' $ \\
  $\theta_3$&$=(2, -2, 1, 10, -10 )' $ \\
  $\theta_4$&$=(2, -2, 1, -10, 10, -10)' $ \\
\end{tabular}
\cr
\hline
M4
&
\begin{tabular}{ll}
  $\theta_1$&$=(2, -2, 1)'$ \\
  $\theta_2$&$=(2, -2, 1, 5)' $ \\
  $\theta_3$&$=(2, -2, 1, 5, -5)' $ \\
  $\theta_4$&$=(2, -2, 1, -5, 5, -5)' $ \\
  $\theta_5$&$=(2, -2, 1, 5, -5, 5, -5)' $ \\
\end{tabular}
&
M10
&
\begin{tabular}{ll}
  $\theta_1$&$=(5, -5, 2)'$ \\
  $\theta_2$&$=(5, -5, 2, 5)' $ \\
  $\theta_3$&$=(5, -5, 2, 5, -5)' $ \\
  $\theta_4$&$=(5, -5, 2, -5, 5, -5)' $ \\
\end{tabular}
\cr
\hline
M5
&
\begin{tabular}{ll}
  $\theta_1$&$=\mathbf{a}$ \\
  $\theta_2$&$=(\mathbf{a}', 5)' $ \\
  $\theta_3$&$=(\mathbf{a}', 5, -5)' $ \\
  $\theta_4$&$=(\mathbf{a}', -5, 5, -5)' $ \\
  $\mathbf{a}$&$=(1, -1, 1, -1, 1, -1, 1, -1, 1, -1)'$ 
\end{tabular}
&
M11
&
\begin{tabular}{ll}
$\theta_1$&$=\mathbf{a}$ \\ 
$\theta_2$&$=(\mathbf{a}',  -8)'$ \\ 
$\theta_3$&$=(\mathbf{a}',  1, 3)'$ \\
$\theta_4$&$=(\mathbf{a}', 0.5, 5, 10)'$ \\
$\mathbf{a}$&$=(1, -1, 1, -1, 1, -1, 1, -1, 1, -1)'$ 
\end{tabular}
\cr
\hline
M6
&
\begin{tabular}{ll}
  $\theta_1$&$=(1, -3, 0.5)'$ \\
  $\theta_2$&$=(1.5, -2.5, 1, 5)' $ \\
  $\theta_3$&$=(2, -2, 1.5, 5, -5)' $ \\
  $\theta_4$&$=(2.5, -1.5, 2, -5, ,5 -5)' $ \\
\end{tabular}
&
M12
&
\begin{tabular}{ll}
$\theta_1$&$=\mathbf{a}$ \\  
$\theta_2$&$=(\mathbf{a}', 5)'$ \\ 
$\theta_3$&$=(\mathbf{a}', 5, -5, )'$ \\ 
$\theta_4$&$=(\mathbf{a}', -5, 5, -5 )'$ \\  
$\theta_5$&$=(\mathbf{a}', 5, -5, 5, -5 )'$ \\  
$\theta_6$&$=(\mathbf{a}', 5, -5, 5, -5, 5  )'$ \\ 
$\theta_7$&$=(\mathbf{a}', 5, -5, 5, -5, 5, -5 )'$ \\  
$\theta_8$&$=(\mathbf{a}', 5, -5, 5, -5, 5, -5, 5  )'$ \\ 
$\theta_9$&$=(\mathbf{a}', 5, -5, 5, -5, 5, -5, 5, -5  )'$ \\ 
$\theta_{10}$&$=(\mathbf{a}', 5, -5, 5, -5, 5, -5, 5, -5, 5 )'$ \\ 
$\mathbf{a}$&$=(1, -1, 1, -1, 1, -1, 1, -1, 1, -1)'$ 
\end{tabular}
\cr
\hline
\end{tabular}

}
\end{table}

\subsection{Consistency of the joint mode selection}
In the second experiment, we illustrate the theoretical result from Theorem \ref{Theorem1} concerning the consistency of the joint mode selection.
In this experiment we use the same models (M1)-(M12) as in the previous section. The models are generated as was described previously and additionally we generate test set containing $200$ observations. For each observation in the test set we check whether the correct mode was selected. The whole procedure is repeated $200$ times, which yields the estimates of the probability of correct mode selection. 
As the correct ordering is unknown in practical applications, we use the procedure described in Algorithm \ref{alg1}, combined with Preigbon method.
Figure \ref{fig2} shows probabilities of correct mode selection with respect to the number of observations in the training set for  the correct ordering, selected ordering and wrong ordering (reversed correct order). To assess the direct effect of ordering on the joint mode estimation, we use "exhaustive inference" in the case of all models, except model (M12), for which "greedy inference" was used, due to computational costs. 
First of all, it is seen that the ordering of labels may affect the probability of correct mode selection, although for some models (e.g. (M3) and (M4)) the differences are not significant.
Secondly, it is seen that Algorithm \ref{alg1}, combined with Preigbon method, performs well in practice- the results are very close to those for correct ordering (for models (M1), (M2), (M5) they practically coincide). 
In the case of model (M12), the differences between the selected ordering and the wrong one are significant, although it is difficult to determine the true ordering exactly (compare Figure \ref{fig1}, model (M12)).

\begin{figure}[ht!]
\begin{center}$
\begin{array}{ccc}
\includegraphics[scale=0.28]{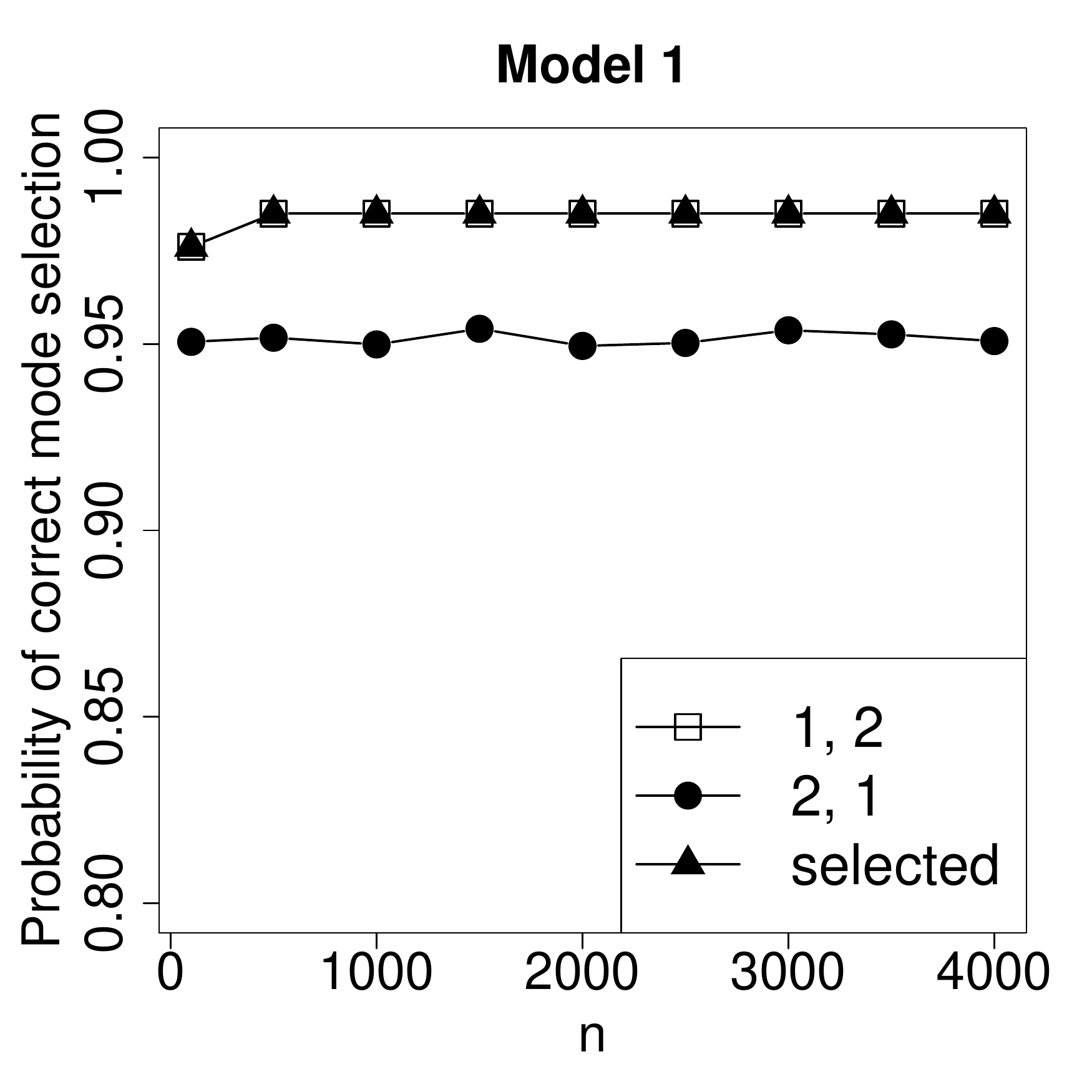} &
\includegraphics[scale=0.28]{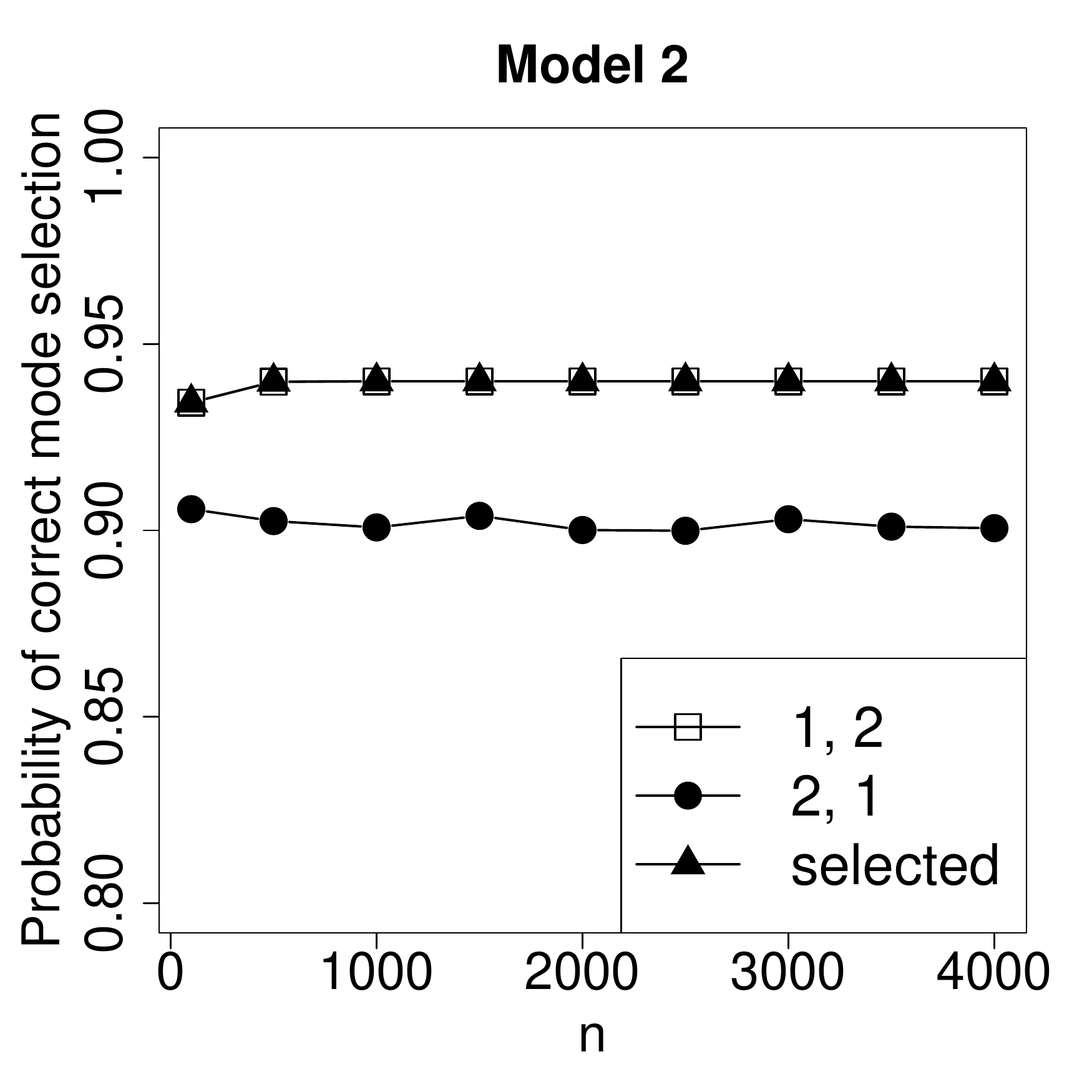} &
\includegraphics[scale=0.28]{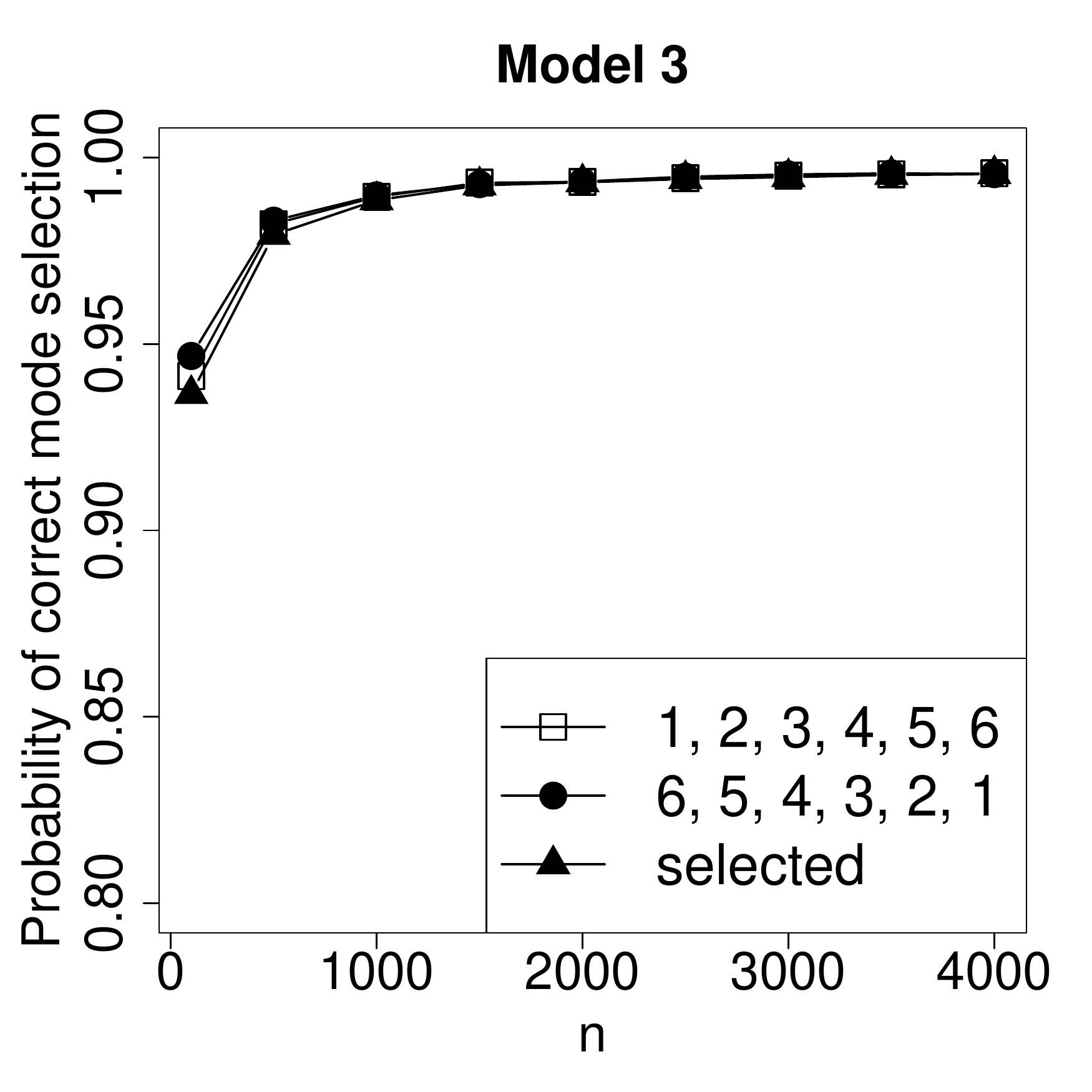} \\
\includegraphics[scale=0.28]{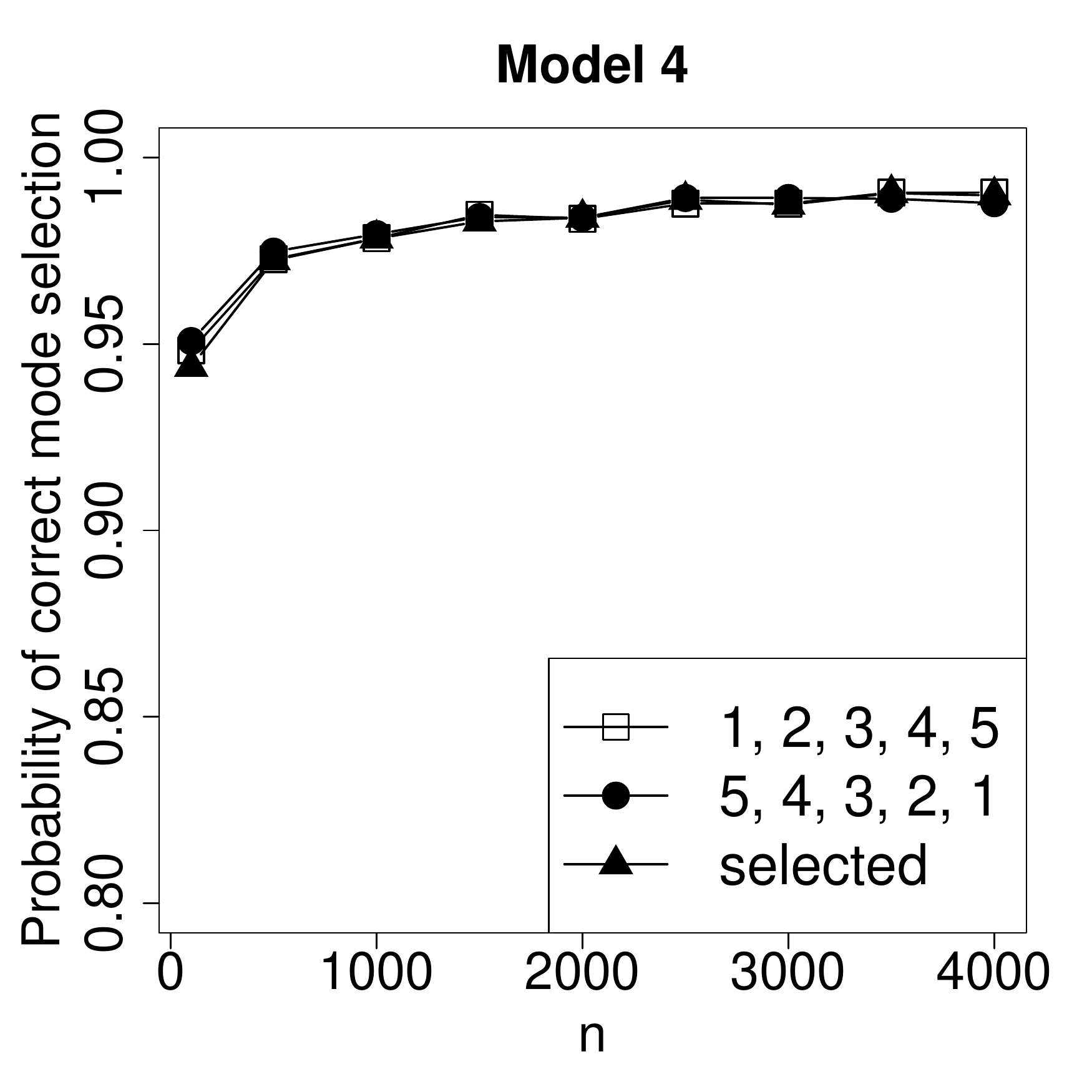} &
\includegraphics[scale=0.28]{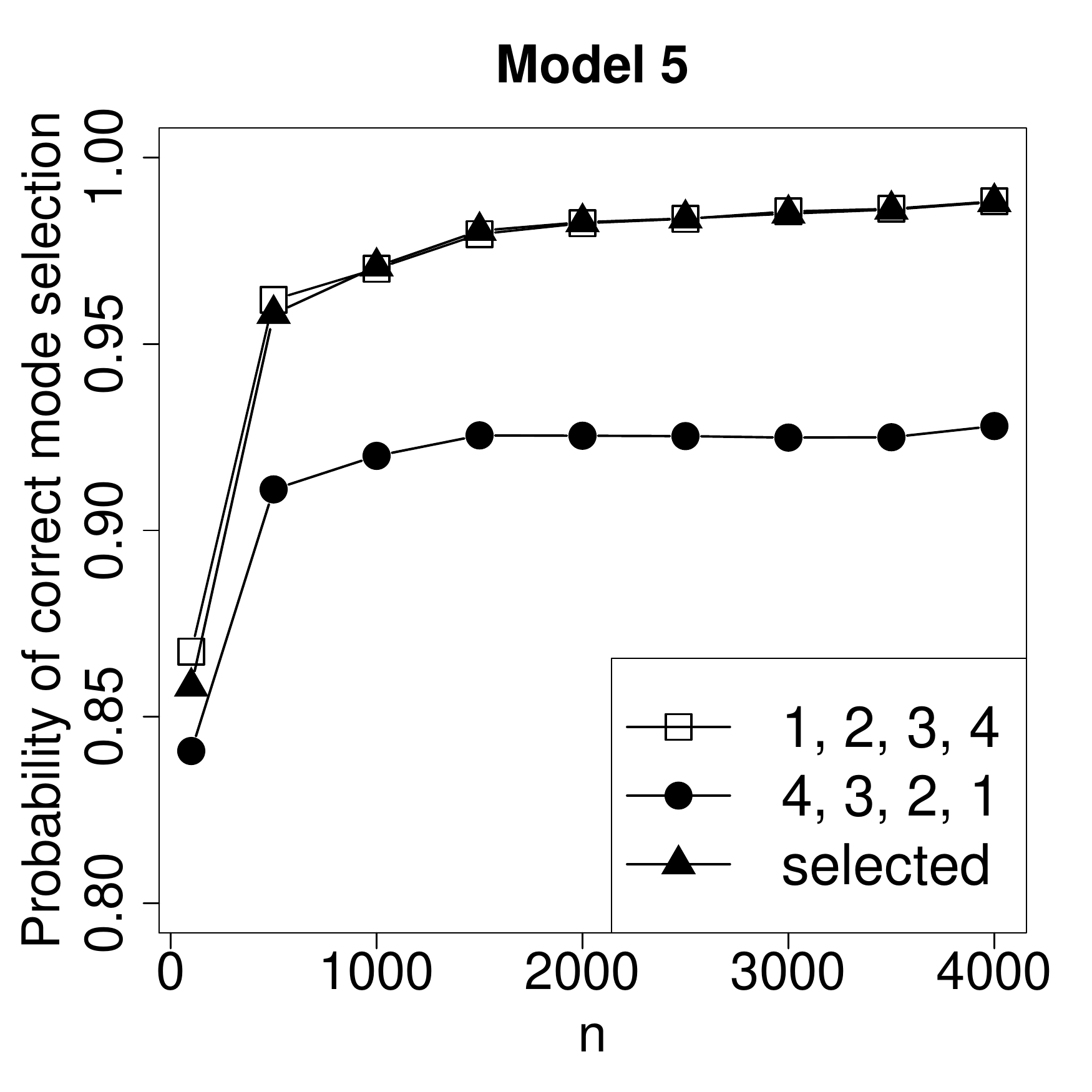} &
\includegraphics[scale=0.28]{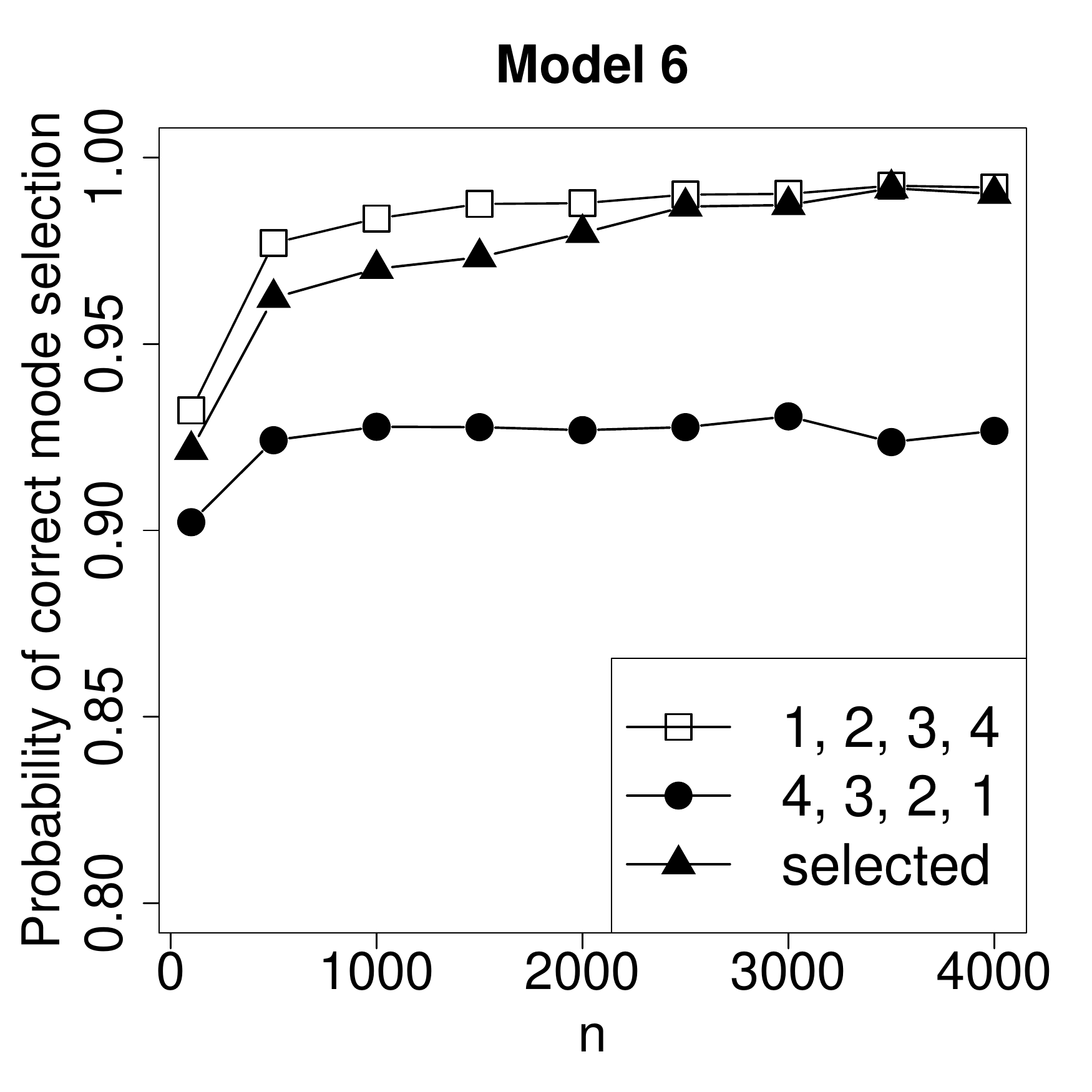} \\
\includegraphics[scale=0.28]{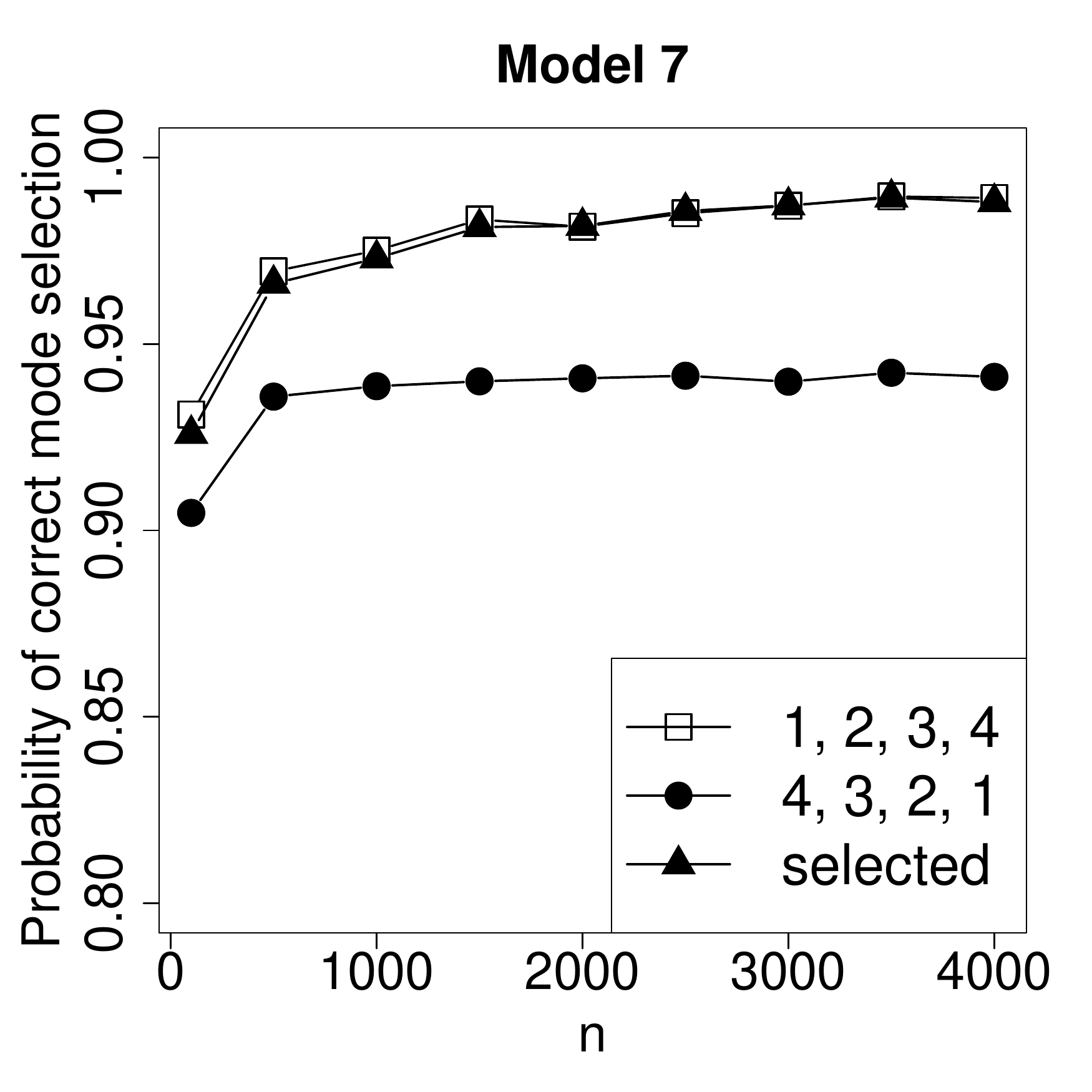} &
\includegraphics[scale=0.28]{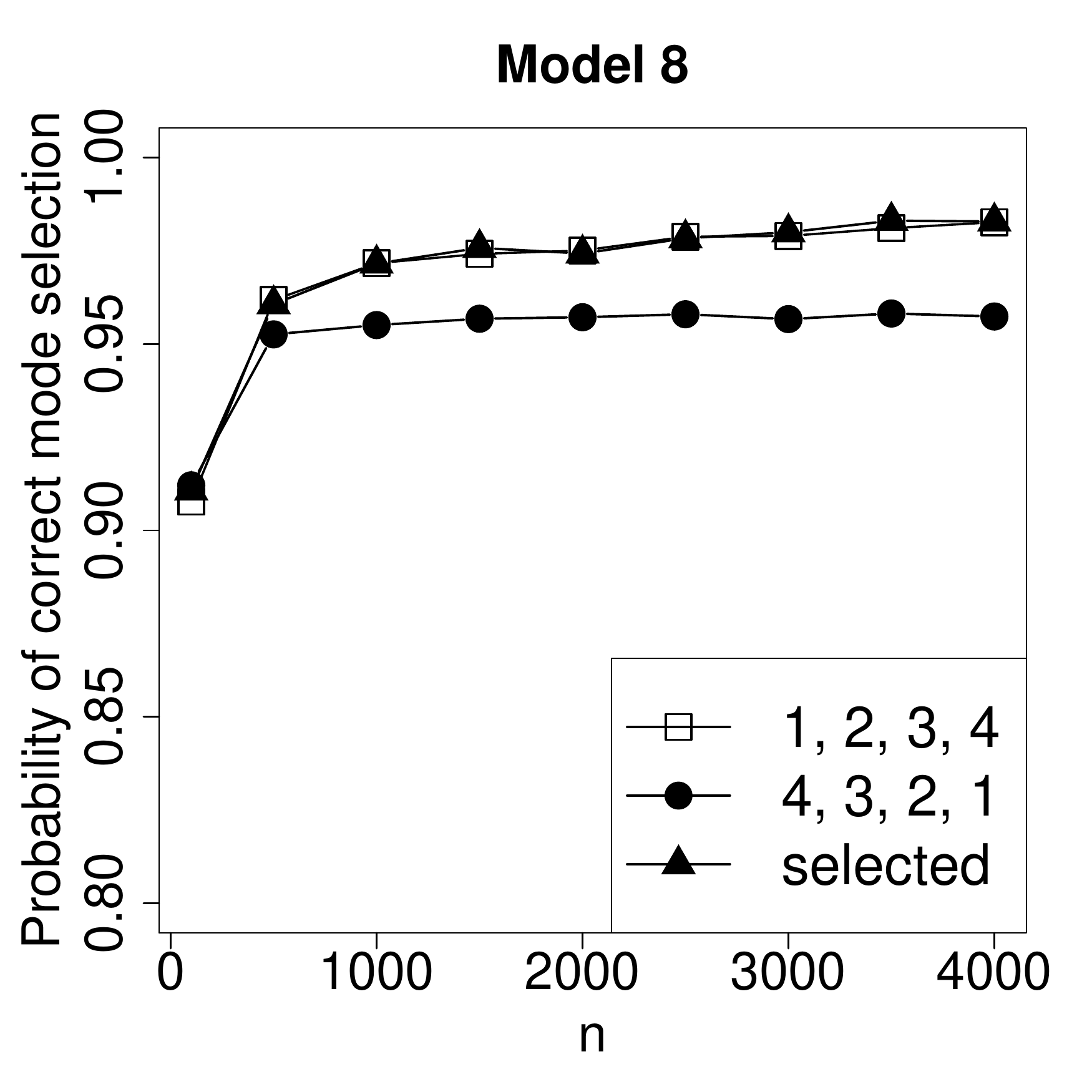} &
\includegraphics[scale=0.28]{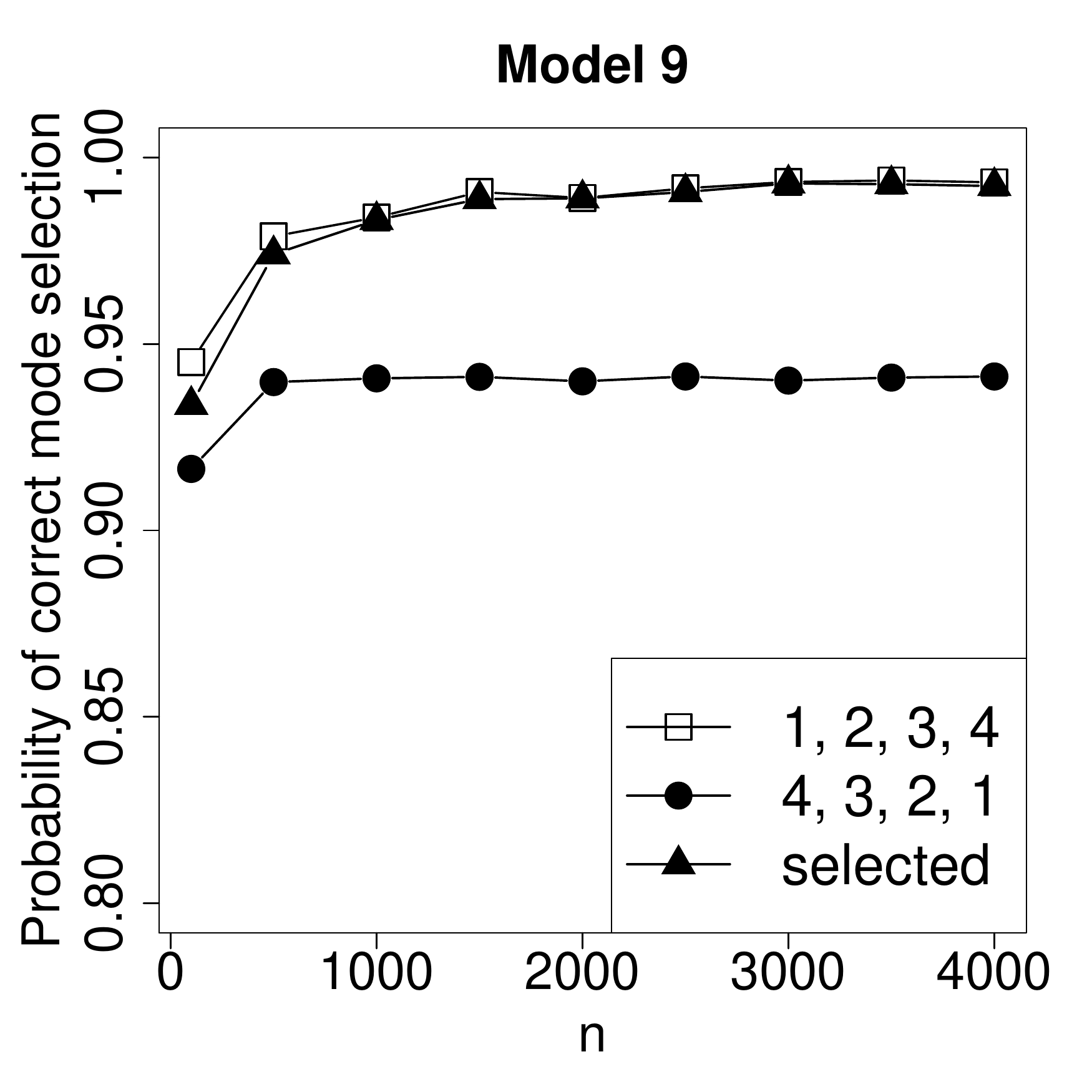} \\
\includegraphics[scale=0.28]{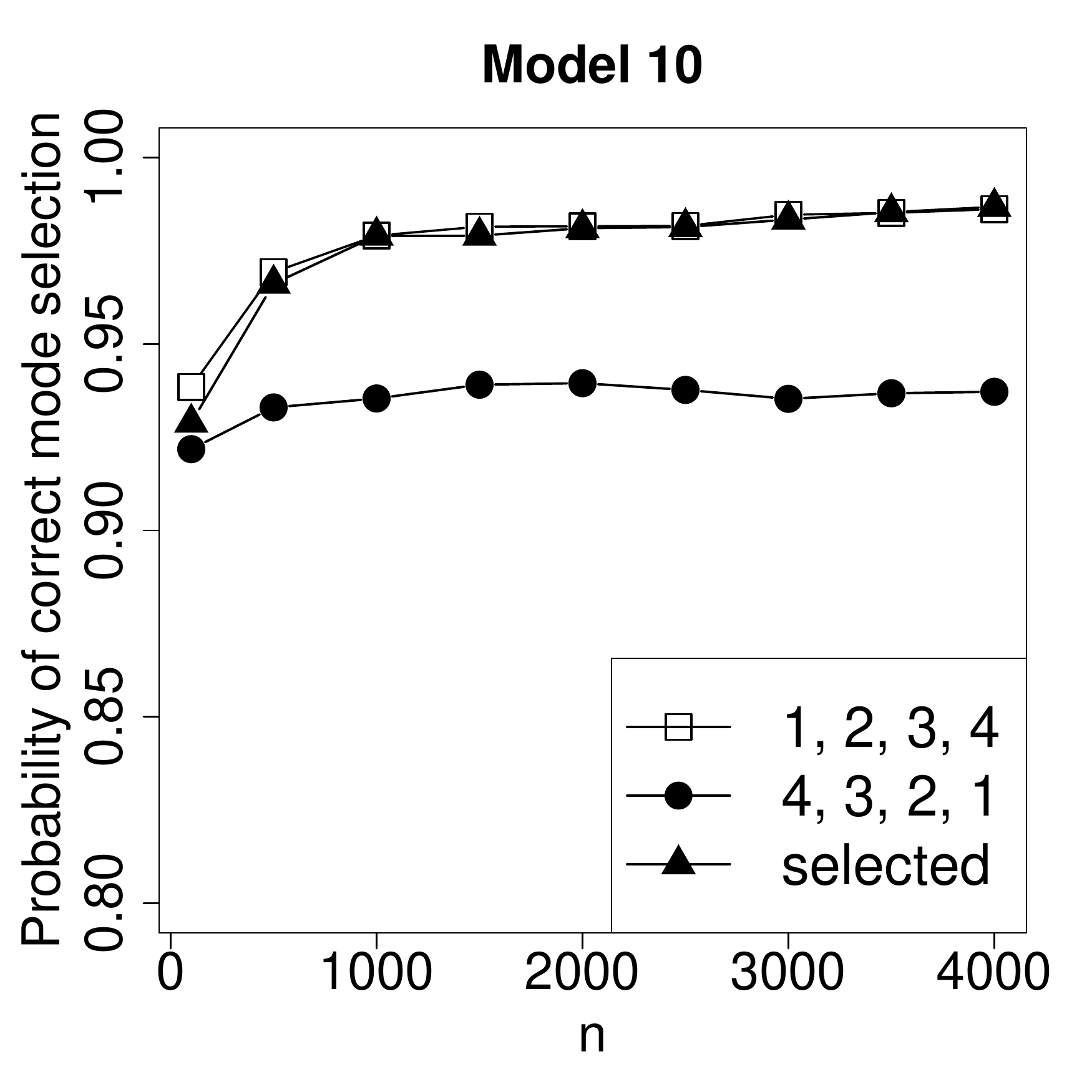} &
\includegraphics[scale=0.28]{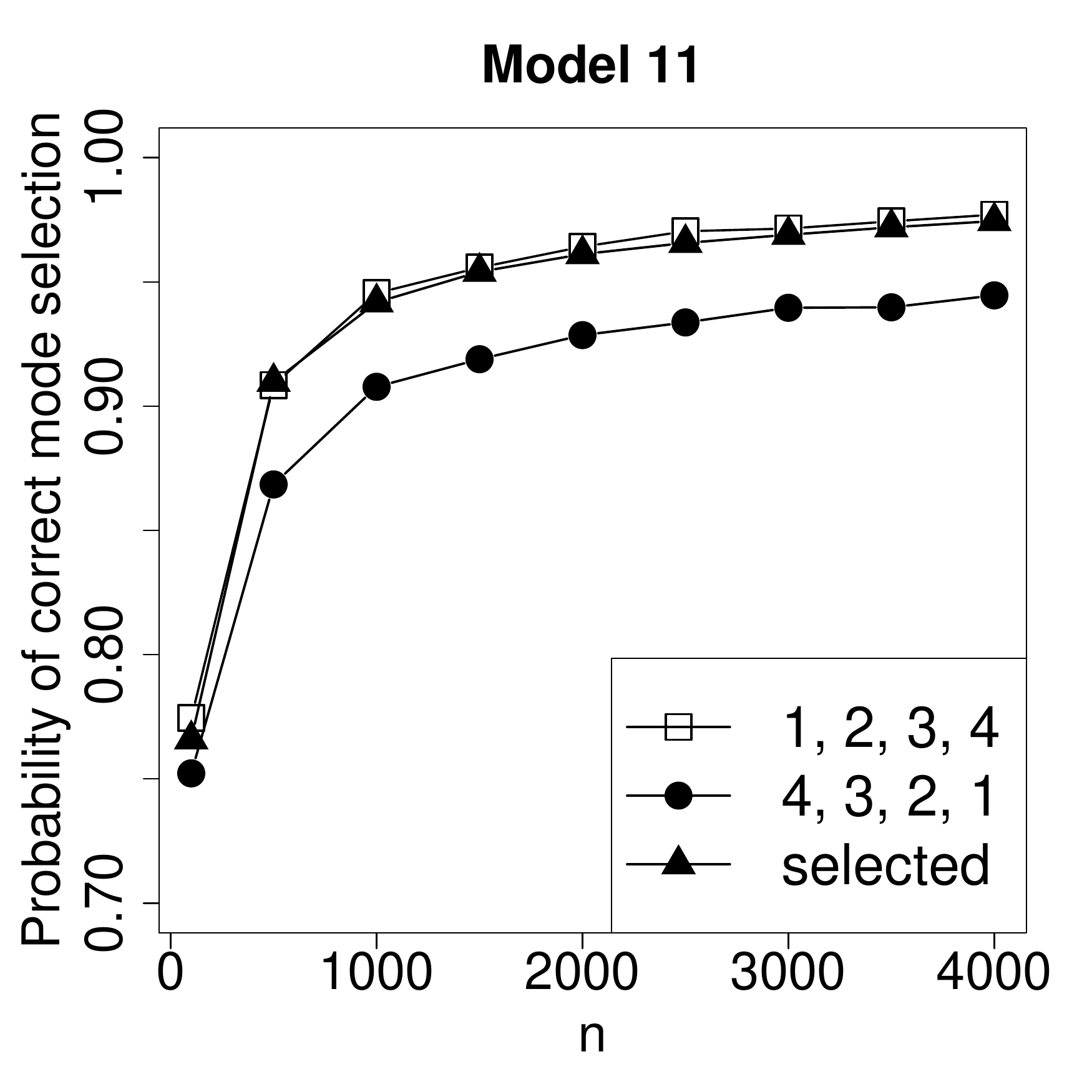} &
\includegraphics[scale=0.28]{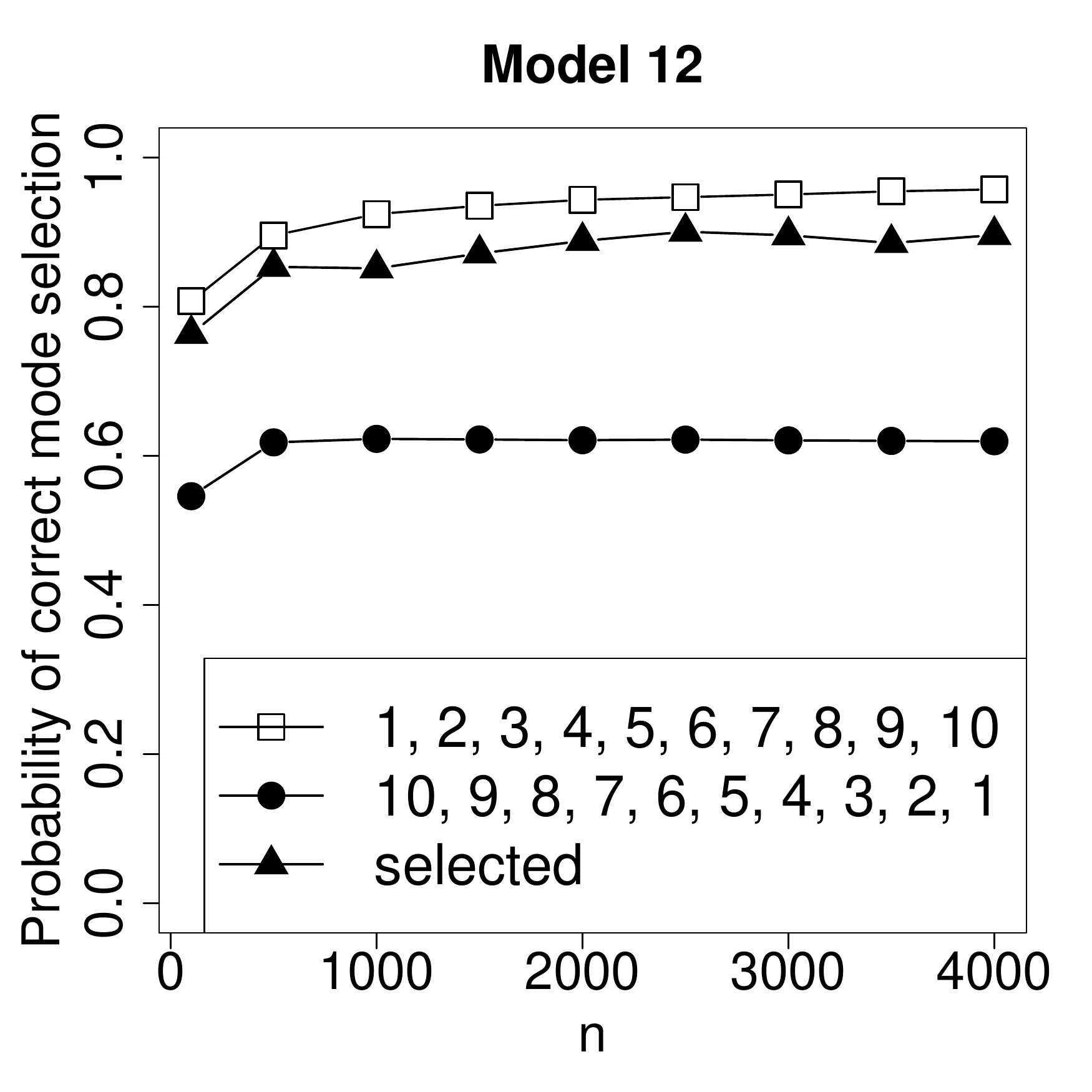}\\
\end{array}$
\end{center}
\caption{Probabilities of correct mode selection with respect to the number of training cases.}
\label{fig2}
\end{figure}

\subsection{Experiments on benchmark data sets}
To investigate the performance of the proposed approach, we also carried out experiments on benchmark datasets.
The datasets are publicly available at \url{http://mulan.sourceforge.net/datasets-mlc.html} (at this website one can also find the detailed description of each data set, as well as the references).
The details of the data sets are summarized in Table \ref{tab3}.
\begin{table}
\caption{Basic statistics for the benchmark datasets.}
\label{tab3}
\begin{tabular}{ccccc}
\hline
Dataset&Domain&$\#$observations&$\#$features&$\#$labels\\
\hline
cal500 	&music& 	502& 	 	68& 	174\\
emotions &	music& 	593& 		72& 	6\\
flags 	&images (toy)& 	194 &	 	19& 	7\\
mediamill 	&video& 	10000& 	 	120& 	101\\
scene& 	images& 	2407& 	 	294& 	6\\
yeast& 	biology& 	2417& 	 	103& 	14 	\\
\hline
\end{tabular}
\end{table}
We compare the following methods.
\begin{itemize}
\item (BR) Binary relevance method, in which we build $K_n$ separate classifiers for each label. Logistic model is used as a single classifier.
\item (CC EX) Classifier chain with the logistic model as a single classifier and "exhaustive inference". The order of fitting models in the chain corresponds to the original ordering of labels in dataset.
\item (CC PREIGBON EX) Classifier chain with the logistic model as a single classifier and "exhaustive inference". The order of fitting models in the chain is determined using Algorithm \ref{alg1} combined with Preigbon method.
\item (CC GR) Classifier chain with the logistic model as a single classifier and "greedy inference". The order of fitting models in the chain corresponds to the original ordering of labels in dataset.
\item (CC PREIGBON GR) Classifier chain with the logistic model as a single classifier and "greedy inference". The order of fitting models in the chain is determined using Algorithm \ref{alg1} combined with Preigbon method.
\end{itemize}
All above methods have been implemented in \texttt{R} system (\cite{Rsystem}), for the purpose of this paper.
The common part of all considered methods is a logistic model.
To reduce the variance of the estimators, we used regularized versions of maximum likelihood estimators, with $l_2$ penalty and small value of  penalty parameter $\lambda=0.001$. 
Let $\y$ be the vector of true labels and $\hat{\y}$ be the predicted vector of labels for given test instance.
We consider the following evaluation measures
\begin{equation*}
\textrm{Hamming}(\y,\hat{\y})=\frac{1}{K_n}\sum_{k=1}^{K_n}\I[y_k=\hat{y}_k],
\end{equation*}
\begin{equation*}
\textrm{Subset accuracy}(\y,\hat{\y})=\I[\y=\hat{\y}],
\end{equation*}
\begin{equation*}
\textrm{Recall}(\y,\hat{\y})=\frac{|\{k:y_k=1,\hat{y}_k=1\}|}{|\{k:y_k=1\}|},
\end{equation*}
\begin{equation*}
\textrm{Precision}(\y,\hat{\y})=\frac{|\{k:y_k=1,\hat{y}_k=1\}|}{|\{k:\hat{y}_k=1\}|},
\end{equation*}
\begin{equation*}
\textrm{F measure}(\y,\hat{\y})=2\cdot\frac{\textrm{Precision}(\y,\hat{\y})\cdot\textrm{Recall}(\y,\hat{\y})}{\textrm{Precision}(\y,\hat{\y})+\textrm{Precision}(\y,\hat{\y})}.
\end{equation*} 
The subset accuracy is maximized by the methods which allow to find the mode of the joint distribution (e.g. classifier chains). On the other hand, the Hamming measure is maximized by the methods which find the marginal modes (BR).
The marginal and joint modes coincide when the labels are conditionally independent given feature vector $\x$, but this is usually not the case.
For further discussion on different evaluation measures and loss functions see in \cite{Dembczynskietal2012}.
Since in many multi-label problems we are interested in predicting the presence of certain properties of the objects (the presence of $k$-th property is usually coded as $y_k=1$), it is worthwhile to consider additional measures: Recall, Precision and F-measure.
For example, in text categorization, $y_k=1$ indicates that the given text has been assigned to $k$-th topic.
 Recall indicates how many labels are correctly predicted as $1$ among those equal $1$, precision shows how many labels are correctly predicted as $1$ among those predicted as $1$ and F-measure is a harmonic mean between recall and precision.  
 
To assess the considered methods, we apply $5$-fold cross-validation. In each cross-validation loop, the models are built on the four, train folds and the prediction is made on the remaining, test fold. The above evaluation measures are averaged over observations in the test fold. Finally, the reported results are averaged over all cross-validation folds. 

We performed two experiments. In the first one, we consider datasets with small or moderate number of labels: emotions, flags and scene. For the remaining datasets (yeast, mediamill, cal500) we have limited the number of labels to 10 in such a way that we keep the most frequent labels (and remove all instances having only relevant or only irrelevant labels). This approach was also used in \cite{Dembczynskietal2010} to test PCC method.
The limited number of labels allows to use "exhaustive inference". Thus we eliminate the effect associated with the type of prediction method and we can directly assess how the ordering of labels affects the results.  The results for the first experiment are shown in Tables \ref{emotions}-\ref{mediamill}. The bold font corresponds to the maximal value in the given column.
It is notable that, application of Algorithm \ref{alg1}, combined with Preigbon method, improves the subset accuracy, relative to original ordering of labels for $5$ out  of $6$ datasets.
This occurs in both cases: for exhaustive and greedy inferences.
 Moreover, in the majority of cases, CC PREIGBON EX outperforms other methods with respect to all measures except Hamming measure. 
The "exhaustive inference" outperforms "greedy inference" which is obvious as in the latter case we explore the limited number of possible labellings.
  We have the highest values of Hamming measure for BR method, which is consistent with observations of other authors and also confirms the theoretical results concerning Hamming measure, given in \cite{Dembczynskietal2012}.

The second experiment was performed for datasets: yeast, mediamill, cal500, with the original number of labels (which varies from $14$ to $174$). For this setting, we only used 3 methods: BR, CC GR and CC GR PREIGBON, as other methods are infeasible for such a large number of labels. 
The results for the second experiment are shown in Tables \ref{yeastall}-\ref{mediamillall}.    
It is seen that also in this case the application of Algorithm \ref{alg1}, combined with Preigbon method, improves the subset accuracy and recall for all three datasets and improves the F-measure for two datasets.
\begin{table}[ht!]
\footnotesize
\centering
\caption{Results for emotions data set.} 
\label{emotions}
\begin{tabular}{llllll}
  \hline
method & Hamming & Subset accuracy & Recall & Precision & F-measure \\ 
  \hline
BR & {\bf 0.7917} (0.0062) & 0.2478 (0.0092) & 0.6276 (0.015) & 0.6456 (0.0174) & 0.6027 (0.0125) \\ 
  CC EX & 0.7791 (0.0031) & 0.2917 (0.011) & 0.6619 (0.0124) & 0.6416 (0.0106) & 0.6247 (0.0084) \\ 
  CC PREIGBON EX & 0.7889 (0.0036) & {\bf 0.3052} (0.0083) & {\bf 0.6844} (0.0128) & {\bf 0.6627} (0.0103) & {\bf 0.6448} (0.0072) \\ 
  CC GR & 0.776 (0.0051) & 0.2731 (0.0094) & 0.6431 (0.0164) & 0.6405 (0.015) & 0.6126 (0.0139) \\ 
  CC PREIGBON GR & 0.7839 (0.0051) & 0.29 (0.0099) & 0.6661 (0.0148) & 0.6531 (0.0104) & 0.6303 (0.0108) \\ 
   \hline
\end{tabular}
\end{table}
\begin{table}[ht!]
\footnotesize
\centering
\caption{Results for scene data set.} 
\label{scene}
\begin{tabular}{llllll}
  \hline
method & Hamming & Subset accuracy & Recall & Precision & F-measure \\ 
  \hline
BR & {\bf 0.8948} (9e-04) & 0.5272 (0.0055) & 0.6597 (0.0032) & 0.6217 (0.0043) & 0.6279 (0.0033) \\ 
  CC EX & 0.8912 (0.0027) & 0.6315 (0.0056) & 0.7035 (0.008) & 0.706 (0.0085) & 0.6967 (0.0078) \\ 
  CC PREIGBON EX & 0.8945 (0.0023) & {\bf 0.6386} (0.0069) & {\bf 0.7145} (0.006) & {\bf 0.7121} (0.0069) & {\bf 0.7051} (0.0063) \\ 
  CC GR & 0.8905 (0.0032) & 0.6278 (0.0079) & 0.701 (0.0104) & 0.7021 (0.0102) & 0.6934 (0.0098) \\ 
  CC PREIGBON GR & 0.8922 (0.0028) & 0.6298 (0.0085) & 0.7062 (0.0078) & 0.7034 (0.0089) & 0.6966 (0.0082) \\ 
   \hline
\end{tabular}
\end{table}
\begin{table}[ht!]
\footnotesize
\centering
\caption{Results for flags data set.} 
\label{flags}
\begin{tabular}{llllll}
  \hline
method & Hamming & Subset accuracy & Recall & Precision & F-measure \\ 
  \hline
BR & {\bf 0.7297} (0.0124) & 0.139 (0.0098) & 0.6757 (0.0191) &  {\bf 0.695} (0.0124) & 0.6723 (0.0117) \\ 
  CC EX & 0.7173 (0.0126) & 0.237 (0.0316) & {\bf 0.6785} (0.0154) & 0.6733 (0.0124) & 0.6736 (0.0134) \\ 
  CC PREIGBON EX & 0.7179 (0.0056) & {\bf 0.2625} (0.029) & 0.6749 (0.0138) & 0.6774 (0.0096) & {\bf 0.6741} (0.0111) \\ 
  CC GR & 0.6982 (0.0118) & 0.2012 (0.0156) & 0.6559 (0.0199) & 0.6535 (0.0141) & 0.652 (0.0168) \\ 
  CC PREIGBON GR & 0.7144 (0.0145) & 0.2421 (0.0408) & 0.6733 (0.0166) & 0.6771 (0.0088) & 0.673 (0.0131) \\ 
   \hline
\end{tabular}
\end{table}
\begin{table}[ht!]
\footnotesize
\centering
\caption{Results for yeast data set (top $10$ labels).} 
\label{yeast}
\begin{tabular}{llllll}
  \hline
method & Hamming & Subset accuracy & Recall & Precision & F-measure \\ 
  \hline
BR & {\bf 0.7461} (0.004) & 0.1589 (0.0077) & 0.6175 (0.0042) & {\bf 0.6972} (0.0076) & 0.626 (0.0052) \\ 
  CC EX & 0.7352 (0.0044) & {\bf 0.2453} (0.008) & {\bf 0.6572} (0.0074) & 0.6578 (0.0066) & {\bf 0.6345} (0.0069) \\ 
  CC PREIGBON EX & 0.7328 (0.0061) & 0.2379 (0.0098) & 0.6499 (0.0095) & 0.661 (0.0077) & 0.631 (0.0076) \\ 
  CC GR & 0.7255 (0.0047) & 0.2214 (0.0057) & 0.6208 (0.0063) & 0.6428 (0.0083) & 0.6056 (0.007) \\ 
  CC PREIGBON GR & 0.7313 (0.006) & 0.2143 (0.0099) & 0.6343 (0.0094) & 0.6601 (0.0086) & 0.6202 (0.0072) \\ 
   \hline
\end{tabular}
\end{table}
\begin{table}[ht!]
\footnotesize
\centering
\caption{Results for cal500 data set (top $10$ labels).} 
\label{cal500}
\begin{tabular}{llllll}
  \hline
method & Hamming & Subset accuracy & Recall & Precision & F-measure \\ 
  \hline
BR & 0.6052 (0.0075) & 0.004 (0.0024) & 0.7193 (0.0075) & 0.6626 (0.0081) & 0.6713 (0.007) \\ 
  CC EX & 0.6016 (0.0087) & 0.0159 (0.0051) & 0.7065 (0.0115) & 0.661 (0.0087) & 0.6649 (0.0091) \\ 
  CC PREIGBON EX & 0.6044 (0.0085) & {\bf 0.016} (0.0051) & 0.7121 (0.0106) & 0.6631 (0.0085) & 0.6679 (0.0089) \\ 
  CC GR & 0.5954 (0.0091) & 0.0139 (0.0074) & 0.6998 (0.009) & 0.6552 (0.0098) & 0.6579 (0.0076) \\ 
  CC PREIGBON GR & 0.608 (0.0083) & 0.01 (0.0055) & {\bf 0.7141} (0.0082) & {\bf 0.6679} (0.0096) & 0.6722 (0.0075) \\ 
   \hline
\end{tabular}
\end{table}
\begin{table}[ht!]
\footnotesize
\centering
\caption{Results for mediamill data set (top $10$ labels).} 
\label{mediamill}
\begin{tabular}{llllll}
  \hline
method & Hamming & Subset accuracy & Recall & Precision & F-measure \\ 
  \hline
BR & {\bf 0.8277} (0.0013) & 0.1642 (0.0052) & 0.5844 (0.0035) & {\bf 0.7456} (0.0012) & {\bf 0.6246} (0.0026) \\ 
  CC EX & 0.8165 (0.0023) & 0.2152 (0.0058) & 0.6049 (0.0066) & 0.6836 (0.0049) & 0.6094 (0.0054) \\ 
  CC PREIGBON EX & 0.8186 (0.0021) & {\bf 0.2168} (0.007) & 0.6033 (0.0075) & 0.6873 (0.0035) & 0.6103 (0.0057) \\ 
  CC GR & 0.8199 (9e-04) & 0.2046 (0.004) & 0.6007 (0.005) & 0.703 (0.0017) & 0.6143 (0.0034) \\ 
  CC PREIGBON GR & 0.8211 (0.0025) & 0.2098 (0.0048) & 0.5772 (0.0083) & 0.7087 (0.0052) & 0.6032 (0.0055) \\ 
   \hline
\end{tabular}
\end{table}
\begin{table}[ht!]
\footnotesize
\centering
\caption{Results for yeast data set (all labels).} 
\label{yeastall}
\begin{tabular}{llllll}
  \hline
method & Hamming & Subset accuracy & Recall & Precision & F-measure \\ 
  \hline
BR & {\bf 0.7959} (0.0032) & 0.1411 (0.0061) & 0.5871 (0.0052) & {\bf 0.6963} (0.0075) & {\bf 0.6083} (0.0051) \\ 
  CC GR & 0.7708 (0.0102) & 0.175 (0.0265) & 0.6033 (0.0091) & 0.6221 (0.0156) & 0.5867 (0.0059) \\ 
  CC PREIGBON GR & 0.7808 (0.0044) & {\bf 0.1812} (0.0052) & {\bf 0.6059} (0.0117) & 0.6526 (0.007) & 0.6017 (0.0077) \\ 
   \hline
\end{tabular}
\end{table}
\begin{table}[ht!]
\footnotesize
\centering
\caption{Results for cal500 data set (all labels).} 
\label{call500all}
\begin{tabular}{llllll}
  \hline
method & Hamming & Subset accuracy & Recall & Precision & F-measure \\ 
  \hline
BR &  0.7676 (0.0037) & 0 (0) & 0.3339 (0.008) & 0.2719 (0.0045) & 0.2937 (0.0033) \\ 
  CC GR & {\bf 0.7825} (0.0042) & 0 (0) & 0.3431 (0.0109) & {\bf 0.3017} (0.0087) & {\bf 0.3146} (0.0082) \\ 
  CC PREIGBON GR & 0.7773 (0.0074) & 0 (0) & {\bf 0.3454} (0.0131) & 0.295 (0.0088) & 0.3109 (0.0048) \\ 
   \hline
\end{tabular}
\end{table}
\begin{table}[ht!]
\footnotesize
\centering
\caption{Results for mediamill data set (all labels).} 
\label{mediamillall}
\begin{tabular}{llllll}
  \hline
method & Hamming & Subset accuracy & Recall & Precision & F-measure \\ 
  \hline
BR & {\bf 0.9441} (0.0046) & 0 (0) & 0.4538 (0.0045) & {\bf 0.3749} (0.0404) & {\bf 0.3837} (0.0205) \\ 
  CC GR & 0.8989 (0.0026) & 0 (0) & 0.4632 (0.0097) & 0.2175 (0.0063) & 0.2769 (0.0043) \\ 
  CC PREIGBON GR & 0.9118 (0.0068) & {\bf 0.0028} (0.0016) & {\bf 0.4887} (0.0128) & 0.2744 (0.0276) & 0.3191 (0.0182) \\ 
   \hline
\end{tabular}
\end{table}

\section{Conclusions}
\label{Conclusions}
In this paper we studied the large sample properties of the logistic classifier chain (LCC) model.
We found the conditions under which the estimated joint mode is consistent. In particular it follows from Theorem 1 that the number of labels should be relatively small compared with the number of training examples. Secondly, we have shown that when the ordering of building models is incorrect, the consecutive probabilities in the chain are approximated by logistic models as closely as possible with respect to Kullback-Leibler distance. We have shown that the ordering of labels may affect probability of joint mode selection significantly.
The proposed procedure of ordering the labels, based on measures of correct specification, performs well in practice, which is confirmed by experiments on artificial and real datasets.

\section*{Acknowledgements}
Research of Pawe{\l} Teisseyre  was supported by the European Union from resources of the European Social Fund within project 'Information technologies: research and their interdisciplinary applications' POKL.04.01.01-00-051/10-00. 
 
\newpage
\section{Proofs}
\label{Proofs}
\subsection{Proof of Theorem \ref{Theorem1}}
\label{Proof of Theorem1}
We present the proof for $K_n\to\infty$; the proof for the constant $K_n$ goes along the same lines, but requires some simple modifications.
For simplicity, we will write $l_k$, $s_k$ and $H_k$ instead of  $l_{y_k,\z_k}$, $s_{y_k,\z_k}$, $H_{y_k,\z_k}$, respectively.
We will show that $P[\yh\neq \ys]\to 0$, as $n\to\infty$. 
Recall that $\z_k=(\x',\y_{1:(k-1)}')'$.
Using Lemmas \ref{Lemma1} and \ref{Lemma2} we have
\begin{eqnarray}
\label{T1_eq1}
&&
P[\yh\neq \ys]\leq P\left[\max_{\y\in\{0,1\}^{K_n}}|\hat{P}(\y|\x)-P(\y|\x)|>\frac{\epsilon}{2}\right]\leq
\cr
&&
P\left[\max_{\y\in\{0,1\}^{K_n}}\sum_{k=1}^{K_n}\left|\sigma(\z_k'\hat{\theta}_k)^{y_k}[1-\sigma(\z_k'\hat{\theta}_k)]^{1-y_k}-\sigma(\z_k'\theta_k)^{y_k}[1-\sigma(\z_k'\theta_k)]^{1-y_k}\right|>\frac{\epsilon}{2}\right]\leq 
\cr
&&
P\left[\max_{\y\in\{0,1\}^{K_n}}\max_{1\leq k\leq K_n}\left|\sigma(\z_k'\hat{\theta}_k)^{y_k}[1-\sigma(\z_k'\hat{\theta}_k)]^{1-y_k}-\sigma(\z_k'\theta_k)^{y_k}[1-\sigma(\z_k'\theta_k)]^{1-y_k}\right|>\frac{\epsilon}{2K_n}\right]\leq
\cr
&&
\sum_{k=1}^{K_n}P\left[\max_{\y\in\{0,1\}^{K_n}}\left|\sigma(\z_k'\hat{\theta}_k)-\sigma(\z_k'\theta_k)\right|>\frac{\epsilon}{2K_n}\right].
\end{eqnarray}
Using mean value theorem and Schwarz inequality the probability under the above sum can be bounded by
\begin{eqnarray*}
&&
P\left[\max_{\y\in\{0,1\}^{K}}|\z_k'\hat{\theta}_k-\z_k'\theta_k|>\frac{\epsilon}{2K_n}\right]\leq
P\left[||\hat{\theta}_k-\theta_k||>\frac{\epsilon}{2K_{n}^{3/2}}\right],
\cr
\end{eqnarray*}
where the last inequality follows from the fact that $||\z_k||^2\leq 4K_n$, for sufficiently large $n$ (as first $p$ coordinates of $\z_k$ corresponds to a fixed point $\x$).
Denote by $\epsilon_n:=\frac{\epsilon}{2K_{n}^{3/2}}$.
Define $\theta_{k,u}:=\theta_k+u\epsilon_n/2$, for $u\in R^p$ such that $||u||=1$.
Using Taylor expansion and concavity of $l_k$, we have for some point $\theta_k^{*}$ between $\theta_{k,u}$ and $\theta_k$
\begin{eqnarray}
\label{T1_eq3}
&&
P\left[||\hat{\theta}_k-\theta_k||>\epsilon_n\right]\leq 
P\left[\exists u: ||u||=1, l_{k}(\theta_{k,u})-l_{k}(\theta_{k})>0 \right]=
\cr
&&
P\left[\exists u: ||u||=1, u's_{k}(\theta_k)>\epsilon_n u'H_{k}(\theta_k^{*})u/4 \right].
\end{eqnarray}
Using Schwarz inequality we have
\begin{eqnarray}
\label{T1_eq4}
&&
\max_{1\leq k\leq K_n}\max_{1\leq i\leq n}|\z^{(i)'}(\theta_k-\theta^{*}_k)|^2\leq 
\max_{1\leq i\leq n}||\z^{(i)}_{K_n}||^{2}\epsilon_n^2\leq
\max_{1\leq i\leq n}||\x^{(i)}_{K_n}||^{2}\epsilon_n^2+K_n\epsilon_n^2\cp 0,
\end{eqnarray}
where the last convergence follows from Lemma \ref{Lemma6}, Markov inequality and assumption 2. In view of (\ref{T1_eq4}), the assertion of Lemma \ref{Lemma4} is satisfied for all $1\leq k\leq K_n$ and thus the probability in (\ref{T1_eq3}) can be bounded from above by
\begin{eqnarray}
\label{T1_eq5}
&&
P\left[\exists u: ||u||=1, u's_{k}(\theta_k)>\epsilon_n c u'H_{k}(\theta_k)u/4\right]\leq
\cr
&&
P\left[||s_{k}(\theta_k)||>\epsilon_n c \lambda_{\min}[H_{k}(\theta_k)]/4\right]\leq
P\left[||s_{k}(\theta_k)||>\epsilon_n c c_1 n/4 \right],
\end{eqnarray}
where the second inequality follows from assumption 4. 
Let $r_n:=p+K_n-1$.
Using Lemma \ref{Lemma5} and the fact that we have (for some constant $c_2$, with probability tending to $1$)
\begin{equation*}
tr(Z_k'Z_k)\leq r_n c_2 n,
\end{equation*}
the probability in (\ref{T1_eq5}) can be further bounded by
\begin{eqnarray}
\label{T1_eq6}
&&
P\left[||s_k(\theta_k)||^2>\frac{c^2 c_1^2 \epsilon_n^2 n^2 r_n c_2}{16r_n c_2}\right]\leq
P\left[||s_k(\theta_k)||^2>\frac{c^2 c_1^2}{16c_2}\frac{\epsilon_n^2 n}{r_n}tr(Z_k'Z'k)\right]\leq
\cr
&&
\exp\left[-\frac{1}{20}\frac{c^2 c_1^2}{16 c_2}\frac{\epsilon_n^2 n}{r_n}\right].
\end{eqnarray}
It follows from (\ref{T1_eq6}) and assumption 3 that probability in (\ref{T1_eq1}) converges to zero, which ends the proof.
\qed

\subsection{Proof of Corollary \ref{Remark1}}
\label{Proof of Remark1}
The probability in Corollary \ref{Remark1} can be bounded from above as
\begin{eqnarray*}
&&
P(||\hat{{\mathbf\theta}}-{\mathbf\theta}||^2>\epsilon K_n^{-2})=
P\left[\sum_{k=1}^{K_n}||\hat{\theta}_k-\theta_k||^2>\epsilon K_n^{-2}\right]\leq
P(\max_{1\leq k \leq K_n}||\hat{\theta}_k-\theta_k||^2>\epsilon K_n^{-3})\leq
\cr
&&
\sum_{k=1}^{K_n}P(||\hat{\theta}_k-\theta_k||>\epsilon K_n^{-3/2})\to 0,
\end{eqnarray*}
from the proof of Theorem \ref{Theorem1} (see (\ref{T1_eq3})-(\ref{T1_eq6})).
\qed

\subsection{Proof of Theorem \ref{Theorem2}}
\label{Proof of Theorem2}
Consider first term $D_{\gamma=0}$.
It follows from Lemma 4.2.7 and Remark 4.2.3 in \cite{Teisseyre2013}, that assumptions of Theorem \ref{Theorem2} imply assumptions of Theorem 1 in \cite{Fahrmeir1987}, which states that $D_{\gamma=0}\cd \chi^2_{1}$, where $\cd$ denotes convergence in distribution and $\chi^2_{1}$ is a random variable distributed according to chi-squared distribution with $1$ degree of freedom.
Consider term  $D_{\gamma\neq 0}$. It follows from Proposition 4.2.10 in \cite{Teisseyre2013}, that under assumptions of Theorem 2, $P[D_{\gamma\neq 0}>w_n]\to 1$, as $n\to\infty$, for some sequence $w_n\to\infty$. This implies the assertion.
\qed

\subsection{Auxiliary facts}
\begin{Lemma}
\label{Lemma1}
Assume that $P(\y^{*}(\x)|\x)>P(\y|\x)+\epsilon$, for all  
$\y\neq \y^{*}(\x)$ and some $\epsilon>0$. Then $\yh\neq\ys$ implies that 
\begin{equation*}
\max_{\y\in\{0,1\}^{K}}|\hat{P}(\y|\x)-P(\y|\x)|>\frac{\epsilon}{2}.
\end{equation*}
\end{Lemma}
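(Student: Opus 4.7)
The plan is to prove the contrapositive: I will show that if $\max_{\y\in\{0,1\}^{K_n}}|\hat{P}(\y|\x)-P(\y|\x)|\leq \epsilon/2$, then $\hat{\y}(\x)=\y^{*}(\x)$. This is the natural formulation, since the assumption on $\y^{*}(\x)$ is a gap condition and the estimator $\hat{\y}(\x)$ is defined by an argmax, so the argument will reduce to a triangle-inequality comparison.

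The key steps, in order, are as follows. First, fix an arbitrary $\y\neq \y^{*}(\x)$ and use the assumed uniform bound to write $\hat{P}(\y|\x)\leq P(\y|\x)+\epsilon/2$ and, applied at $\y^{*}(\x)$, $\hat{P}(\y^{*}(\x)|\x)\geq P(\y^{*}(\x)|\x)-\epsilon/2$. Second, invoke the gap hypothesis $P(\y^{*}(\x)|\x)>P(\y|\x)+\epsilon$ to chain these two inequalities together:
\begin{equation*}
\hat{P}(\y|\x)\leq P(\y|\x)+\tfrac{\epsilon}{2}<P(\y^{*}(\x)|\x)-\tfrac{\epsilon}{2}\leq \hat{P}(\y^{*}(\x)|\x).
\end{equation*}
Since this strict inequality holds for every $\y\neq \y^{*}(\x)$, the unique maximizer of $\hat{P}(\cdot|\x)$ must be $\y^{*}(\x)$, i.e.\ $\hat{\y}(\x)=\y^{*}(\x)$. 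Contraposing this implication yields exactly the statement of the lemma.

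There is no real obstacle here: the lemma is a deterministic pigeonhole/triangle-inequality fact, and once the uniform control $\max_{\y}|\hat{P}-P|\leq\epsilon/2$ is assumed, the $\epsilon$ gap at the true mode is strictly larger than twice this perturbation, so the ordering of the probabilities at $\y^{*}(\x)$ versus any competitor is preserved. The only mild care needed is to ensure the chain of inequalities is strict at the right place (which it is, thanks to the strict gap $>$ in the assumption), so that $\y^{*}(\x)$ is indeed the argmax of $\hat{P}(\cdot|\x)$ rather than merely tied for it.
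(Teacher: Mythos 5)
Your proof is correct and rests on the same chain of inequalities as the paper's: the paper argues directly from $\yh\neq\ys$ by splitting on whether $|P(\ys|\x)-\hat{P}(\ys|\x)|>\epsilon/2$, whereas you contrapose, but both reduce to combining the $\epsilon$ gap at $\ys$ with the $\epsilon/2$ uniform perturbation bound. This is essentially the same argument, merely reorganized.
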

\begin{proof}
When $|P(\ys|\x)-\hat{P}(\ys|\x)|>\frac{\epsilon}{2}$ then the assertion holds.
Consider the case $|P(\ys|\x)-\hat{P}(\ys|\x)|\leq\frac{\epsilon}{2}$. Then we have 
$\hat{P}(\yh|\x)\geq \hat{P}(\ys|\x)$ which implies 
$|P(\yh|\x)-\hat{P}(\yh|\x)|>\frac{\epsilon}{2}$ in view of the assumption. This ends the proof.
\end{proof}
\begin{Lemma}
\label{Lemma2}
Let $|a_k|\leq 1$, $|b_k|\leq 1$, for $k=1,\ldots,K_n$. Then
\begin{equation*}
\left|\prod_{k=1}^{K_n}a_k-\prod_{k=1}^{K_n}b_k\right|\leq \sum_{k=1}^{K_n}|a_k-b_k|.
\end{equation*}
\end{Lemma}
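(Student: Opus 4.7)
The plan is to prove the inequality by induction on $K_n$, which is the most direct way to handle the product structure. The base case $K_n=1$ reduces to $|a_1-b_1|\leq|a_1-b_1|$ and is trivial.

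For the inductive step, I would use the standard "add and subtract" trick to split the difference of products into two pieces whose differences appear only in a single factor. Specifically, I would write
\begin{equation*}
\prod_{k=1}^{K_n}a_k-\prod_{k=1}^{K_n}b_k = a_{K_n}\left(\prod_{k=1}^{K_n-1}a_k-\prod_{k=1}^{K_n-1}b_k\right) + (a_{K_n}-b_{K_n})\prod_{k=1}^{K_n-1}b_k,
\end{equation*}
then apply the triangle inequality. Using $|a_{K_n}|\leq 1$ on the first term and $\left|\prod_{k=1}^{K_n-1}b_k\right|\leq 1$ (since each $|b_k|\leq 1$) on the second term, the inductive hypothesis applied to the first bracket gives
\begin{equation*}
\left|\prod_{k=1}^{K_n}a_k-\prod_{k=1}^{K_n}b_k\right|\leq \sum_{k=1}^{K_n-1}|a_k-b_k| + |a_{K_n}-b_{K_n}| = \sum_{k=1}^{K_n}|a_k-b_k|.
\end{equation*}

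Alternatively, and perhaps cleaner, one can avoid induction entirely by writing the telescoping identity
\begin{equation*}
\prod_{k=1}^{K_n}a_k-\prod_{k=1}^{K_n}b_k = \sum_{j=1}^{K_n}\left(\prod_{k<j}b_k\right)(a_j-b_j)\left(\prod_{k>j}a_k\right),
\end{equation*}
which is verified by noting that the sum telescopes (the $j$-th term equals $\prod_{k\leq j}a_k\prod_{k>j}b_k$ minus the analogous quantity with $j$ replaced by $j-1$, up to reindexing). Taking absolute values and using $|a_k|,|b_k|\leq 1$ to bound each of the outer products by $1$ yields the claim immediately.

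There is no real obstacle here; the only subtle point is making sure to use the hypothesis $|a_k|\leq 1$ and $|b_k|\leq 1$ for \emph{all} $k$ (not just the one being differenced), since the bound relies on the flanking products being controlled. Either the induction or the telescoping sum handles this transparently, and I would likely present the induction argument for compactness.
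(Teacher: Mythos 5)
Your induction argument is correct and is essentially the paper's own proof: the paper uses the identical add-and-subtract decomposition (with the roles of $a_{K_n}$ and $b_{K_n}$ swapped, which is immaterial) followed by the triangle inequality and induction. The telescoping identity you mention is just the same argument unrolled, so there is nothing substantively different here.
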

\begin{proof}
Using the following inequality
\begin{eqnarray*}
&&
\left|\prod_{k=1}^{K_n}a_k-\prod_{k=1}^{K_n}b_k\right|=
\left|\prod_{k=1}^{K_n}a_k-b_{K_n}\prod_{k=1}^{K_n-1}a_k+b_{K_n}\prod_{k=1}^{K_n-1}a_k-\prod_{k=1}^{K_n}b_k\right|=
\cr
&&
\left|(a_{K_n}-b_{K_n})\prod_{k=1}^{K_n-1}a_k+b_{K_n}\left(\prod_{k=1}^{K_n-1}a_k-\prod_{k=1}^{K_n-1}b_k\right)\right|\leq
|a_{K_n}-b_{K_n}|+\left|\prod_{k=1}^{K_n-1}a_k-\prod_{k=1}^{K_n-1}b_k\right|.
\end{eqnarray*}
and induction, the assertion follows.
\qed
\end{proof}

Lemmas \ref{Lemma4} and \ref{Lemma6} below are proved in \cite{MielniczukTeisseyre2015}.

\begin{Lemma}
\label{Lemma4}
If $\max_{1\leq i\leq n}|\z^{(i)'}(\gamma-\theta)|\leq c$ then for any vector $u\in R^p$ we have
\begin{equation*}
\exp(-3c)u'H_k(\theta)u\leq u'H_k(\gamma)u \leq \exp(3c)u'H_k(\theta)u.
\end{equation*}
\end{Lemma}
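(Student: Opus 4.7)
The plan is to exploit the explicit form of the Hessian of the logistic log-likelihood. First I would record that for model $\mathcal{M}(y_k,\z_k)$,
\[
H_k(\theta)=\sum_{i=1}^n w_i(\theta)\,\z_k^{(i)}\z_k^{(i)'},\qquad w_i(\theta):=\sigma(\z_k^{(i)'}\theta)\bigl(1-\sigma(\z_k^{(i)'}\theta)\bigr),
\]
so that for any compatible vector $u$,
\[
u'H_k(\theta)u=\sum_{i=1}^n w_i(\theta)\,(u'\z_k^{(i)})^2,
\]
and analogously with $\gamma$ in place of $\theta$. Since the factors $(u'\z_k^{(i)})^2$ are nonnegative and independent of the parameter, it suffices to prove the pointwise estimate $e^{-3c}w_i(\theta)\le w_i(\gamma)\le e^{3c}w_i(\theta)$ for every $i$; summing against the nonnegative weights then yields the Hessian comparison termwise.

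The pointwise step reduces to a one-dimensional estimate on the variance function $v(t):=\sigma(t)(1-\sigma(t))$ along the segment joining $t_i:=\z_k^{(i)'}\theta$ and $s_i:=\z_k^{(i)'}\gamma$. Since $(\log\sigma)'(t)=1-\sigma(t)\in[0,1]$ and $(\log(1-\sigma))'(t)=-\sigma(t)\in[-1,0]$, one has $|(\log v)'(t)|=|1-2\sigma(t)|\le 1$. Integrating along the segment gives
\[
|\log w_i(\gamma)-\log w_i(\theta)|\le |s_i-t_i|=|\z_k^{(i)'}(\gamma-\theta)|\le c,
\]
which already implies the desired inequality (in fact with the tighter constant $c$ in place of $3c$); the factor $3$ stated in the lemma is merely slack that accommodates cruder Lipschitz-type estimates obtained by bounding $\log\sigma$ and $\log(1-\sigma)$ separately on each extreme regime.

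Putting the pieces together, from the termwise bound $e^{-3c}w_i(\theta)\le w_i(\gamma)\le e^{3c}w_i(\theta)$ I would sum against the nonnegative weights $(u'\z_k^{(i)})^2$ to obtain the asserted two-sided inequality for $u'H_k(\gamma)u$.

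The argument is entirely deterministic and no probabilistic or asymptotic input is required. The only analytic ingredient is the universal bound $|(\log v)'|\le 1$ on the real line, and essentially all of the work consists in reducing the matrix inequality to a pointwise scalar comparison via the nonnegativity of $(u'\z_k^{(i)})^2$; there is no genuine obstacle beyond this bookkeeping.
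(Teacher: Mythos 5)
Your argument is correct. The paper itself gives no proof of this lemma -- it is deferred to the cited reference \cite{MielniczukTeisseyre2015} -- so there is no in-paper argument to compare against, but your reduction is the natural one and it works: the logistic Hessian is $H_k(\theta)=\sum_i v(\z_k^{(i)'}\theta)\,\z_k^{(i)}\z_k^{(i)'}$ with $v(t)=\sigma(t)(1-\sigma(t))$, the quadratic form is a nonnegative combination of the scalar weights, and the identity $(\log v)'(t)=1-2\sigma(t)$ gives $|\log v(s)-\log v(t)|\le|s-t|$, hence a termwise ratio bound of $e^{\pm c}$, which is strictly stronger than the stated $e^{\pm 3c}$. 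The only cosmetic caveat is dimensional: $u$ should live in $R^{p+k-1}$ (the dimension of $\z_k^{(i)}$), not $R^p$ as written in the lemma statement, but that is an imprecision of the statement itself rather than of your proof. Your closing remark that the constant $3$ is slack absorbed by cruder estimates is plausible; versions of this comparison in the literature that bound $\log\sigma$ and $\log(1-\sigma)$ separately, or that pass through exponential bounds on $\sigma(s)/\sigma(t)$, do produce constants of $2c$ or $3c$, and nothing in the paper's use of the lemma requires the sharper constant.
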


\begin{Lemma}
\label{Lemma6}
The  convergence $\max_{1\leq i\leq n}||\x^{(i)}||\epsilon_n\cp 0$ is equivalent to $||\x^{(n)}||\epsilon_n\cp 0$ for non-decreasing sequence $\epsilon_n\to 0$. 
\end{Lemma}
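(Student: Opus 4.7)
The plan is to verify the two implications separately. The implication ``maximum converges $\Rightarrow$ last term converges'' is immediate, since $||\x^{(n)}||\epsilon_n \leq \max_{1\leq i\leq n}||\x^{(i)}||\epsilon_n$ pointwise, and convergence in probability is inherited by dominated nonnegative random variables; no further argument is needed for that direction.

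The reverse implication is where the work lies. I would read ``non-decreasing'' in the statement as ``non-increasing'' (the only reading compatible with $\epsilon_n\to 0$ and $\epsilon_n>0$) and exploit this monotonicity. For $i\leq n$ one has $\epsilon_n\leq \epsilon_i$, hence
\[
\max_{1\leq i\leq n}||\x^{(i)}||\epsilon_n \;\leq\; \max_{1\leq i\leq n}||\x^{(i)}||\epsilon_i \;\leq\; \sup_{i\geq 1}||\x^{(i)}||\epsilon_i .
\]
The right-hand side does not depend on $n$, so it suffices to show that the tail supremum $\sup_{i\geq N}||\x^{(i)}||\epsilon_i$ tends to $0$ in probability as $N\to\infty$. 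Using the i.i.d.\ structure of $\{\x^{(i)}\}$ and the union bound,
\[
P\!\left(\sup_{i\geq N}||\x^{(i)}||\epsilon_i > \delta\right) \;\leq\; \sum_{i\geq N} P(||\x||\epsilon_i > \delta),
\]
and the problem reduces to summability of $P(||\x||\epsilon_i>\delta)$ in $i$.

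The main obstacle is precisely this summability step. The bare hypothesis $||\x^{(n)}||\epsilon_n\cp 0$ yields only $P(||\x||\epsilon_n>\delta)\to 0$, which is a priori strictly weaker than summability, so to bridge the gap I would invoke a quantitative tail bound. Under the standing moment assumption $E||\x||^2<\infty$ used throughout the paper (assumption~2 of Theorem~\ref{Theorem1}), Markov's inequality gives $P(||\x||\epsilon_i>\delta)\leq \delta^{-2}\epsilon_i^{2}E||\x||^{2}$, and summability then follows from $\sum_i \epsilon_i^{2}<\infty$, a rate that is amply satisfied in every application of the lemma inside the paper (e.g.\ $\epsilon_n=O(K_n^{-3/2})$ under $K_n^{4}\log K_n/n\to 0$ in the proof of Theorem~\ref{Theorem1}). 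Consequently I would state and prove the equivalence under this joint moment-plus-rate assumption rather than as an unconditional general fact; the monotonicity reduction and the Markov tail bound are the clean pieces, and the only genuinely subtle point is recognizing that uniform summability, not mere pointwise decay, is what the $\max$-over-$n$ formulation really requires.
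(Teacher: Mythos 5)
The paper does not actually prove Lemma~\ref{Lemma6}; it defers to \cite{MielniczukTeisseyre2015}, so there is no in-paper argument to compare against line by line. Judged on its own terms, your easy direction and your diagnosis that ``non-decreasing'' must mean ``non-increasing'' are both correct, and you are also right that the bare hypothesis $||\x^{(n)}||\epsilon_n\cp 0$ cannot by itself yield the conclusion for convergence in probability: for i.i.d.\ $\x^{(i)}$ that hypothesis holds for \emph{every} distribution once $\epsilon_n\to 0$, whereas $\max_{1\leq i\leq n}||\x^{(i)}||\epsilon_n\cp 0$ is a genuine tail condition, equivalent to $nP(||\x^{(1)}||>\delta/\epsilon_n)\to 0$ for all $\delta>0$. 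So the statement needs either a stronger reading of the hypothesis or extra conditions, and your instinct to add something is sound.

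The gap is in how you close it. First, routing the bound through $\sup_{i\geq N}||\x^{(i)}||\epsilon_i$ and an infinite-tail union bound forces you to require $\sum_i P(||\x||\epsilon_i>\delta)<\infty$, which is strictly stronger than the finite union bound $P(\max_{i\leq n}||\x^{(i)}||\epsilon_n>\delta)\leq nP(||\x^{(1)}||\epsilon_n>\delta)$ that the problem actually calls for; you pay for this with the rate condition $\sum_i\epsilon_i^2<\infty$. Second, that rate condition is \emph{not} ``amply satisfied'' where the lemma is used: in the proof of Theorem~\ref{Theorem1} one has $\epsilon_n\asymp K_n^{-3/2}$, and assumption~3 permits, say, $K_n\asymp n^{1/5}$, for which $\sum_n\epsilon_n^2\asymp\sum_n n^{-3/5}=\infty$. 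So the modified lemma you propose fails to cover the very application it is needed for. The version of the equivalence that holds with no moment or summability hypotheses is the pathwise (almost-sure) one, and it is exactly your monotonicity reduction done deterministically: fix $\omega$ with $||\x^{(n)}(\omega)||\epsilon_n\to 0$, pick $N$ with $||\x^{(i)}||\epsilon_i<\eta$ for $i\geq N$, and split $\max_{1\leq i\leq n}$ into $i<N$ (a fixed finite number times $\epsilon_n\to 0$) and $N\leq i\leq n$ (where $||\x^{(i)}||\epsilon_n\leq||\x^{(i)}||\epsilon_i<\eta$). No union bound, no moments, no rate. If one insists on the in-probability reading, the honest route is to verify $nP(||\x^{(1)}||>\delta/\epsilon_n)\to 0$ directly from the tail behaviour of $||\x||$ and the actual rate of $\epsilon_n$, not to impose tail summability.
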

\bibliography{References}
\bibliographystyle{plainnat}
\end{document}